\newtheorem{lemma}{Lemma}
\newtheorem{proposition}{Proposition}
\newtheorem{definition}{Definition}
\newtheorem{remark}{Remark}
\def\b{\ensuremath\boldsymbol}
\icmltitlerunning{Spectral, Probabilistic, and Deep Metric Learning: Tutorial and Survey}
\begin{document}

\AddToShipoutPictureBG*{%
  \AtPageUpperLeft{%
    \setlength\unitlength{1in}%
    \hspace*{\dimexpr0.5\paperwidth\relax}
    \makebox(0,-0.75)[c]{\normalsize {\color{black} To appear as a part of an upcoming textbook on dimensionality reduction and manifold learning.}}
    }}

\twocolumn[
\icmltitle{Spectral, Probabilistic, and Deep Metric Learning: Tutorial and Survey}

\icmlauthor{Benyamin Ghojogh}{bghojogh@uwaterloo.ca}
\icmladdress{Department of Electrical and Computer Engineering, 
\\Machine Learning Laboratory, University of Waterloo, Waterloo, ON, Canada}
\icmlauthor{Ali Ghodsi}{ali.ghodsi@uwaterloo.ca}
\icmladdress{Department of Statistics and Actuarial Science \& David R. Cheriton School of Computer Science, 
\\Data Analytics Laboratory, University of Waterloo, Waterloo, ON, Canada}
\icmlauthor{Fakhri Karray}{karray@uwaterloo.ca}
\icmladdress{Department of Electrical and Computer Engineering, 
\\Centre for Pattern Analysis and Machine Intelligence, University of Waterloo, Waterloo, ON, Canada}
\icmlauthor{Mark Crowley}{mcrowley@uwaterloo.ca}
\icmladdress{Department of Electrical and Computer Engineering, 
\\Machine Learning Laboratory, University of Waterloo, Waterloo, ON, Canada}

\icmlkeywords{Tutorial}

\vskip 0.3in
]

\begin{abstract}
This is a tutorial and survey paper on metric learning. Algorithms are divided into spectral, probabilistic, and deep metric learning. We first start with the definition of distance metric, Mahalanobis distance, and generalized Mahalanobis distance. In spectral methods, we start with methods using scatters of data, including the first spectral metric learning, relevant methods to Fisher discriminant analysis, Relevant Component Analysis (RCA), Discriminant Component Analysis (DCA), and the Fisher-HSIC method. Then, large-margin metric learning, imbalanced metric learning, locally linear metric adaptation, and adversarial metric learning are covered. We also explain several kernel spectral methods for metric learning in the feature space. We also introduce geometric metric learning methods on the Riemannian manifolds. In probabilistic methods, we start with collapsing classes in both input and feature spaces and then explain the neighborhood component analysis methods, Bayesian metric learning, information theoretic methods, and empirical risk minimization in metric learning. In deep learning methods, we first introduce reconstruction autoencoders and supervised loss functions for metric learning. Then, Siamese networks and its various loss functions, triplet mining, and triplet sampling are explained. Deep discriminant analysis methods, based on Fisher discriminant analysis, are also reviewed. Finally, we introduce multi-modal deep metric learning, geometric metric learning by neural networks, and few-shot metric learning. 
\end{abstract}






\section{Introduction}

Dimensionality reduction and manifold learning are used for feature extraction from raw data. 
A family of dimensionality reduction methods is metric learning which learns a distance metric or an embedding space for separation of dissimilar points and closeness of similar points. In supervised metric learning, we aim to discriminate classes by learning an appropriate metric. 
Dimensionality reduction methods can be divided into spectral, probabilistic, and deep methods \cite{ghojogh2021data}. 
Spectral methods have a geometrical approach and usually are reduced to generalized eigenvalue problems \cite{ghojogh2019eigenvalue}. Probabilistic methods are based on probability distributions. Deep methods use neural network for learning.  
In each of these categories, there exist several metric learning methods. In this paper, we review and introduce the most important metric learning algorithms in these categories. 
Note that there exist some other surveys on metric learning such as \cite{yang2006distance,yang2007overview,kulis2013metric,bellet2013survey,wang2015survey,suarez2021tutorial}.
A survey specific to deep metric learning is \cite{kaya2019deep}.
A book on metric learning is \cite{bellet2015metric}.
Finally, some Python toolboxes for metric learning are \cite{suarez2020pydml,de2020metric,musgrave2020pytorch}. 
The remainder of this paper is organized as follows. Section \ref{section_generalized_Mahalanobis_distance} defines distance metric and the generalized Mahalanobis distance. Sections \ref{section_spectral_metric_learning}, \ref{section_probabilistic_metric_learning}, and \ref{section_deep_metric_learning} introduce and discuss spectral, probabilistic, and deep metric learning methods, respectively. Finally, section \ref{section_conclusion} concludes the paper. The table of contents can be found at the end of paper. 

\section*{Required Background for the Reader}

This paper assumes that the reader has general knowledge of calculus, probability, linear algebra, and basics of optimization.

\section{Generalized Mahalanobis Distance Metric}\label{section_generalized_Mahalanobis_distance}

\subsection{Distance Metric}

\begin{definition}[Distance metric]\label{definition_distance_metric}
Consider a metric space $\mathcal{X}$. A distance metric is a mapping $d: \mathcal{X} \times \mathcal{X} \rightarrow [0, \infty)$ which satisfies the following properties:
\begin{itemize}\setlength\itemsep{0em}
\item non-negativity: $d(\b{x}_i, \b{x}_j) \geq 0$
\item identity: $d(\b{x}_i, \b{x}_j) = 0 \iff \b{x}_i = \b{x}_j$
\item symmetry: $d(\b{x}_i, \b{x}_j) = d(\b{x}_j, \b{x}_i)$
\item triangle inequality: $d(\b{x}_i, \b{x}_j) \leq d(\b{x}_i, \b{x}_k) + d(\b{x}_k, \b{x}_j)$
\end{itemize}
where $\b{x}_i, \b{x}_j, \b{x}_k \in \mathcal{X}$. 
\end{definition}
An example of distance metric is the Euclidean distance:
\begin{align}\label{equation_Euclidean_distance}
\|\b{x}_i - \b{x}_j\|_2 := \sqrt{(\b{x}_i - \b{x}_j)^\top (\b{x}_i - \b{x}_j)}.
\end{align}

\subsection{Mahalanobis Distance}

The Mahalanobis distance is another distance metric which was originally proposed in \cite{mahalanobis1930tests}.
\begin{definition}[Mahalanobis distance \cite{mahalanobis1930tests}]
Consider a $d$-dimensional metric space $\mathcal{X}$. 
Let two clouds or sets of points $\mathcal{X}_1$ and $\mathcal{X}_2$ be in the data, i.e., $\mathcal{X}_1, \mathcal{X}_2 \in \mathcal{X}$. 
A point is considered in each set, i.e., $\b{x}_i \in \mathcal{X}_1$ and $\b{x}_j \in \mathcal{X}_2$. The Mahalanobis distance between the two points is:
\begin{align}\label{equation_Mahalanobis_distance}
\|\b{x}_i - \b{x}_j\|_{\b{\Sigma}} := \sqrt{(\b{x}_i - \b{x}_j)^\top \b{\Sigma}^{-1} (\b{x}_i - \b{x}_j)},
\end{align}
where $\b{\Sigma} \in \mathbb{R}^{d \times d}$ is the covariance matrix of data in the two sets $\mathcal{X}_1$ and $\mathcal{X}_2$.

If the points $\b{x}_i$ and $\b{x}_j$ are the means of the sets $\mathcal{X}_1$ and $\mathcal{X}_2$, respectively, as the representatives of the sets, this Mahalanobis distance is a good measure of distance of the sets \cite{mclachlan1999mahalanobis}:
\begin{align}
\|\b{\mu}_1 - \b{\mu}_2\|_{\b{\Sigma}} := \sqrt{(\b{\mu}_1 - \b{\mu}_2)^\top \b{\Sigma}^{-1} (\b{\mu}_1 - \b{\mu}_2)},
\end{align}
where $\b{\mu}_1$ and $\b{\mu}_2$ are the means of the sets $\mathcal{X}_1$ and $\mathcal{X}_2$, respectively. 

Let $\mathcal{X}_1 := \{\b{x}_{1,i}\}_{i=1}^{n_1}$ and $\mathcal{X}_2 := \{\b{x}_{2,i}\}_{i=1}^{n_2}$.
The unbiased sample covariance matrices of these two sets are:
\begin{align*}
\b{\Sigma}_1 := \frac{1}{n_1-1} \sum_{i=1}^{n_1} (\b{x}_{1,i} - \b{\mu}_1) (\b{x}_{1,i} - \b{\mu}_1)^\top,
\end{align*}
and $\b{\Sigma}_2$ similarly. 
The covariance matrix $\b{\Sigma}$ can be an unbiased sample covariance matrix \cite{mclachlan1999mahalanobis}:
\begin{align*}
\b{\Sigma} := \frac{1}{n_1 + n_2 - 2} \Big( (n_1 - 1) \b{\Sigma}_1 + (n_2 - 1) \b{\Sigma}_2 \Big).
\end{align*}

The Mahalanobis distance can also be defined between a point $\b{x}$ and a cloud or set of points $\mathcal{X}$ \cite{de2000mahalanobis}. Let $\b{\mu}$ and $\b{\Sigma}$ be the mean and the (sample) covariance matrix of the set $\mathcal{X}$. The Mahalanobis distance of $\b{x}$ and $\mathcal{X}$ is:
\begin{align}
\|\b{x} - \b{\mu}\|_{\b{\Sigma}} := \sqrt{(\b{x} - \b{\mu})^\top \b{\Sigma}^{-1} (\b{x} - \b{\mu})}.
\end{align}
\end{definition}

\begin{figure}[!t]
\centering
\includegraphics[width=3in]{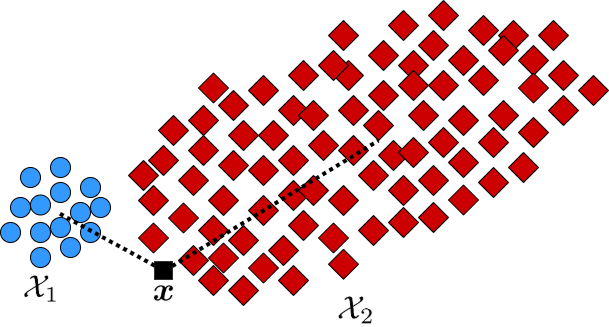}
\caption{An example for comparison of the Euclidean and Mahalanobis distances.}
\label{figure_Mahalanobis}
\end{figure}

\begin{remark}[Justification of the Mahalanobis distance \cite{de2000mahalanobis}]
Consider two clouds of data, $\mathcal{X}_1$ and $\mathcal{X}_2$, depicted in Fig. \ref{figure_Mahalanobis}. 
We want to compute the distance of a point $\b{x}$ from these two data clouds to see which cloud this point is closer to. The Euclidean distance ignores the scatter/variance of clouds and only measures the distances of the point from the means of clouds. Hence, in this example, it says that $\b{x}$ belongs to $\mathcal{X}_1$ because it is closer to the mean of $\mathcal{X}_1$ compared to $\mathcal{X}_2$. However, the Mahalanobis distance takes the variance of clouds into account and says that $\b{x}$ belongs to $\mathcal{X}_2$ because it is closer to its scatter compared to $\mathcal{X}_1$. Visually, human also says $\b{x}$ belongs to $\mathcal{X}_2$; hence, the Mahalanobis distance has performed better than the Euclidean distance by considering the variances of data.
\end{remark}

\subsection{Generalized Mahalanobis Distance}

\begin{definition}[Generalized Mahalanobis distance]
In Mahalanobis distance, i.e. Eq. (\ref{equation_Mahalanobis_distance}), the covariance matrix $\b{\Sigma}$ and its inverse $\b{\Sigma}^{-1}$ are positive semi-definite. We can replace $\b{\Sigma}^{-1}$ with a positive semi-definite weight matrix $\b{W} \succeq \b{0}$ in the squared Mahalanobis distance. We name this distance a generalized Mahalanobis distance:
\begin{equation}\label{equation_generalized_Mahalanobis_distance}
\begin{aligned}
& \|\b{x}_i - \b{x}_j\|_{\b{W}} := \sqrt{(\b{x}_i - \b{x}_j)^\top \b{W} (\b{x}_i - \b{x}_j)}. \\
&\therefore\,\,\,\, \|\b{x}_i - \b{x}_j\|_{\b{W}}^2 := (\b{x}_i - \b{x}_j)^\top \b{W} (\b{x}_i - \b{x}_j).
\end{aligned}
\end{equation}
We define the generalized Mahalanobis norm as:
\begin{align}
\|\b{x}\|_{\b{W}} := \sqrt{\b{x}^\top \b{W} \b{x}}.
\end{align}
\end{definition}

\begin{lemma}[Triangle inequality of norm]
Let $\|.\|$ be a norm. Using the Cauchy-Schwarz inequality, it satisfies the triangle inequality:
\begin{align}\label{equation_xi_xj_triangle_inequality}
\|\b{x}_i + \b{x}_j\| \leq \|\b{x}_i\| + \|\b{x}_j\|.
\end{align}
\end{lemma}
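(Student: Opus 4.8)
The plan is to prove the inequality by squaring both sides and reducing it to the Cauchy--Schwarz inequality, exactly as the statement suggests. Since a norm is non-negative by definition, both $\|\b{x}_i + \b{x}_j\|$ and $\|\b{x}_i\| + \|\b{x}_j\|$ are non-negative, so the desired inequality holds if and only if the squared inequality $\|\b{x}_i + \b{x}_j\|^2 \leq (\|\b{x}_i\| + \|\b{x}_j\|)^2$ holds. Working with squared norms is convenient because, for a norm induced by an inner product $\langle \cdot, \cdot \rangle$, we have $\|\b{x}\|^2 = \langle \b{x}, \b{x} \rangle$, which lets us exploit bilinearity to expand the left-hand side.

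First I would expand the squared norm of the sum using the bilinearity and symmetry of the inner product:
\begin{align*}
\|\b{x}_i + \b{x}_j\|^2 = \langle \b{x}_i + \b{x}_j,\, \b{x}_i + \b{x}_j \rangle = \|\b{x}_i\|^2 + 2\, \langle \b{x}_i, \b{x}_j \rangle + \|\b{x}_j\|^2.
\end{align*}
Next I would apply the Cauchy--Schwarz inequality, $\langle \b{x}_i, \b{x}_j \rangle \leq |\langle \b{x}_i, \b{x}_j \rangle| \leq \|\b{x}_i\|\, \|\b{x}_j\|$, to bound the cross term from above. Substituting this bound and recognizing the right-hand side as a perfect square gives
\begin{align*}
\|\b{x}_i + \b{x}_j\|^2 \leq \|\b{x}_i\|^2 + 2\, \|\b{x}_i\|\, \|\b{x}_j\| + \|\b{x}_j\|^2 = \big( \|\b{x}_i\| + \|\b{x}_j\| \big)^2,
\end{align*}
after which taking non-negative square roots of both sides yields the claim in Eq. (\ref{equation_xi_xj_triangle_inequality}).

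I do not expect a genuine obstacle here, since every step is routine once the inner-product structure is assumed; the argument is essentially one algebraic identity combined with a single application of Cauchy--Schwarz. The only point deserving care is the implicit assumption that the norm is induced by an inner product, which is precisely what makes both the expansion and the Cauchy--Schwarz step legitimate. In the paper's setting the relevant case is the generalized Mahalanobis norm with inner product $\langle \b{x}, \b{y} \rangle_{\b{W}} = \b{x}^\top \b{W} \b{y}$, and I would remark that $\b{W} \succeq \b{0}$ already suffices for Cauchy--Schwarz and hence for the triangle inequality, even though strict positive-definiteness is what would additionally be needed for the identity axiom of Definition \ref{definition_distance_metric}.
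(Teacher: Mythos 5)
Your proof is correct and follows essentially the same route as the paper's: expand $\|\b{x}_i + \b{x}_j\|^2$ via the inner product, bound the cross term with the Cauchy--Schwarz inequality, recognize the perfect square, and take non-negative square roots. Your closing remark that the argument implicitly assumes the norm is induced by an inner product (with $\b{W} \succeq \b{0}$ sufficing for Cauchy--Schwarz in the Mahalanobis case) is a caveat the paper glosses over, but the substance of the two arguments is identical.
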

\begin{proof}
\begin{align*}
\|\b{x}_i + \b{x}_j\|^2 &= (\b{x}_i + \b{x}_j)^\top (\b{x}_i + \b{x}_j) \\
&= \|\b{x}_i\|^2 + \|\b{x}_j\|^2 + 2\b{x}_i^\top \b{x}_j \\
&\overset{(a)}{\leq} \|\b{x}_i\|^2 + \|\b{x}_j\|^2 + 2\|\b{x}_i\| \|\b{x}_j\| \\
&= (\|\b{x}_i\| + \|\b{x}_j\|)^2,
\end{align*}
where $(a)$ is because of the Cauchy-Schwarz inequality, i.e., $\b{x}_i^\top \b{x}_j \leq \|\b{x}_i\| \|\b{x}_j\|$.
Taking second root from the sides gives Eq. (\ref{equation_xi_xj_triangle_inequality}). Q.E.D.
\end{proof}

\begin{proposition}
The generalized Mahalanobis distance is a valid distance metric. 
\end{proposition}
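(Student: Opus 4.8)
The plan is to verify the four axioms of Definition \ref{definition_distance_metric} for $d(\b{x}_i, \b{x}_j) := \|\b{x}_i - \b{x}_j\|_{\b{W}}$. The unifying observation is that, since $\b{W} \succeq \b{0}$, it admits a factorization $\b{W} = \b{L}^\top \b{L}$ (for instance via its eigendecomposition $\b{W} = \b{U}\b{\Lambda}\b{U}^\top$ with $\b{L} := \b{\Lambda}^{1/2}\b{U}^\top$). This lets me rewrite the squared distance as
\begin{align*}
\|\b{x}_i - \b{x}_j\|_{\b{W}}^2 = (\b{x}_i - \b{x}_j)^\top \b{L}^\top \b{L} (\b{x}_i - \b{x}_j) = \|\b{L}\b{x}_i - \b{L}\b{x}_j\|_2^2,
\end{align*}
so the generalized Mahalanobis distance equals the ordinary Euclidean distance between the linearly transformed points $\b{L}\b{x}_i$ and $\b{L}\b{x}_j$. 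Most properties then follow by inheriting the corresponding properties of the Euclidean metric.

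Three of the four axioms are immediate. Non-negativity holds because $\b{W} \succeq \b{0}$ forces the quadratic form $(\b{x}_i - \b{x}_j)^\top \b{W}(\b{x}_i - \b{x}_j) \geq 0$, so the square root is real and non-negative. Symmetry follows since replacing $\b{x}_i - \b{x}_j$ by its negation $\b{x}_j - \b{x}_i$ leaves the quadratic form unchanged. For the triangle inequality I would write $\b{x}_i - \b{x}_j = (\b{x}_i - \b{x}_k) + (\b{x}_k - \b{x}_j)$ and apply the already-proven triangle inequality of the norm, Eq. (\ref{equation_xi_xj_triangle_inequality}); this applies here because the argument behind that lemma only needs the Cauchy-Schwarz inequality, which holds for the semi-inner product $\langle \b{x}, \b{y}\rangle_{\b{W}} := \b{x}^\top \b{W} \b{y}$ induced by any $\b{W} \succeq \b{0}$ (equivalently, it is the ordinary Cauchy-Schwarz inequality applied to $\b{L}\b{x}$ and $\b{L}\b{y}$).

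The main obstacle is the identity axiom $d(\b{x}_i, \b{x}_j) = 0 \iff \b{x}_i = \b{x}_j$. The forward implication is trivial: $\b{x}_i = \b{x}_j$ gives a zero quadratic form. The reverse implication is where positive semi-definiteness alone is not enough: if $\b{W}$ has a nontrivial null space, then any $\b{x}_i \neq \b{x}_j$ with $\b{x}_i - \b{x}_j$ lying in $\ker(\b{W})$ yields $d(\b{x}_i, \b{x}_j) = 0$, so the identity property fails and we obtain only a pseudometric. To recover a genuine metric I would require $\b{W}$ to be positive definite ($\b{W} \succ \b{0}$), equivalently that $\b{L}$ be injective; then $\|\b{L}\b{x}_i - \b{L}\b{x}_j\|_2 = 0$ forces $\b{L}\b{x}_i = \b{L}\b{x}_j$ and hence $\b{x}_i = \b{x}_j$. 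I expect the author's proof either to make this positive-definiteness assumption explicit or to verify the identity axiom directly under it, which together with the other three axioms establishes that the generalized Mahalanobis distance is a valid distance metric.
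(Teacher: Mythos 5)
Your proposal follows essentially the same route as the paper: verify the four axioms of Definition \ref{definition_distance_metric}, using the factorization of $\b{W} \succeq \b{0}$ (the paper's Eq. (\ref{equation_W_U_UT})) so that non-negativity and symmetry are immediate and the triangle inequality reduces to the norm inequality of Eq. (\ref{equation_xi_xj_triangle_inequality}) via Cauchy--Schwarz. The one place you diverge is the identity axiom, and there you are more careful than the paper: the paper's proof simply asserts that $\|\b{x}_i - \b{x}_j\|_{\b{W}} = 0$ implies $\b{x}_i - \b{x}_j = \b{0}$ ``according to Eq. (\ref{equation_generalized_Mahalanobis_distance})'', which is false whenever $\b{W}$ has a nontrivial null space --- exactly the failure mode you identify. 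Your observation that positive semi-definiteness alone yields only a pseudometric, and that $\b{W} \succ \b{0}$ (equivalently, injectivity of $\b{L}$) is needed to recover the identity axiom, is a genuine correction to the paper's argument rather than a deviation from it; the remaining three axioms are handled identically in both proofs.
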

\begin{proof}
We show that the characteristics in Definition \ref{definition_distance_metric} are satisfied:
\begin{itemize}
\item As $\b{W} \succeq \b{0}$, Eq. (\ref{equation_generalized_Mahalanobis_distance}) is non-negative. 
\item identity: if $\|\b{x}_i - \b{x}_j\|_{\b{W}} = 0$, according to Eq. (\ref{equation_generalized_Mahalanobis_distance}), we have $\b{x}_i - \b{x}_j = 0 \implies \b{x}_i = \b{x}_j$. If $\b{x}_i = \b{x}_j$, we have $\|\b{x}_i - \b{x}_j\|_{\b{W}} = 0$ according to Eq. (\ref{equation_generalized_Mahalanobis_distance}).
\item symmetry: \\$\|\b{x}_i - \b{x}_j\|_{\b{W}} = \sqrt{(\b{x}_i - \b{x}_j)^\top \b{W} (\b{x}_i - \b{x}_j)} = \sqrt{(\b{x}_j - \b{x}_i)^\top \b{W} (\b{x}_j - \b{x}_i)} = \|\b{x}_j - \b{x}_i\|_{\b{W}}$.
\item triangle inequality: $\|\b{x}_i - \b{x}_j\|_{\b{W}} = \|\b{x}_i - \b{x}_k + \b{x}_k - \b{x}_j\|_{\b{W}} \overset{(\ref{equation_xi_xj_triangle_inequality})}{\leq} \|\b{x}_i - \b{x}_k\|_{\b{W}} + \|\b{x}_k - \b{x}_j\|_{\b{W}}$.
\end{itemize}
\end{proof}

\begin{remark}
It is noteworthy that $\b{W} \succeq \b{0}$ is required so that the generalized Mahalanobis distance is convex and satisfies the triangle inequality.
\end{remark}


\begin{remark}
The weight matrix $\b{W}$ in Eq. (\ref{equation_generalized_Mahalanobis_distance}) weights the dimensions and determines some correlation between dimensions of data points. In other words, it changes the space in a way that the scatters of clouds are considered. 
\end{remark}

\begin{remark}
The Euclidean distance is a special case of the Mahalanobis distance where the weight matrix is the identity matrix, i.e., $\b{W} = \b{I}$ (cf. Eqs. (\ref{equation_Euclidean_distance}) and (\ref{equation_generalized_Mahalanobis_distance})). In other words, the Euclidean distance does not change the space for computing the distance. 
\end{remark}

\begin{proposition}[Projection in metric learning]\label{proposition_metric_learning_projection}
Consider the eigenvalue decomposition of the weight matrix $\b{W}$ in the generalized Mahalanobis distance with $\b{V}$ and $\b{\Lambda}$ as the matrix of eigenvectors and the diagonal matrix of eigenvalues of the weight, respectively.  
Let $\b{U} := \b{V} \b{\Lambda}^{(1/2)}$.
The generalized Mahalanobis distance can be seen as the Euclidean distance after applying a linear projection onto the column space of $\b{U}$: 
\begin{equation}\label{equation_metric_learning_projection}
\begin{aligned}
\|\b{x}_i - \b{x}_j\|_{\b{W}}^2 &= (\b{U}^\top \b{x}_i - \b{U}^\top \b{x}_j)^\top (\b{U}^\top \b{x}_i - \b{U}^\top \b{x}_j) \\
&= \|\b{U}^\top \b{x}_i - \b{U}^\top \b{x}_j\|_2^2. 
\end{aligned}
\end{equation}
\end{proposition}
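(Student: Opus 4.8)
The plan is to exploit the positive semi-definiteness of $\b{W}$ to factor it as a product $\b{U}\b{U}^\top$, after which the claimed identity follows by a direct substitution and regrouping of terms. The key structural fact is that $\b{W} \succeq \b{0}$ guarantees both that its eigenvalues are non-negative (so the square-root factor $\b{\Lambda}^{(1/2)}$ is real and well defined) and that its eigenvectors can be taken orthonormal (since $\b{W}$ is symmetric).

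First I would write the eigenvalue decomposition $\b{W} = \b{V}\b{\Lambda}\b{V}^\top$, where $\b{V}$ collects the orthonormal eigenvectors and $\b{\Lambda} = \mathrm{diag}(\lambda_1, \dots, \lambda_d)$ with each $\lambda_k \geq 0$. Because the eigenvalues are non-negative, I can define $\b{\Lambda}^{(1/2)}$ as the diagonal matrix of their non-negative square roots, so that $\b{\Lambda}^{(1/2)}\b{\Lambda}^{(1/2)} = \b{\Lambda}$. Setting $\b{U} := \b{V}\b{\Lambda}^{(1/2)}$ as in the statement then yields the factorization
\begin{align*}
\b{U}\b{U}^\top = \b{V}\b{\Lambda}^{(1/2)}(\b{V}\b{\Lambda}^{(1/2)})^\top = \b{V}\b{\Lambda}^{(1/2)}\b{\Lambda}^{(1/2)}\b{V}^\top = \b{V}\b{\Lambda}\b{V}^\top = \b{W},
\end{align*}
where I have used that the diagonal matrix $\b{\Lambda}^{(1/2)}$ is symmetric.

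With this factorization in hand, I would substitute $\b{W} = \b{U}\b{U}^\top$ into the definition of the squared generalized Mahalanobis distance in Eq. (\ref{equation_generalized_Mahalanobis_distance}) and regroup the matrix product. Writing $\b{z} := \b{x}_i - \b{x}_j$, I obtain $\b{z}^\top \b{W}\b{z} = \b{z}^\top \b{U}\b{U}^\top \b{z} = (\b{U}^\top \b{z})^\top (\b{U}^\top \b{z})$, and the linearity $\b{U}^\top \b{z} = \b{U}^\top \b{x}_i - \b{U}^\top \b{x}_j$ turns the right-hand side into exactly the squared Euclidean distance between the projected points $\b{U}^\top \b{x}_i$ and $\b{U}^\top \b{x}_j$, which is the claim.

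There is no serious obstacle here; the argument is essentially a bookkeeping exercise. The only point requiring care is the justification that the square-root factor exists and that the decomposition genuinely reconstructs $\b{W}$---both of which rest entirely on the assumption $\b{W} \succeq \b{0}$. It is also worth noting that $\b{U}$ need not be square or full rank: if some eigenvalues vanish, the corresponding columns of $\b{U}$ are zero and the linear map $\b{U}^\top$ actually reduces dimensionality, which is consistent with interpreting the right-hand side as a Euclidean distance in the (possibly lower-dimensional) column space of $\b{U}$.
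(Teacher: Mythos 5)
Your proposal is correct and follows essentially the same route as the paper's own proof: factor $\b{W} = \b{U}\b{U}^\top$ via the eigenvalue decomposition (using $\b{W} \succeq \b{0}$ to take the square root of $\b{\Lambda}$), then substitute into Eq. (\ref{equation_generalized_Mahalanobis_distance}) and regroup. Your closing remark about rank deficiency and dimensionality reduction is a nice addition that the paper only notes informally after the proposition statement.
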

If $\b{U} \in \mathbb{R}^{d \times p}$ with $p \leq d$, the column space of the projection matrix $\b{U}$ is a $p$-dimensional subspace. 
\begin{proof}
By the eigenvalue decomposition of $\b{W}$, we have:
\begin{align}\label{equation_W_U_UT}
\b{W} = \b{V} \b{\Lambda} \b{V}^\top \overset{(a)}{=} \b{V} \b{\Lambda}^{(1/2)} \b{\Lambda}^{(1/2)} \b{V}^\top \overset{(b)}{=} \b{U} \b{U}^\top,
\end{align}
where $(a)$ is because $\b{W}$ is positive semi-definite so all its eigenvalues are non-negative and can be written as multiplication of its second roots. Also, $(b)$ is because we define $\b{U} := \b{V} \b{\Lambda}^{(1/2)}$. 
Substituting Eq. (\ref{equation_W_U_UT}) in Eq. (\ref{equation_generalized_Mahalanobis_distance}) gives:
\begin{align*}
\|\b{x}_i - \b{x}_j\|_{\b{W}}^2 &= (\b{x}_i - \b{x}_j)^\top \b{U} \b{U}^\top (\b{x}_i - \b{x}_j) \\
&= (\b{U}^\top \b{x}_i - \b{U}^\top \b{x}_j)^\top (\b{U}^\top \b{x}_i - \b{U}^\top \b{x}_j) \\
&= \|\b{U}^\top \b{x}_i - \b{U}^\top \b{x}_j\|_2^2. 
\end{align*}
Q.E.D. 
It is noteworthy that Eq. (\ref{equation_W_U_UT}) can also be obtained using singular value decomposition rather than eigenvalue decomposition. In that case, the matrices of right and left singular vectors are equal because of symmetry of $\b{W}$. 
\end{proof}

\subsection{The Main Idea of Metric Learning}

Consider a $d$-dimensional dataset $\{\b{x}_i\}_{i=1}^n \subset \mathbb{R}^d$ of size $n$. 
Assume some data points are similar in some sense. For example, they have similar pattern or the same characteristics. Hence, we have a set of similar pair points, denotes by $\mathcal{S}$. In contrast, we can have dissimilar points which are different in pattern or characteristics. Let the set of dissimilar pair points be denoted by $\mathcal{D}$. In summary:
\begin{equation}
\begin{aligned}
& (\b{x}_i, \b{x}_j) \in \mathcal{S} \text{ if } \b{x}_i \text{ and } \b{x}_j \text{ are similar}, \\
& (\b{x}_i, \b{x}_j) \in \mathcal{D} \text{ if } \b{x}_i \text{ and } \b{x}_j \text{ are dissimilar}.
\end{aligned}
\end{equation}
The measure of similarity and dissimilarity can be belonging to the same or different classes, if class labels are available for dataset. In this case, we have:
\begin{equation}
\begin{aligned}
& (\b{x}_i, \b{x}_j) \in \mathcal{S} \text{ if } \b{x}_i \text{ and } \b{x}_j \text{ are in the same class}, \\
& (\b{x}_i, \b{x}_j) \in \mathcal{D} \text{ if } \b{x}_i \text{ and } \b{x}_j \text{ are in different classes}.
\end{aligned}
\end{equation}
In metric learning, we learn the weight matrix so that the distances of similar points become smaller and the distances of dissimilar points become larger. 
In this way, the variance of similar and dissimilar points get smaller and larger, respectively. A 2D visualization of metric learning is depicted in Fig. \ref{figure_Metric_learning}.
If the class labels are available, metric learning tries to make the intra-class and inter-class variances smaller and larger, respectively. This is the same idea as the idea of Fisher Discriminant Analysis (FDA) \cite{fisher1936use,ghojogh2019fisher}.

\begin{figure}[!t]
\centering
\includegraphics[width=3in]{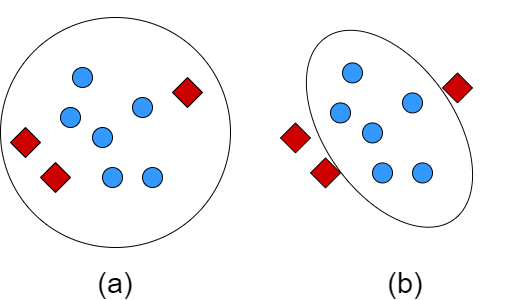}
\caption{Visualizing metric learning in 2D: (a) the contour of Euclidean distance which does not properly discriminate classes, and (b) the contour of Euclidean distance which is better in discrimination of classes.}
\label{figure_Metric_learning}
\end{figure}

\section{Spectral Metric Learning}\label{section_spectral_metric_learning}

\subsection{Spectral Methods Using Scatters}

\subsubsection{The First Spectral Method}

The first metric learning method was proposed in \cite{xing2002distance}. In this method, we minimize the distances of the similar points by the weight matrix $\b{W}$ where this matrix is positive semi-definite:
\begin{equation*}
\begin{aligned}
& \underset{\b{W}}{\text{minimize}}
& & \sum_{(\b{x}_i, \b{x}_j) \in \mathcal{S}} \|\b{x}_i - \b{x}_j\|_{\b{W}}^2 \\
& \text{subject to}
& & \b{W} \succeq \b{0}. 
\end{aligned}
\end{equation*}
However, the solution of this optimization problem is trivial, i.e., $\b{W} = \b{0}$. Hence, we add a constraint on the dissimilar points to have distances larger than some positive amount:
\begin{equation}\label{equation_ML_optimization_spectral_first_method}
\begin{aligned}
& \underset{\b{W}}{\text{minimize}}
& & \sum_{(\b{x}_i, \b{x}_j) \in \mathcal{S}} \|\b{x}_i - \b{x}_j\|_{\b{W}}^2 \\
& \text{subject to}
& & \sum_{(\b{x}_i, \b{x}_j) \in \mathcal{D}} \|\b{x}_i - \b{x}_j\|_{\b{W}} \geq \alpha, \\
& & & \b{W} \succeq \b{0}. 
\end{aligned}
\end{equation}
where $\alpha > 0$ is some positive number such as $\alpha = 1$. 

\begin{lemma}[\cite{xing2002distance}]
If the constraint in Eq. (\ref{equation_ML_optimization_spectral_first_method}) is squared, i.e., $\sum_{(\b{x}_i, \b{x}_j) \in \mathcal{D}} \|\b{x}_i - \b{x}_j\|_{\b{W}}^2 \geq \alpha$, the solution of optimization will have rank $1$. Hence, we are using a non-squared constraint in the optimization problem. 
\end{lemma}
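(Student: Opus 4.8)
The plan is to recast the squared-constraint problem as a linear semidefinite program and then show that its optimum is attained at a rank-one point. First I would introduce the scatter matrices $\b{S}_{\mathcal{S}} := \sum_{(\b{x}_i, \b{x}_j) \in \mathcal{S}} (\b{x}_i - \b{x}_j)(\b{x}_i - \b{x}_j)^\top$ and $\b{S}_{\mathcal{D}} := \sum_{(\b{x}_i, \b{x}_j) \in \mathcal{D}} (\b{x}_i - \b{x}_j)(\b{x}_i - \b{x}_j)^\top$, so that by Eq. (\ref{equation_generalized_Mahalanobis_distance}) the objective becomes $\text{tr}(\b{W} \b{S}_{\mathcal{S}})$ and the squared constraint becomes $\text{tr}(\b{W} \b{S}_{\mathcal{D}}) \geq \alpha$. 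Both are now linear in $\b{W}$, so the feasible set is the spectrahedron $\{\b{W} \succeq \b{0} : \text{tr}(\b{W} \b{S}_{\mathcal{D}}) \geq \alpha\}$ and we are minimizing a linear functional over it.

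The next step is to observe that the inequality must be tight at any optimum: if $\text{tr}(\b{W} \b{S}_{\mathcal{D}}) > \alpha$ at a feasible $\b{W}$, then replacing $\b{W}$ by $t\b{W}$ for a suitable $t \in (0,1)$ preserves feasibility while strictly lowering the objective $\text{tr}(\b{W} \b{S}_{\mathcal{S}})$, contradicting optimality. I would then eigendecompose any feasible $\b{W} = \sum_k \lambda_k \b{u}_k \b{u}_k^\top$ with $\lambda_k \geq 0$ and orthonormal $\b{u}_k$ (possible since $\b{W} \succeq \b{0}$), and set $a_k := \b{u}_k^\top \b{S}_{\mathcal{S}} \b{u}_k \geq 0$ and $b_k := \b{u}_k^\top \b{S}_{\mathcal{D}} \b{u}_k \geq 0$, so that the objective reads $\sum_k \lambda_k a_k$ and the constraint reads $\sum_k \lambda_k b_k \geq \alpha$.

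The heart of the argument is a ratio comparison. Let $k^\star$ minimize $a_k / b_k$ over directions with $b_k > 0$, and form the rank-one matrix $\b{W}' := (\alpha / b_{k^\star})\, \b{u}_{k^\star} \b{u}_{k^\star}^\top$, which is feasible with objective $\alpha\, a_{k^\star} / b_{k^\star}$. Since $a_k \geq (a_{k^\star}/b_{k^\star})\, b_k$ for every $k$, I obtain $\sum_k \lambda_k a_k \geq (a_{k^\star}/b_{k^\star}) \sum_k \lambda_k b_k \geq \alpha\, a_{k^\star}/b_{k^\star}$, so $\b{W}'$ attains an objective no larger than that of $\b{W}$. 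Hence a rank-one matrix is optimal, and equivalently the optimizer aligns with the generalized eigenvector of the pencil $(\b{S}_{\mathcal{S}}, \b{S}_{\mathcal{D}})$ associated with the smallest generalized eigenvalue. The same conclusion also follows abstractly from the Pataki--Barvinok bound, since an SDP with a single scalar constraint admits an optimal solution of rank $r$ with $r(r+1)/2 \leq 1$, forcing $r \leq 1$.

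The main obstacle I anticipate is the bookkeeping of degenerate directions rather than the core inequality: directions with $b_k = 0$ must be excluded from the ratio (any such direction with $a_k > 0$ wastes objective and should carry $\lambda_k = 0$, while directions with $a_k = b_k = 0$ are irrelevant), and one should confirm that at least one direction with $b_k > 0$ exists so that $k^\star$ is well defined. The latter holds because feasibility already gives $\sum_k \lambda_k b_k \geq \alpha > 0$, forcing some $b_k > 0$. Once these cases are dispatched, the rank-one conclusion is immediate, and it is precisely this undesirable collapse of $\b{W}$ onto a single direction that motivates retaining the non-squared constraint in Eq. (\ref{equation_ML_optimization_spectral_first_method}).
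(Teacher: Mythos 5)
Your proposal is correct, and it takes a genuinely different route from the paper. The paper's proof reformulates the squared-constraint problem as maximization of the trace ratio $\textbf{tr}(\b{W}\b{\Sigma}_{\mathcal{D}})/\textbf{tr}(\b{W}\b{\Sigma}_{\mathcal{S}})$, identifies it as a Rayleigh--Ritz quotient via $\b{W} = \b{U}\b{U}^\top$, and then asserts that the resulting generalized eigenvalue problem $\b{\Sigma}_{\mathcal{D}}\b{u}_1 = \lambda \b{\Sigma}_{\mathcal{S}}\b{u}_1$ is solved with only the leading eigenvector nonzero, so $\b{U}$ collapses to one column. You instead keep the problem in its native SDP form, prove the single linear constraint is active at the optimum by a scaling argument, and then run an exchange argument over the eigendirections of an arbitrary feasible $\b{W}$: the direction minimizing $\b{u}_k^\top\b{S}_{\mathcal{S}}\b{u}_k / \b{u}_k^\top\b{S}_{\mathcal{D}}\b{u}_k$ alone already does at least as well, so a rank-one optimizer exists. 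Your extremal direction is exactly the paper's leading generalized eigenvector of the pencil $(\b{\Sigma}_{\mathcal{S}}, \b{\Sigma}_{\mathcal{D}})$, so the two proofs meet at the same object, but yours is self-contained where the paper's is partly by assertion: you actually justify why the mass concentrates on one direction rather than citing the behaviour of trace-ratio maximization, and you dispatch the degenerate cases ($b_k = 0$, existence of a direction with $b_k > 0$) explicitly. The appeal to the Pataki--Barvinok bound ($r(r+1)/2 \leq$ number of constraints) is a nice independent confirmation that the paper does not mention, and it buys generality: it shows the rank collapse is forced by having a single scalar constraint, independent of the particular matrices $\b{S}_{\mathcal{S}}$ and $\b{S}_{\mathcal{D}}$. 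The only caveat, which does not affect correctness, is that your argument establishes the existence of a rank-one optimizer rather than that every optimizer has rank one; in degenerate cases with ties in the ratio, higher-rank optima can coexist, but the lemma's point --- that the squared constraint permits the metric to collapse onto a single direction --- stands either way.
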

\begin{proof}
If the constraint in Eq. (\ref{equation_ML_optimization_spectral_first_method}) is squared, the problem is equivalent to (see {\citep[Appendix B]{ghojogh2019fisher}} for proof):
\begin{equation*}
\begin{aligned}
& \underset{\b{W}}{\text{maximize}}
& & \frac{\sum_{(\b{x}_i, \b{x}_j) \in \mathcal{D}} \|\b{x}_i - \b{x}_j\|_{\b{W}}^2}{\sum_{(\b{x}_i, \b{x}_j) \in \mathcal{S}} \|\b{x}_i - \b{x}_j\|_{\b{W}}^2},
\end{aligned}
\end{equation*}
which is a Rayleigh-Ritz quotient \cite{ghojogh2019eigenvalue}. 
We can restate $\|\b{x}_i - \b{x}_j\|_{\b{W}}^2$ as:
\begin{equation}\label{equation_spectral_ML_first_method_trace_W_Sigma_S}
\begin{aligned}
\sum_{(\b{x}_i, \b{x}_j) \in \mathcal{S}} \|\b{x}_i - \b{x}_j\|_{\b{W}}^2 = \textbf{tr}(\b{W} \b{\Sigma}_{\mathcal{S}}), \\
\sum_{(\b{x}_i, \b{x}_j) \in \mathcal{D}} \|\b{x}_i - \b{x}_j\|_{\b{W}}^2 = \textbf{tr}(\b{W} \b{\Sigma}_{\mathcal{D}}),
\end{aligned}
\end{equation}
where $\textbf{tr}(.)$ denotes the trace of matrix and:
\begin{equation}\label{equation_spectral_ML_first_method_Sigma_S}
\begin{aligned}
\b{\Sigma}_{\mathcal{S}} := \sum_{(\b{x}_i, \b{x}_j) \in \mathcal{S}} (\b{x}_i - \b{x}_j) (\b{x}_i - \b{x}_j)^\top, \\
\b{\Sigma}_{\mathcal{D}} := \sum_{(\b{x}_i, \b{x}_j) \in \mathcal{D}} (\b{x}_i - \b{x}_j) (\b{x}_i - \b{x}_j)^\top.
\end{aligned} 
\end{equation}
Hence, we have:
\begin{equation*}
\begin{aligned}
& \frac{\sum_{(\b{x}_i, \b{x}_j) \in \mathcal{D}} \|\b{x}_i - \b{x}_j\|_{\b{W}}^2}{\sum_{(\b{x}_i, \b{x}_j) \in \mathcal{S}} \|\b{x}_i - \b{x}_j\|_{\b{W}}^2} = \frac{\textbf{tr}(\b{W} \b{\Sigma}_{\mathcal{D}})}{\textbf{tr}(\b{W} \b{\Sigma}_{\mathcal{S}})} \overset{(\ref{equation_W_U_UT})}{=} \frac{\textbf{tr}(\b{U} \b{U}^\top \b{\Sigma}_{\mathcal{D}})}{\textbf{tr}(\b{U} \b{U}^\top \b{\Sigma}_{\mathcal{S}})} \\
& \overset{(a)}{=} \frac{\textbf{tr}(\b{U}^\top \b{\Sigma}_{\mathcal{D}} \b{U})}{\textbf{tr}(\b{U}^\top \b{\Sigma}_{\mathcal{S}} \b{U})} = \frac{\sum_{i=1}^d \b{u}^\top \b{\Sigma}_{\mathcal{D}} \b{u}}{\sum_{i=1}^d \b{u}^\top \b{\Sigma}_{\mathcal{S}} \b{u}}, 
\end{aligned}
\end{equation*}
where $(a)$ is because of the cyclic property of trace and $(b)$ is because $\b{U} = [\b{u}_1, \dots, \b{u}_d]$. 
Maximizing this Rayleigh-Ritz quotient results in the following generalized eigenvalue problem \cite{ghojogh2019eigenvalue}: 
\begin{align*}
\b{\Sigma}_{\mathcal{D}} \b{u}_1 = \lambda \b{\Sigma}_{\mathcal{S}} \b{u}_1,
\end{align*}
where $\b{u}_1$ is the eigenvector with largest eigenvalue and the other eigenvectors $\b{u}_2, \dots, \b{u}_d$ are zero vectors. Q.E.D.
\end{proof}

The Eq. (\ref{equation_ML_optimization_spectral_first_method}) can be restated as a maximization problem:
\begin{equation}\label{equation_ML_optimization_spectral_first_method_2}
\begin{aligned}
& \underset{\b{W}}{\text{maximize}}
& & \sum_{(\b{x}_i, \b{x}_j) \in \mathcal{D}} \|\b{x}_i - \b{x}_j\|_{\b{W}} \\
& \text{subject to}
& & \sum_{(\b{x}_i, \b{x}_j) \in \mathcal{S}} \|\b{x}_i - \b{x}_j\|_{\b{W}}^2 \leq \alpha, \\
& & & \b{W} \succeq \b{0}. 
\end{aligned}
\end{equation}
We can solve this problem using projected gradient method \cite{ghojogh2021kkt} where a step of gradient ascent is followed by projection onto the two constraint sets:
\begin{align*}
& \b{W} := \b{W} + \eta \frac{\partial}{\partial \b{W}} \Big( \sum_{(\b{x}_i, \b{x}_j) \in \mathcal{D}} \|\b{x}_i - \b{x}_j\|_{\b{W}} \Big), \\
& \b{W} := \arg \min_{\b{Q}} \Big(\|\b{Q} - \b{W}\|_F^2 \,\text{ s.t.} \\
&~~~~~~~~~~~~~~~~~~~~~~~~~~~ \sum_{(\b{x}_i, \b{x}_j) \in \mathcal{S}} \|\b{x}_i - \b{x}_j\|_{\b{Q}}^2 \leq \alpha\Big), \\
& \b{W} := \b{V}\, \textbf{diag}(\max(\lambda_1, 0), \dots, \max(\lambda_d, 0))\, \b{V}^\top,
\end{align*}
where $\eta>0$ is the learning rate and $\b{V}$ and $\b{\Lambda} = \textbf{diag}(\lambda_1, \dots, \lambda_d)$ are the eigenvectors and eigenvalues of $\b{W}$, respectively (see Eq. (\ref{equation_W_U_UT})). 

\subsubsection{Formulating as Semidefinite Programming}

Another metric learning method is \cite{ghodsi2007improving} which minimizes the distances of similar points and maximizes the distances of dissimilar points. For this, we minimize the distances of similar points and the negation of distances of dissimilar points.  
The weight matrix should be positive semi-definite to satisfy the triangle inequality and convexity. The trace of weight matrix is also set to a constant to eliminate the trivial solution $\b{W} = \b{0}$. 
The optimization problem is:
\begin{equation}\label{equation_ML_optimization_spectral_method2}
\begin{aligned}
& \underset{\b{W}}{\text{minimize}}
& & \frac{1}{|\mathcal{S}|} \sum_{(\b{x}_i, \b{x}_j) \in \mathcal{S}} \|\b{x}_i - \b{x}_j\|_{\b{W}}^2 \\
& & & - \frac{1}{|\mathcal{D}|} \sum_{(\b{x}_i, \b{x}_j) \in \mathcal{D}} \|\b{x}_i - \b{x}_j\|_{\b{W}}^2 \\
& \text{subject to} 
& & \b{W} \succeq \b{0}, \\
& & & \textbf{tr}(\b{W}) = 1,
\end{aligned}
\end{equation}
where $|.|$ denotes the cardinality of set.

\begin{lemma}[\cite{ghodsi2007improving}]\label{lemma_sepctral_ML_method2_objective}
The objective function can be simplified as:
\begin{equation}
\begin{aligned}
&\frac{1}{|\mathcal{S}|} \sum_{(\b{x}_i, \b{x}_j) \in \mathcal{S}} \|\b{x}_i - \b{x}_j\|_{\b{W}}^2 - \frac{1}{|\mathcal{D}|} \sum_{(\b{x}_i, \b{x}_j) \in \mathcal{D}} \|\b{x}_i - \b{x}_j\|_{\b{W}} \\
&~~ = \textbf{vec}(\b{W})^\top 
\Big( \frac{1}{|\mathcal{S}|} \sum_{(\b{x}_i, \b{x}_j) \in \mathcal{S}} \textbf{vec}\big((\b{x}_i - \b{x}_j) (\b{x}_i - \b{x}_j)^\top\big) \\
&~~~~~~~~~~~~~~~ - \frac{1}{|\mathcal{D}|} \sum_{(\b{x}_i, \b{x}_j) \in \mathcal{D}} \textbf{vec}\big((\b{x}_i - \b{x}_j) (\b{x}_i - \b{x}_j)^\top\big) \Big),
\end{aligned}
\end{equation}
where $\textbf{vec}(.)$ vectorizes the matrix to a vector \cite{ghojogh2021kkt}. 
\end{lemma}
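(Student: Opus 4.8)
The plan is to reduce the claim to a single per-pair identity and then finish by linearity. First I would introduce the shorthand $\b{d}_{ij} := \b{x}_i - \b{x}_j$, so that the squared generalized Mahalanobis distance from Eq.~(\ref{equation_generalized_Mahalanobis_distance}) reads $\|\b{x}_i - \b{x}_j\|_{\b{W}}^2 = \b{d}_{ij}^\top \b{W} \b{d}_{ij}$. (I interpret the dissimilar term on the left-hand side as squared as well, which matches both the optimization problem in Eq.~(\ref{equation_ML_optimization_spectral_method2}) and the outer-product form appearing on the right-hand side.)

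The key step is to establish the vectorized form of each summand,
\begin{align*}
\b{d}_{ij}^\top \b{W} \b{d}_{ij} = \textbf{vec}(\b{W})^\top \textbf{vec}(\b{d}_{ij} \b{d}_{ij}^\top).
\end{align*}
I would obtain this in two moves. Since $\b{d}_{ij}^\top \b{W} \b{d}_{ij}$ is a scalar it equals its own trace, and the cyclic property of trace gives $\b{d}_{ij}^\top \b{W} \b{d}_{ij} = \textbf{tr}(\b{W} \b{d}_{ij} \b{d}_{ij}^\top)$. Then I would invoke the standard identity $\textbf{tr}(\b{A}^\top \b{B}) = \textbf{vec}(\b{A})^\top \textbf{vec}(\b{B})$ together with the symmetry $\b{W} = \b{W}^\top$ (which holds because $\b{W} \succeq \b{0}$), so that $\textbf{tr}(\b{W} \b{d}_{ij} \b{d}_{ij}^\top) = \textbf{tr}(\b{W}^\top \b{d}_{ij} \b{d}_{ij}^\top) = \textbf{vec}(\b{W})^\top \textbf{vec}(\b{d}_{ij} \b{d}_{ij}^\top)$.

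With this identity in hand the remainder is pure linearity: I would substitute it into each of the two sums, and since $\textbf{vec}(\b{W})^\top$ does not depend on the pair $(\b{x}_i, \b{x}_j)$, pull it outside both sums and collect the two (scaled) sums of vectorized outer products, yielding exactly the stated expression. There is no genuine obstacle here; the only point that needs care is correctly pairing the trace identity with the symmetry of $\b{W}$ so that no stray transpose is left on $\textbf{vec}(\b{W})$, and keeping the $1/|\mathcal{S}|$ and $1/|\mathcal{D}|$ normalizations attached to their respective sums.
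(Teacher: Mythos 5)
Your proof is correct and complete. Note that the paper itself does not actually prove this lemma --- it simply defers to Section 2.1 of the cited reference --- so there is no in-paper argument to compare against; your per-pair identity $\b{d}_{ij}^\top \b{W} \b{d}_{ij} = \textbf{tr}(\b{W}\b{d}_{ij}\b{d}_{ij}^\top) = \textbf{vec}(\b{W})^\top\textbf{vec}(\b{d}_{ij}\b{d}_{ij}^\top)$ followed by linearity is exactly the standard route and is what the original reference does. Two small points in your favor: you correctly read the missing square on the dissimilar-pair term in the lemma's left-hand side as a typo (the right-hand side's outer-product form and the optimization problem in Eq.~(\ref{equation_ML_optimization_spectral_method2}) both require the squared distance), and you correctly flag that the symmetry of $\b{W}$ is what lets $\textbf{tr}(\b{W}\b{A}) = \textbf{vec}(\b{W})^\top\textbf{vec}(\b{A})$ go through without a stray transpose.
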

\begin{proof}
See {\citep[Section 2.1]{ghodsi2007improving}} for proof. 
\end{proof}
According to Lemma \ref{lemma_sepctral_ML_method2_objective}, Eq. (\ref{equation_ML_optimization_spectral_method2}) is a Semidefinite Programming (SDP) problem. It can be solved iteratively using the interior-point method \cite{ghojogh2021kkt}. 

\subsubsection{Relevant to Fisher Discriminant Analysis}\label{section_relation_to_FDA}

Another metric learning method is \cite{alipanahi2008distance} which has two approaches, introduced in the following. The relation of metric learning with Fisher discriminant analysis \cite{fisher1936use,ghojogh2019fisher} was discussed in this paper \cite{alipanahi2008distance}. 

\hfill\break
\textbf{-- Approach 1: }
As $\b{W} \succeq \b{0}$, the weight matrix can be decomposed as in Eq. (\ref{equation_W_U_UT}), i.e., $\b{W} = \b{U} \b{U}^\top$. Hence, we have:
\begin{align}
\|\b{x}_i - \b{x}_j\|_{\b{W}}^2 &\overset{(\ref{equation_generalized_Mahalanobis_distance})}{=} (\b{x}_i - \b{x}_j)^\top \b{W} (\b{x}_i - \b{x}_j) \nonumber\\
&\overset{(a)}{=} \textbf{tr}\big((\b{x}_i - \b{x}_j)^\top \b{W} (\b{x}_i - \b{x}_j)\big) \nonumber\\
&\overset{(\ref{equation_W_U_UT})}{=} \textbf{tr}\big((\b{x}_i - \b{x}_j)^\top \b{U} \b{U}^\top (\b{x}_i - \b{x}_j)\big) \nonumber\\
&\overset{(b)}{=} \textbf{tr}\big(\b{U}^\top (\b{x}_i - \b{x}_j) (\b{x}_i - \b{x}_j)^\top \b{U}\big), \label{equation_ML_spectral_method3_trace_UT_x_xT_U}
\end{align}
where $(a)$ is because a scalar is equal to its trace and $(b)$ is because of the cyclic property of trace. 
We can substitute Eq. (\ref{equation_ML_spectral_method3_trace_UT_x_xT_U}) in Eq. (\ref{equation_ML_optimization_spectral_method2}) to obtain an optimization problem:
\begin{equation}\label{equation_ML_optimization_spectral_method3}
\begin{aligned}
& \underset{\b{U}}{\text{minimize}}
& & \frac{1}{|\mathcal{S}|} \sum_{(\b{x}_i, \b{x}_j) \in \mathcal{S}} \textbf{tr}\big(\b{U}^\top (\b{x}_i - \b{x}_j) (\b{x}_i - \b{x}_j)^\top \b{U}\big) \\
& & & \!\!\!\!\!\!\!- \frac{1}{|\mathcal{D}|} \sum_{(\b{x}_i, \b{x}_j) \in \mathcal{D}} \textbf{tr}\big(\b{U}^\top (\b{x}_i - \b{x}_j) (\b{x}_i - \b{x}_j)^\top \b{U}\big) \\
& \text{subject to} 
& & \textbf{tr}(\b{U} \b{U}^\top) = 1,
\end{aligned}
\end{equation}
whose objective variable is $\b{U}$. Note that the constraint $\b{W} \succeq \b{0}$ is implicitly satisfied because of the decomposition $\b{W} = \b{U} \b{U}^\top$.
We define:
\begin{equation}\label{equation_Sigma_S_prime_Sigma_D_prime}
\begin{aligned}
& \b{\Sigma}'_{\mathcal{S}} := \frac{1}{|\mathcal{S}|} \sum_{(\b{x}_i, \b{x}_j) \in \mathcal{S}} (\b{x}_i - \b{x}_j) (\b{x}_i - \b{x}_j)^\top \overset{(\ref{equation_spectral_ML_first_method_trace_W_Sigma_S})}{=} \frac{1}{|\mathcal{S}|} \b{\Sigma}_{\mathcal{S}}, \\
& \b{\Sigma}'_{\mathcal{D}} := \frac{1}{|\mathcal{D}|} \sum_{(\b{x}_i, \b{x}_j) \in \mathcal{D}} (\b{x}_i - \b{x}_j) (\b{x}_i - \b{x}_j)^\top \overset{(\ref{equation_spectral_ML_first_method_trace_W_Sigma_S})}{=} \frac{1}{|\mathcal{D}|} \b{\Sigma}_{\mathcal{D}}.
\end{aligned}
\end{equation}
Hence, Eq. (\ref{equation_ML_optimization_spectral_method3}) can be restated as:
\begin{equation}\label{equation_ML_optimization_spectral_method3_2}
\begin{aligned}
& \underset{\b{U}}{\text{minimize}}
& & \textbf{tr}(\b{U}^\top (\b{\Sigma}'_{\mathcal{S}} - \b{\Sigma}'_{\mathcal{D}}) \b{U}) \\
& \text{subject to} 
& & \textbf{tr}(\b{U} \b{U}^\top) = 1,
\end{aligned}
\end{equation}
whose Lagrangian is \cite{ghojogh2021kkt}:
\begin{align*}
\mathcal{L} = \textbf{tr}(\b{U}^\top (\b{\Sigma}'_{\mathcal{S}} - \b{\Sigma}'_{\mathcal{D}}) \b{U}) - \lambda (\textbf{tr}(\b{U} \b{U}^\top) - 1).
\end{align*}
Taking derivative of the Lagrangian and setting it to zero gives:
\begin{align}
&\frac{\partial \mathcal{L}}{\partial \b{U}} = 2(\b{\Sigma}'_{\mathcal{S}} - \b{\Sigma}'_{\mathcal{D}}) \b{U} - 2\lambda \b{U} \overset{\text{set}}{=} \b{0} \nonumber\\
&\implies (\b{\Sigma}'_{\mathcal{S}} - \b{\Sigma}'_{\mathcal{D}}) \b{U} = \lambda \b{U}, \label{equation_spectral_ML_method3_eig_problem}
\end{align}
which is the eigenvalue problem for $(\b{\Sigma}'_{\mathcal{S}} - \b{\Sigma}'_{\mathcal{D}})$ \cite{ghojogh2019eigenvalue}. Hence, $\b{U}$ is the eigenvector of $(\b{\Sigma}'_{\mathcal{S}} - \b{\Sigma}'_{\mathcal{D}})$ with the smallest eigenvalue because Eq. (\ref{equation_ML_optimization_spectral_method3}) is a minimization problem. 

\hfill\break
\textbf{-- Approach 2: }
We can change the constraint in Eq. (\ref{equation_ML_optimization_spectral_method3_2}) to have orthogonal projection matrix, i.e., $\b{U}^\top \b{U} = \b{I}$.
Rather, we can make the rotation of the projection matrix by the matrix $\b{\Sigma}'_{\mathcal{S}}$ be orthogonal, i.e., $\b{U}^\top \b{\Sigma}'_{\mathcal{S}} \b{U} = \b{I}$. Hence, the optimization problem becomes:
\begin{equation}\label{equation_ML_optimization_spectral_method3_approach2}
\begin{aligned}
& \underset{\b{U}}{\text{minimize}}
& & \textbf{tr}(\b{U}^\top (\b{\Sigma}'_{\mathcal{S}} - \b{\Sigma}'_{\mathcal{D}}) \b{U}) \\
& \text{subject to} 
& & \b{U}^\top \b{\Sigma}'_{\mathcal{S}}\, \b{U} = \b{I},
\end{aligned}
\end{equation}
whose Lagrangian is \cite{ghojogh2021kkt}:
\begin{align}
&\mathcal{L} = \textbf{tr}(\b{U}^\top (\b{\Sigma}'_{\mathcal{S}} - \b{\Sigma}'_{\mathcal{D}}) \b{U}) - \textbf{tr}(\b{\Lambda}^\top (\b{U}^\top \b{\Sigma}'_{\mathcal{S}}\, \b{U} - \b{I})). \nonumber\\
&\frac{\partial \mathcal{L}}{\partial \b{U}} = 2(\b{\Sigma}'_{\mathcal{S}} - \b{\Sigma}'_{\mathcal{D}}) \b{U} - 2 \b{\Sigma}'_{\mathcal{S}}\,  \b{U} \b{\Lambda} \overset{\text{set}}{=} \b{0} \nonumber\\
&\implies (\b{\Sigma}'_{\mathcal{S}} - \b{\Sigma}'_{\mathcal{D}}) \b{U} = \b{\Sigma}'_{\mathcal{S}}\,  \b{U} \b{\Lambda}, 
\end{align}
which is the generalized eigenvalue problem for $(\b{\Sigma}'_{\mathcal{S}} - \b{\Sigma}'_{\mathcal{D}}, \b{\Sigma}'_{\mathcal{S}})$ \cite{ghojogh2019eigenvalue}. Hence, $\b{U}$ is a matrix whose columns are the eigenvectors sorted from the smallest to largest eigenvalues. 

The optimization problem is similar to the optimization of Fisher discriminant analysis (FDA) \cite{fisher1936use,ghojogh2019fisher} where $\b{\Sigma}'_{\mathcal{S}}$ and $\b{\Sigma}'_{\mathcal{D}}$ are replaced with the intra-class and inter-class covariance matrices of data, respectively. This shows the relation of this method with FDA. It makes sense because both metric learning and FDA have the same goal and that is decreasing and increasing the variances of similar and dissimilar points, respectively. 

\subsubsection{Relevant Component Analysis (RCA)}\label{section_RCA}


Suppose the $n$ data points can be divided into $c$ clusters, or so-called chunklets. If class labels are available, classes are the chunklets. If $\mathcal{X}_l$ denotes the data of the $l$-th cluster and $\b{\mu}_l$ is the mean of $\mathcal{X}_l$, the summation of intra-cluster scatters is:
\begin{align}\label{equation_intra_cluster_scatter}
\mathbb{R}^{d \times d} \ni \b{S}_w := \frac{1}{n} \sum_{l=1}^c \sum_{\b{x}_i \in \mathcal{X}_l} (\b{x}_i - \b{\mu}_l) (\b{x}_i - \b{\mu}_l)^\top. 
\end{align}
Relevant Component Analysis (RCA) \cite{shental2002adjustment} is a metric learning method. In this method, we first apply Principal Component Analysis (PCA) \cite{ghojogh2019unsupervised} on data using the total scatter of data. Let the projection matrix of PCA be denoted by $\b{U}$. After projection onto the PCA subspace, the summation of intra-cluster scatters is $\widehat{\b{S}}_w := \b{U}^\top \b{S}_w \b{U}$ because of the quadratic characteristic of covariance. 
RCA uses $\widehat{\b{S}}_w$ as the covariance matrix in the Mahalanobis distance, i.e., Eq. (\ref{equation_Mahalanobis_distance}). According to Eq. (\ref{equation_metric_learning_projection}), the subspace of RDA is obtained by the eigenvalue (or singular value) decomposition of $\widehat{\b{S}}_w^{-1}$ (see Eq. (\ref{equation_W_U_UT})). 

\subsubsection{Discriminative Component Analysis (DCA)}\label{section_DCA}

Discriminative Component Analysis (DCA) \cite{hoi2006learning} is another spectral metric learning method based on scatters of clusters/classes. 
Consider the $c$ clusters, chunklets, or classes of data. The intra-class scatter is as in Eq. (\ref{equation_intra_cluster_scatter}). The inter-class scatter is:
\begin{equation}\label{equation_inter_cluster_scatter}
\begin{aligned}
&\mathbb{R}^{d \times d} \ni \b{S}_b := \frac{1}{n} \sum_{l=1}^c \sum_{j=1}^c (\b{\mu}_l - \b{\mu}_j) (\b{\mu}_l - \b{\mu}_j)^\top, \text{ or } \\
&\mathbb{R}^{d \times d} \ni \b{S}_b := \frac{1}{n} \sum_{l=1}^c (\b{\mu}_l - \b{\mu}) (\b{\mu}_l - \b{\mu})^\top,
\end{aligned}
\end{equation}
where $\b{\mu}_l$ is the mean of the $l$-th class and $\b{\mu}$ is the total mean of data.
According to Proposition \ref{proposition_metric_learning_projection}, metric learning can be seen as Euclidean distance after projection onto the column space of a projection matrix $\b{U}$ where $\b{W} = \b{U} \b{U}^\top$. 
Similar to Fisher discriminant analysis \cite{fisher1936use,ghojogh2019fisher}, DCA maximizes the inter-class variance and minimizes the intra-class variance after projection. Hence, its optimization is:
\begin{equation}\label{equation_optimization_DCA}
\begin{aligned}
& \underset{\b{U}}{\text{maximize}}
& & \frac{\textbf{tr}(\b{U}^\top \b{S}_b \b{U})}{\textbf{tr}(\b{U}^\top \b{S}_w \b{U})},
\end{aligned}
\end{equation}
which is a generalized Rayleigh-Ritz quotient. The solution $\b{U}$ to this optimization problem is the generalized eigenvalue problem $(\b{S}_b, \b{S}_w)$ \cite{ghojogh2019eigenvalue}. 
According to Eq. (\ref{equation_W_U_UT}), we can set the weight matrix of the generalized Mahalanobis distance as $\b{W} = \b{U} \b{U}^\top$ where $\b{U}$ is the matrix of eigenvectors. 

\subsubsection{High Dimensional Discriminative Component Analysis}

Another spectral method for metric learning is \cite{xiang2008learning} which minimizes and maximizes the intra-class and inter-class variances, respectively, by the the same optimization problem as Eq. (\ref{equation_optimization_DCA}) with an additional constraint on the orthogonality of the projection matrix, i.e., $\b{U}^\top \b{U} = \b{I}$. This problem can be restated by posing penalty on the denominator:
\begin{equation}\label{equation_optimization_Sb_lambda_Sw}
\begin{aligned}
& \underset{\b{U}}{\text{maximize}}
& & \textbf{tr}(\b{U}^\top (\b{S}_b - \lambda \b{S}_w) \b{U}) \\
& \text{subject to} 
& & \b{U}^\top \b{U} = \b{I},
\end{aligned}
\end{equation}
where $\lambda>0$ is the regularization parameter. 
The solution to this problem is the eigenvalue problem for $\b{S}_b - \lambda \b{S}_w$. 
The eigenvectors are the columns of $\b{U}$ and the weight matrix of the generalized Mahalanobis is obtained using Eq. (\ref{equation_W_U_UT}). 

If the dimensionality of data is large, computing the eigenvectors of $(\b{S}_b - \lambda \b{S}_w) \in \mathbb{R}^{d \times d}$ is very time-consuming. 
According to {\citep[Theorem 3]{xiang2008learning}}, the optimization problem (\ref{equation_optimization_Sb_lambda_Sw}) can be solved in the orthogonal complement space of the null space of $\b{S}_b + \b{S}_w$ without loss of any information (see {\citep[Appendix A]{xiang2008learning}} for proof). 
Hence, if $d \gg 1$, we find $\b{U}$ as follows. 
Let $\b{X} := [\b{x}_1, \dots, \b{x}_n] \in \mathbb{R}^{d \times n}$ be the matrix of data. 
Let $\b{A}_w$ and $\b{A}_b$ be the adjacency matrices for the sets $\mathcal{S}$ and $\mathcal{D}$, respectively. For example, if $(\b{x}_i, \b{x}_j) \in \mathcal{S}$, then $\b{A}_w(i,j) = 1$; otherwise, $\b{A}_w(i,j) = 0$. 
If $\b{L}_w$ and $\b{L}_b$ are the Laplacian matrices of $\b{A}_w$ and $\b{A}_b$, respectively, we have $\b{S}_w = 0.5 \b{X} \b{L}_w \b{X}^\top$ and $\b{S}_b = 0.5 \b{X} \b{L}_b \b{X}^\top$ (see \cite{belkin2002laplacian,ghojogh2021laplacian} for proof). 
We have $\textbf{tr}(\b{S}_w + \b{S}_b) = \textbf{tr}(\b{X} (0.5\b{L}_w + 0.5\b{L}_b) \b{X}^\top) = \textbf{tr}(\b{X}^\top \b{X} (0.5\b{L}_w + 0.5\b{L}_b))$ because of the cyclic property of trace. If the rank of $\b{L} := \b{X}^\top \b{X} (0.5\b{L}_w + 0.5\b{L}_b) \in \mathbb{R}^{n \times n}$ is $r \leq n$, it has $r$ non-zero eigenvalues which we compute its corresponding eigenvectors. We stack these eigenvectors to have $\b{V} \in \mathbb{R}^{d \times r}$. The projected intra-class and inter-class variances after projection onto the column space of $\b{V}$ are $\b{S}'_w := \b{V}^\top \b{S}_w \b{V}$ and $\b{S}'_b := \b{V}^\top \b{S}_b \b{V}$, respectively. Then, we use $\b{S}'_w$ and $\b{S}'_b$ in Eq. (\ref{equation_optimization_Sb_lambda_Sw}) and the weight matrix of the generalized Mahalanobis is obtained using Eq. (\ref{equation_W_U_UT}). 

\subsubsection{Regularization by Locally Linear Embedding}\label{section_spectral_ML_regularization_by_LLE}

The spectral metric learning methods using scatters can be modeled as maximization of the following Rayleigh–Ritz quotient \cite{baghshah2009semi}:
\begin{equation}\label{equation_optimization_ML_LLE_1}
\begin{aligned}
& \underset{\b{U}}{\text{maximize}}
& & \frac{\sum_{(\b{x}_i, \b{x}_j) \in \mathcal{S}} \|\b{x}_i - \b{x}_j\|_{\b{W}}}{\sum_{(\b{x}_i, \b{x}_j) \in \mathcal{D}} \|\b{x}_i - \b{x}_j\|_{\b{W}} + \lambda \Omega(\b{U})}, \\ 
& \text{subject to}
& & \b{U}^\top \b{U} = \b{I},
\end{aligned}
\end{equation}
where $\b{W} = \b{U}\b{U}^\top$ (see Eq. (\ref{equation_W_U_UT})), $\lambda>0$ is the regularization parameter, and $\Omega(\b{U})$ is a penalty or regularization term on the projection matrix $\b{U}$.
This optimization maximizes and minimizes the distances of the similar and dissimilar points, respectively. 
According to Section \ref{section_relation_to_FDA}, Eq. (\ref{equation_optimization_ML_LLE_1}) can be restated as:
\begin{equation}\label{equation_optimization_ML_LLE_2}
\begin{aligned}
& \underset{\b{U}}{\text{maximize}}
& & \frac{\textbf{tr}(\b{U}^\top \b{S}_b \b{U})}{\textbf{tr}(\b{U}^\top \b{S}_w \b{U}) + \lambda \Omega(\b{U})}, \\ 
& \text{subject to}
& & \b{U}^\top \b{U} = \b{I}.
\end{aligned}
\end{equation}

As was discussed in Proposition \ref{proposition_metric_learning_projection}, metric learning can be seen as projection onto a subspace. 
The regularization term can be linear reconstruction of every projected point by its $k$ Nearest Neighbors ($k$NN) using the same reconstruction weights as before projection \cite{baghshah2009semi}. 
The weights for linear reconstruction in the input space can be found as in locally linear embedding \cite{roweis2000nonlinear,ghojogh2020locally}. If $s_{ij}$ denotes the weight of $\b{x}_j$ in reconstruction of $\b{x}_i$ and $\mathcal{N}(\b{x}_i)$ is the set of $k$NN for $\b{x}_i$, we have:
\begin{align*}
& \underset{s_{ij}}{\text{minimize}}
& & \sum_{i=1}^n \Big\|\b{x}_i - \sum_{\b{x}_j \in \mathcal{N}(\b{x}_i)} s_{ij} \b{x}_j\Big\|_2^2, \\ 
& \text{subject to}
& & \sum_{\b{x}_j \in \mathcal{N}(\b{x}_i)} s_{ij} = 1.
\end{align*}
The solution of this optimization is \cite{ghojogh2020locally}:
\begin{align*}
s_{ij}^* = \frac{\b{G}_i^{-1} \b{1}}{\b{1}^\top \b{G}_i^{-1} \b{1}},
\end{align*}
where $\b{G}_i := (\b{x}_i \b{1}^\top - \b{X}_i)^\top (\b{x}_i \b{1}^\top - \b{X}_i)$ in which $\b{X}_i \in \mathbb{R}^{d \times k}$ denotes the stack of $k$NN for $\b{x}_i$. 
We define $\b{S}^* := [s^*_{ij}] \in \mathbb{R}^{n \times n}$.
The regularization term can be reconstruction in the subspace using the same reconstruction weights as in the input space \cite{baghshah2009semi}:
\begin{align}
\Omega(\b{U}) &:= \sum_{i=1}^n \Big\|\b{U}^\top \b{x} - \sum_{\b{x}_j \in \mathcal{N}(\b{x}_i)} s^*_{ij} \b{U}^\top \b{x}_j \Big\|_2^2 \nonumber \\
&= \textbf{tr}(\b{U}^\top\b{X}\b{E}\b{X}^\top \b{U}), \label{equation_optimization_ML_LLE_penalty}
\end{align}
where $\b{X} = [\b{x}_1, \dots, \b{x}_n] \in \mathbb{R}^{d \times n}$ and $\mathbb{R}^{n \times n} \ni \b{E} := (\b{I} - \b{S}^*)^\top (\b{I} - \b{S}^*)$. 
Putting Eq. (\ref{equation_optimization_ML_LLE_penalty}) in Eq. (\ref{equation_optimization_ML_LLE_2}) gives:
\begin{equation}\label{equation_optimization_ML_LLE_3}
\begin{aligned}
& \underset{\b{U}}{\text{maximize}}
& & \frac{\textbf{tr}(\b{U}^\top \b{S}_b \b{U})}{\textbf{tr}\big(\b{U}^\top (\b{S}_w + \lambda \b{X}\b{E}\b{X}^\top) \b{U}\big)}, \\ 
& \text{subject to}
& & \b{U}^\top \b{U} = \b{I}.
\end{aligned}
\end{equation}
The solution to this optimization problem is the generalized eigenvalue problem $(\b{S}_b, \b{S}_w + \lambda \b{X}\b{E}\b{X}^\top)$ where $\b{U}$ has the eigenvectors as its columns \cite{ghojogh2019eigenvalue}. 
According to Eq. (\ref{equation_W_U_UT}), the weight matrix of metric is $\b{W} = \b{U} \b{U}^\top$. 

\subsubsection{Fisher-HSIC Multi-view Metric Learning (FISH-MML)}

Fisher-HSIC Multi-view Metric Learning (FISH-MML) \cite{zhang2018fish} is a metric learning method for multi-view data. In multi-view data, we have different types of features for every data point. For example, an image dataset, which has a descriptive caption for every image, is multi-view. 
Let $\b{X}^{(r)} := \{\b{x}_i^{(r)}\}_{i=1}^n$ be the features of data points in the $r$-th view, $c$ be the number of classes/clusters, and $v$ be the number of views. 
According to Proposition \ref{proposition_metric_learning_projection}, metric learning is the Euclidean distance after projection with $\b{U}$. 
The inter-class scatter of data, in the $r$-th view, is denoted by $\b{S}_b^{(r)}$ and calculated using Eqs. (\ref{equation_inter_cluster_scatter}).
The total scatter of data, in the $r$-th view, is denoted by $\b{S}_t^{(r)}$ and is the covariance of data in that view. 

Inspired by Fisher discriminant analysis \cite{fisher1936use,ghojogh2019fisher}, we maximize the inter-class variances of projected data, $\sum_{r=1}^v \textbf{tr}(\b{U}^\top \b{S}_b^{(r)} \b{U})$, to discriminate the classes after projection. Also, inspired by principal component analysis \cite{ghojogh2019unsupervised}, we maximize the total scatter of projected data, $\sum_{r=1}^v \textbf{tr}(\b{U}^\top \b{S}_t^{(r)} \b{U})$, for expressiveness. 
Moreover, we maximize the dependence of the projected data in all views because various views of a point should be related. A measure of dependence between two random variables $X$ and $Y$ is the Hilbert-Schmidt Independence Criterion (HSIC) \cite{gretton2005measuring} whose empirical estimation is:
\begin{align}\label{equation_HSIC}
\text{HSIC}(X,Y) = \frac{1}{(n-1)^2} \textbf{tr}(\b{K}_x \b{H} \b{K}_y \b{H}),
\end{align}
where $\b{K}_x$ and $\b{K}_y$ are kernel matrices over $X$ and $Y$ variables, respectively, and $\b{H} := \b{I} - (1/n)\b{1}\b{1}^\top$ is the centering matrix. 
The HSIC between projection of two views $\b{X}^{(r)}$ and $\b{X}^{(w)}$ is:
\begin{align*}
&\text{HSIC}(\b{U}^\top \b{X}^{(r)}, \b{U}^\top \b{X}^{(w)}) \overset{(\ref{equation_HSIC})}{\propto} \textbf{tr}(\b{K}^{(r)} \b{H} \b{K}^{(w)} \b{H}) \\
&\overset{(a)}{=} \textbf{tr}(\b{X}^{(r)\top} \b{U} \b{U}^\top \b{X}^{(r)} \b{H} \b{K}^{(w)} \b{H}) \\
&\overset{(b)}{=} \textbf{tr}(\b{U}^\top \b{X}^{(r)} \b{H} \b{K}^{(w)} \b{H} \b{X}^{(r)\top} \b{U})
\end{align*}
where $(a)$ is because we use the linear kernel for $\b{U}^\top \b{X}^{(r)}$, i.e., $\b{K}^{(r)} := (\b{U}^\top \b{X}^{(r)})^\top \b{U}^\top \b{X}^{(r)}$ and $(b)$ is because of the cyclic property of trace. 

In summary, we maximize the summation of inter-class scatter, total scatter, and the dependence of views, which is:
\begin{align*}
&\sum_{r=1}^v \big( \textbf{tr}(\b{U}^\top \b{S}_b^{(r)} \b{U}) + \lambda_1 \textbf{tr}(\b{U}^\top \b{S}_t^{(r)} \b{U}) \\
&~~~~~~~~ + \lambda_2 \textbf{tr}(\b{U}^\top \b{X}^{(r)} \b{H} \b{K}^{(w)} \b{H} \b{X}^{(r)\top} \b{U}) \big) \\
&= \sum_{r=1}^v \textbf{tr}\big(\b{U}^\top (\b{S}_b^{(r)} + \lambda_1 \b{S}_t^{(r)} \\
&~~~~~~~~ + \lambda_2 \b{X}^{(r)} \b{H} \b{K}^{(w)} \b{H} \b{X}^{(r)\top} ) \b{U}\big),
\end{align*}
where $\lambda_1, \lambda_2 >0$ are the regularization parameters. 
The optimization problem is:
\begin{equation}
\begin{aligned}
& \underset{\b{U}}{\text{maximize}}
& & \sum_{r=1}^v \textbf{tr}\big(\b{U}^\top (\b{S}_b^{(r)} + \lambda_1 \b{S}_t^{(r)} \\
& & &~~~~~~ + \lambda_2 \b{X}^{(r)} \b{H} \b{K}^{(w)} \b{H} \b{X}^{(r)\top} ) \b{U}\big) \\
& \text{subject to} 
& & \b{U}^\top \b{U} = \b{I},
\end{aligned}
\end{equation}
whose solution is the eigenvalue problem for $\b{S}_b^{(r)} + \lambda_1 \b{S}_t^{(r)} + \lambda_2 \b{X}^{(r)} \b{H} \b{K}^{(w)} \b{H} \b{X}^{(r)\top}$ where $\b{U}$ has the eigenvectors as its columns \cite{ghojogh2019eigenvalue}. 

\subsection{Spectral Methods Using Hinge Loss}

\subsubsection{Large-Margin Metric Learning}\label{section_large_margin_metric_learning}

$k$-Nearest Neighbors ($k$NN) classification is highly impacted by the metric used for measuring distances between points. Hence, we can use metric learning for improving the performance of $k$NN classification \cite{weinberger2006distance,weinberger2009distance}. 
Let $y_{ij}=1$ if $(\b{x}_i, \b{x}_j) \in \mathcal{S}$ and $y_{ij}=0$ if $(\b{x}_i, \b{x}_j) \in \mathcal{D}$. Moreover, we consider $k$NN for similar points where we find the nearest neighbors of every point among the similar points to that point. Let $\eta_{ij} = 1$ if $(\b{x}_i, \b{x}_j) \in \mathcal{S}$ and $\b{x}_j$ is among $k$NN of $\b{x}_i$. Otherwise, $\eta_{ij} = 0$. 
The optimization problem for finding the best weigh matrix in the metric can be \cite{weinberger2006distance,weinberger2009distance}:
\begin{equation}\label{equation_optimization_largeMarginMetricLearning}
\begin{aligned}
& \underset{\b{W}}{\text{minimize}}
& & \sum_{i=1}^n \sum_{j=1}^n \eta_{ij} \|\b{x}_i - \b{x}_j\|_{\b{W}}^2 \\
& & &+ \lambda \sum_{i=1}^n \sum_{j=1}^n \sum_{l=1}^n \eta_{ij} (1 - y_{il})\Big[1 \\
& & &~~~~~~~~~~~ + \|\b{x}_i - \b{x}_j\|_{\b{W}}^2 - \|\b{x}_i - \b{x}_l\|_{\b{W}}^2\Big]_+, \\
& \text{subject to} 
& & \b{W} \succeq \b{0}, 
\end{aligned}
\end{equation}
where $\lambda>0$ is the regularization parameter, and $[.]_+ := \max(.,0)$ is the standard Hinge loss. 

The first term in Eq. (\ref{equation_optimization_largeMarginMetricLearning}) pushes the similar neighbors close to each other. The second term in this equation is the triplet loss \cite{schroff2015facenet} which pushes the similar neighbors to each other and pulls the dissimilar points away from one another. This is because minimizing $\|\b{x}_i - \b{x}_j\|_{\b{W}}^2$ for $\eta_{ij}=1$ decreases the distances of similar neighbors. Moreover, minimizing $- \|\b{x}_i - \b{x}_l\|_{\b{W}}^2$ for $1-y_{il}=1$ (i.e., $y_{il} = 0$) is equivalent to maximizing $\|\b{x}_i - \b{x}_l\|_{\b{W}}^2$ which maximizes the distances of dissimilar points. 
Minimizing the whole second term forces the distances of dissimilar points to be at least greater that the distances of similar points up to a threshold (or margin) of one. We can change the margin by changing $1$ in this term with some other positive number. 
In this sense, this loss is closely related to the triplet loss for neural networks \cite{schroff2015facenet} (see Section \ref{section_triplet_loss}).

Eq. (\ref{equation_optimization_largeMarginMetricLearning}) can be restated using slack variables $\xi_{ijl}, \forall i,j,l \in \{1, \dots, n\}$. 
The Hinge loss in term $[1 + \|\b{x}_i - \b{x}_j\|_{\b{W}}^2 - \|\b{x}_i - \b{x}_l\|_{\b{W}}^2]_+$ requires to have:
\begin{align*}
&1 + \|\b{x}_i - \b{x}_j\|_{\b{W}}^2 - \|\b{x}_i - \b{x}_l\|_{\b{W}}^2 \geq 0 \\
&\implies \|\b{x}_i - \b{x}_l\|_{\b{W}}^2 - \|\b{x}_i - \b{x}_j\|_{\b{W}}^2 \leq 1.
\end{align*}
If $\xi_{ijl} \geq 0$, we can have sandwich the term $\|\b{x}_i - \b{x}_l\|_{\b{W}}^2 - \|\b{x}_i - \b{x}_j\|_{\b{W}}^2$ in order to minimize it:
\begin{align*}
& 1 - \xi_{ijl} \leq \|\b{x}_i - \b{x}_l\|_{\b{W}}^2 - \|\b{x}_i - \b{x}_j\|_{\b{W}}^2 \leq 1.
\end{align*}
Hence, we can replace the term of Hinge loss with the slack variable.
Therefore, Eq. (\ref{equation_optimization_largeMarginMetricLearning}) can be restated as \cite{weinberger2006distance,weinberger2009distance}:
\begin{equation}\label{equation_optimization_largeMarginMetricLearning_2}
\begin{aligned}
& \underset{\b{W},\, \{\xi_{ijl}\}}{\text{minimize}}
& & \sum_{i=1}^n \sum_{j=1}^n \eta_{ij} \|\b{x}_i - \b{x}_j\|_{\b{W}}^2 \\
& & &+ \lambda \sum_{i=1}^n \sum_{j=1}^n \sum_{l=1}^n \eta_{ij} (1 - y_{il})\, \xi_{ijl} \\
& \text{subject to} 
& & \|\b{x}_i - \b{x}_l\|_{\b{W}}^2 - \|\b{x}_i - \b{x}_j\|_{\b{W}}^2 \geq 1 - \xi_{ijl}, \\
& & &~~~~~~~~~ \forall (\b{x}_i, \b{x}_j) \in \mathcal{S}, \eta_{ij}=1, (\b{x}_i, \b{x}_l) \in \mathcal{D}, \\
& & & \xi_{ijl} \geq 0, \\
& & & \b{W} \succeq \b{0}.
\end{aligned}
\end{equation}
This optimization problem is a semidefinite programming which can be solved iteratively using interior-point method \cite{ghojogh2021kkt}. 

This problem uses triplets of similar and dissimilar points, i.e., $\{\b{x}_i, \b{x}_j, \b{x}_l\}$ where $(\b{x}_i, \b{x}_j) \in \mathcal{S}$, $\eta_{ij}=1$, $(\b{x}_i, \b{x}_l) \in \mathcal{D}$. 
Hence, triplets should be extracted randomly from the dataset for this metric learning. 
Solving semidefinite programming is usually slow and time-consuming especially for large datasets.
Triplet minimizing can be used for finding the best triplets for learning \cite{poorheravi2020acceleration}. For example, the similar and dissimilar points with smallest and/or largest distances can be used to limit the number of triplets \cite{sikaroudi2020offline}. 
The reader can also refer to for Lipschitz analysis in large margin metric learning \cite{dong2019metric}. 

\subsubsection{Imbalanced Metric Learning (IML)}

Imbalanced Metric Learning (IML) \cite{gautheron2019metric} is a spectral metric learning method which handles imbalanced classes by further decomposition of the similar set $\mathcal{S}$ and dissimilar set $\mathcal{D}$. 
Suppose the dataset is composed of two classes $c_0$ and $c_1$. Let $\mathcal{S}_0$ and $\mathcal{S}_1$ denote the similarity sets for classes $c_0$ and $c_1$, respectively. We define pairs of points taken randomly from these sets to have similarity and dissimilarity sets \cite{gautheron2019metric}:
\begin{align*}
& \text{Sim}_0 \subseteq \mathcal{S}_0 \times \mathcal{S}_0, \quad \text{Sim}_1 \subseteq \mathcal{S}_1 \times \mathcal{S}_1, \\
& \text{Dis}_0 \subseteq \mathcal{S}_0 \times \mathcal{S}_1, \quad \text{Dis}_1 \subseteq \mathcal{S}_1 \times \mathcal{S}_0.
\end{align*}
The optimization problem of IML is:
\begin{align}
& \underset{\b{W}}{\text{minimize}}
~~~~~~ \frac{\lambda}{4|\text{Sim}_0|} \sum_{(\b{x}_i, \b{x}_j) \in \text{Sim}_0} \big[\|\b{x}_i - \b{x}_j\|_{\b{W}}^2 - 1\big]_+ \nonumber\\
&~~~~~~~~~ + \frac{\lambda}{4|\text{Sim}_1|} \sum_{(\b{x}_i, \b{x}_j) \in \text{Sim}_1} \big[\|\b{x}_i - \b{x}_j\|_{\b{W}}^2 - 1\big]_+ \nonumber
\end{align}
\begin{align}
&~~~~~~~~~ + \frac{1-\lambda}{4|\text{Dis}_0|} \sum_{(\b{x}_i, \b{x}_j) \in \text{Dis}_0} \big[\!-\|\b{x}_i - \b{x}_j\|_{\b{W}}^2 +1+m \big]_+ \nonumber \\
&~~~~~~~~~ + \frac{1-\lambda}{4|\text{Dis}_1|} \sum_{(\b{x}_i, \b{x}_j) \in \text{Dis}_1} \big[\!-\|\b{x}_i - \b{x}_j\|_{\b{W}}^2 +1+m \big]_+ \nonumber \\
&~~~~~~~~~ + \gamma \|\b{W} - \b{I}\|_F^2 \nonumber \\
& \text{subject to} ~~~~ \b{W} \succeq \b{0},
\end{align}
where $|.|$ denotes the cardinality of set, $[.]_+ := \max(.,0)$ is the standard Hinge loss, $m>0$ is the desired margin between classes, and $\lambda \in [0,1]$ and $\gamma>0$ are the regularization parameters.
This optimization pulls the similar points to have distance less than $1$ and pushes the dissimilar points away to have distance more than $m+1$. Also, the regularization term $\|\b{W} - \b{I}\|_F^2$ tries to make the weight matrix is the generalized Mahalanobis distance close to identity for simplicity of metric. In this way, the metric becomes close to the Euclidean distance, preventing overfitting, while satisfying the desired margins in distances.

\subsection{Locally Linear Metric Adaptation (LLMA)}

Another method for metric learning is Locally Linear Metric Adaptation (LLMA) \cite{chang2004locally}. 
LLMA performs nonlinear and linear transformations globally and locally, respectively. 
For every point $\b{x}_l$, we consider its $k$ nearest (similar) neighbors. 
The local linear transformation for every point $\b{x}_l$ is:
\begin{align}
\mathbb{R}^d \ni \b{y}_l := \b{x}_l + \b{B} \b{\pi}_i,
\end{align}
where $\b{B} \in \mathbb{R}^{d \times k}$ is the matrix of biases, $\mathbb{R}^k \ni \b{\pi}_i = [\pi_{i1}, \dots, \pi_{ik}]^\top$, and $\pi_{ij} := \exp(-\|\b{x}_i - \b{x}_j\|_2^2 / 2 w^2)$ is a Gaussian measure of similarity between $\b{x}_i$ and $\b{x}_j$. The variables $\b{B}$ and $w$ are found by optimization. 

In this method, we minimize the distances between the linearly transformed similar points while the distances of similar points are tried to be preserved after the transformation:
\begin{equation}\label{equation_ML_optimization_spectral_LLMA}
\begin{aligned}
& \underset{\{\b{y}_i\}_{i=1}^n, \b{B}, w, \sigma}{\text{minimize}}
& & \sum_{(\b{y}_i, \b{y}_j) \in \mathcal{S}} \|\b{y}_i - \b{y}_j\|_2^2 \\
& & & + \lambda\, \sum_{i=1}^n \sum_{j=1}^n (q_{ij} - d_{ij})^2 \exp(\frac{-d_{ij}^2}{\sigma^2}),
\end{aligned}
\end{equation}
where $\lambda>0$ is the regularization parameter, $\sigma_2^2$ is the variance to be optimized, and $d_{ij} := \|\b{x}_i - \b{x}_j\|_2$ and $q_{ij} := \|\b{y}_i - \b{y}_j\|_2$. 
This objective function is optimized iteratively until convergence. 

\subsection{Relevant to Support Vector Machine}\label{section_relation_to_SVM}

Inspired by $\nu$-Support Vector Machine ($\nu$-SVM) \cite{scholkopf2000new}, the weight matrix in the generalized Mahalanobis distance can be obtained as \cite{tsang2003distance}: 
\begin{equation}
\begin{aligned}
& \underset{\b{W}, \gamma, \{\xi_{il}\}}{\text{minimize}}
& & \frac{1}{2} \|\b{W}\|_2^2 + \frac{\lambda_1}{|\mathcal{S}|} \sum_{(\b{x}_i, \b{x}_j) \in \mathcal{S}} \|\b{x}_i - \b{x}_j\|_{\b{W}}^2 \\
& & & + \lambda_2 \Big( \nu \gamma + \frac{1}{|\mathcal{D}|} \sum_{(\b{x}_i, \b{x}_l) \in \mathcal{D}} \xi_{il} \Big) \\
& \text{subject to} 
& & \b{W} \succeq \b{0}, \\
& & & \gamma \geq 0, \\
& & & \|\b{x}_i - \b{x}_j\|_{\b{W}}^2 - \|\b{x}_i - \b{x}_l\|_{\b{W}}^2 \geq \gamma - \xi_{il}, \\
& & &~~~~~~~~~~~~~~~~~~ \forall (\b{x}_i, \b{x}_j) \in \mathcal{S}, (\b{x}_i, \b{x}_l) \in \mathcal{D}, \\
& & & \xi_{il} \geq 0, \quad \forall (\b{x}_i, \b{x}_l) \in \mathcal{D},
\end{aligned}
\end{equation}
where $\lambda_1, \lambda_2 > 0$ are regularization parameters. 
Using KKT conditions and Lagrange multipliers \cite{ghojogh2021kkt}, the dual optimization problem is (see \cite{tsang2003distance} for derivation):
\begin{equation}\label{equation_relation_to_SVM_dual_optimization}
\begin{aligned}
& \underset{\{\alpha_{ij}\}}{\text{maximize}}
~~~~~ \sum_{(\b{x}_i, \b{x}_j) \in \mathcal{D}} \alpha_{ij} (\b{x}_i - \b{x}_j)^\top \b{W} (\b{x}_i - \b{x}_j) \\
& -\frac{1}{2} \sum_{(\b{x}_i, \b{x}_j) \in \mathcal{D}} \sum_{(\b{x}_k, \b{x}_l) \in \mathcal{D}} \alpha_{ij} \alpha_{kl} ((\b{x}_i - \b{x}_j)^\top (\b{x}_k - \b{x}_l))^2 \\
& + \frac{\lambda_1}{|\mathcal{S}|} \sum_{(\b{x}_i, \b{x}_j) \in \mathcal{D}} \sum_{(\b{x}_k, \b{x}_l) \in \mathcal{S}} \alpha_{ij} ((\b{x}_i - \b{x}_j)^\top (\b{x}_k - \b{x}_l))^2 \\
& \text{subject to} 
~~~~~~~~~~ \frac{1}{\lambda_2} \sum_{(\b{x}_i, \b{x}_j) \in \mathcal{D}} \alpha_{ij} \geq \nu, \\
& ~~~~~~~~~~~~~~~~~~~~~~~~~~ \alpha_{ij} \in [0, \frac{\lambda_2}{|\mathcal{D}|}],
\end{aligned}
\end{equation}
where $\{\alpha_{ij}\}$ are the dual variables. This problem is a quadratic programming problem and can be solved using optimization solvers. 

\subsection{Relevant to Multidimensional Scaling}

Multidimensional Scaling (MDS) tries to preserve the distance after projection onto its subspace \cite{cox2008multidimensional,ghojogh2020multidimensional}. We saw in Proposition \ref{proposition_metric_learning_projection} that metric learning can be seen as projection onto the column space of $\b{U}$ where $\b{W} = \b{U} \b{U}^\top$. Inspired by MDS, we can learn a metric which preserves the distances between points after projection onto the subspace of metric \cite{zhang2003parametric}:
\begin{equation}
\begin{aligned}
& \underset{\b{W}}{\text{minimize}}
& & \sum_{i=1}^n \sum_{j=1}^n (\|\b{x}_i - \b{x}_j\|_2^2 - \|\b{x}_i - \b{x}_j\|_{\b{W}}^2)^2 \\
& \text{subject to} & & \b{W} \succeq \b{0}. 
\end{aligned}
\end{equation}
It can be solved using any optimization method \cite{ghojogh2021kkt}. 

\subsection{Kernel Spectral Metric Learning}

Let $k(\b{x}_i, \b{x}_j) := \b{\phi}(\b{x}_i)^\top \b{\phi}(\b{x}_j)$ be the kernel function over data points $\b{x}_i$ and $\b{x}_j$, where $\b{\phi}(.)$ is the pulling function to the Reproducing Kernel Hilbert Space (RKHS) \cite{ghojogh2021reproducing}. 
Let $\mathbb{R}^{n \times n} \ni \b{K} := \b{\Phi}(\b{X})^\top \b{\Phi}(\b{X})$ be the kernel matrix of data. 
In the following, we introduce some of the kernel spectral metric learning methods. 

\subsubsection{Using Eigenvalue Decomposition of Kernel}

One of the kernel methods for spectral metric learning is \cite{yeung2007kernel}. It has two approaches; we explain one of its approaches here. 
The eigenvalue decomposition of the kernel matrix is:
\begin{align}\label{equation_kernel_eigenvalue_decomposition}
\b{K} = \sum_{r=1}^p \beta_r^2 \b{\alpha}_r \b{\alpha}_r^\top \overset{(a)}{=} \sum_{r=1}^p \beta_r^2 \b{K}_r
\end{align}
where $p$ is the rank of kernel matrix, $\beta_r^2$ is the non-negative $r$-th eigenvalue (because $\b{K} \succeq \b{0}$), $\b{\alpha}_r \in \mathbb{R}^n$ is the $r$-th eigenvector, and $(a)$ is because we define $\b{K}_r := \b{\alpha}_r \b{\alpha}_r^\top$.
We can consider $\{\beta_r^2\}_{r=1}^p$ as learnable parameters and not the eigenvalues. Hence, we learn $\{\beta_r^2\}_{r=1}^p$ for the sake of metric learning. 
The distance metric of pulled data points to RKHS is \cite{scholkopf2001kernel,ghojogh2021reproducing}:
\begin{equation}\label{equation_distance_in_RKHS}
\begin{aligned}
\|\b{\phi}(\b{x}_i) - &\b{\phi}(\b{x}_j)\|_2^2 \\
&= k(\b{x}_i, \b{x}_i) + k(\b{x}_j, \b{x}_j) - 2 k(\b{x}_i, \b{x}_j).
\end{aligned}
\end{equation}
In metric learning, we want to make the distances of similar points small; hence the objective to be minimized is:
Hence, we have:
\begin{align*}
& \sum_{(\b{x}_i, \b{x}_j) \in \mathcal{S}} \|\b{\phi}(\b{x}_i) - \b{\phi}(\b{x}_j)\|_2^2 \\
&= \sum_{(\b{x}_i, \b{x}_j) \in \mathcal{S}} k(\b{x}_i, \b{x}_i) + k(\b{x}_j, \b{x}_j) - 2 k(\b{x}_i, \b{x}_j) \\
&\overset{(\ref{equation_kernel_eigenvalue_decomposition})}{=} \sum_{r=1}^p \beta_r^2 \sum_{(\b{x}_i, \b{x}_j) \in \mathcal{S}} k_r(\b{x}_i, \b{x}_i) + k_r(\b{x}_j, \b{x}_j) \\
&~~~~~~~~~~~~~~~~~~~~~~~~~~~~ - 2 k_r(\b{x}_i, \b{x}_j) \\
&\overset{(a)}{=} \sum_{r=1}^p \beta_r^2 \sum_{(\b{x}_i, \b{x}_j) \in \mathcal{S}} (\b{e}_i - \b{e}_j)^\top \b{K}_r (\b{e}_i - \b{e}_j) \\
&\overset{(b)}{=} \sum_{r=1}^p \beta_r^2 f_r \overset{(c)}{=} \b{\beta}^\top \b{D}_{\mathcal{S}} \b{\beta},
\end{align*}
where $(a)$ is because $\b{e}_i$ is the vector whose $i$-th element is one and other elements are zero, $(b)$ is because we define $f_r := \sum_{(\b{x}_i, \b{x}_j) \in \mathcal{S}} (\b{e}_i - \b{e}_j)^\top \b{K}_r (\b{e}_i - \b{e}_j)$, and $(c)$ is because we define $\b{D}_{\mathcal{S}} := \textbf{diag}([f_1, \dots, f_p]^\top)$ and $\b{\beta} := [\beta_1, \dots, \beta_p]^\top$. 
By adding a constraint on the summation of $\{\beta_r^2\}_{r=1}^p$, the optimization problem for metric learning is:
\begin{equation}
\begin{aligned}
& \underset{\b{\beta}}{\text{minimize}}
& & \b{\beta}^\top \b{D}_{\mathcal{S}} \b{\beta} \\
& \text{subject to} & & \b{1}^\top \b{\beta} = 1.
\end{aligned}
\end{equation}
This optimization is similar to the form of one of the optimization problems in locally linear embedding \cite{roweis2000nonlinear,ghojogh2020locally}.
The Lagrangian for this problem is \cite{ghojogh2021kkt}:
\begin{align*}
&\mathcal{L} = \b{\beta}^\top \b{D}_{\mathcal{S}} \b{\beta} - \lambda (\b{1}^\top \b{\beta} - 1),
\end{align*}
where $\lambda$ is the dual variable. 
Taking derivative of the Lagrangian w.r.t. the variables and setting to zero gives:
\begin{align*}
& \frac{\partial \mathcal{L}}{\partial \b{\beta}} = 2 \b{D}_{\mathcal{S}} \b{\beta} - \lambda \b{1} \overset{\text{set}}{=} 0 \implies \b{\beta} = \frac{\lambda}{2} \b{D}_{\mathcal{S}}^{-1} \b{1}, \\
& \frac{\partial \mathcal{L}}{\partial \lambda} = \b{1}^\top \b{\beta} - 1 \overset{\text{set}}{=} 0 \implies \b{1}^\top \b{\beta} = 1, \\
& \implies \frac{\lambda}{2} \b{1}^\top \b{D}_{\mathcal{S}}^{-1} \b{1} = 1 \implies \lambda = \frac{2}{\b{1}^\top \b{D}_{\mathcal{S}}^{-1} \b{1}} \\
& \implies \b{\beta} = \frac{\b{D}_{\mathcal{S}}^{-1} \b{1}}{\b{1}^\top \b{D}_{\mathcal{S}}^{-1} \b{1}}. 
\end{align*}
Hence, the optimal $\b{\beta}$ is obtained for metric learning in the RKHS where the distances of similar points is smaller than in the input Euclidean space. 

\subsubsection{Regularization by Locally Linear Embedding}

The method \cite{baghshah2009semi}, which was introduced in Section \ref{section_spectral_ML_regularization_by_LLE}, can be kernelized. Recall that this method used locally linear embedding for regularization. 
According to the representation theory \cite{ghojogh2021reproducing}, the solution in the RKHS can be represented as a linear combination of all pulled data points to RKHS:
\begin{align}\label{equation_kernelization_representation_theory}
\b{\Phi}(\b{U}) = \b{\Phi}(\b{X}) \b{T},
\end{align}
where $\b{X} = [\b{x}_1, \dots, \b{x}_n]$ and $\b{T} \in \mathbb{R}^{n \times p}$ ($p$ is the dimensionality of subspace) is the coefficients. 

We define the similarity and dissimilarity adjacency matrices as:
\begin{equation}
\begin{aligned}
& \b{A}_S(i,j) := 
\left\{
    \begin{array}{ll}
        1 & \mbox{if } (\b{x}_i, \b{x}_j) \in \mathcal{S}, \\
        0 & \mbox{otherwise.}
    \end{array}
\right. \\
& \b{A}_D(i,j) := 
\left\{
    \begin{array}{ll}
        1 & \mbox{if } (\b{x}_i, \b{x}_j) \in \mathcal{D}, \\
        0 & \mbox{otherwise.}
    \end{array}
\right. 
\end{aligned}
\end{equation}
Let $\b{L}_w$ and $\b{L}_b$ denote the Laplacian matrices \cite{ghojogh2021laplacian} of these adjacency matrices:
\begin{align*}
& \b{L}_w := \b{D}_S - \b{A}_S(i,j), \quad \b{L}_b := \b{D}_D - \b{A}_D(i,j), 
\end{align*}
where $\b{D}_S(i,i) := \sum_{j=1}^n \b{A}_S(i,j)$ and $\b{D}_D(i,i) := \sum_{j=1}^n \b{A}_D(i,j)$ are diagonal matrices. 
The terms in the objective of Eq. (\ref{equation_optimization_ML_LLE_3}) can be restated using Laplacian of adjacency matrices rather than the scatters:
\begin{equation}\label{equation_optimization_ML_LLE_4}
\begin{aligned}
& \underset{\b{U}}{\text{maximize}}
& & \frac{\textbf{tr}(\b{U}^\top \b{L}_b \b{U})}{\textbf{tr}\big(\b{U}^\top (\b{L}_w + \lambda \b{X}\b{E}\b{X}^\top) \b{U}\big)}, \\ 
& \text{subject to}
& & \b{U}^\top \b{U} = \b{I}.
\end{aligned}
\end{equation}
According to the representation theory, the pulled Laplacian matrices to RKHS are $\b{\Phi}(\b{L}_b) = \b{\Phi}(\b{X}) \b{L}_b \b{\Phi}(\b{X})^\top$ and $\b{\Phi}(\b{L}_w) = \b{\Phi}(\b{X}) \b{L}_w \b{\Phi}(\b{X})^\top$. 
Hence, the numerator of Eq. (\ref{equation_optimization_ML_LLE_3}) in RKHS becomes:
\begin{align*}
&\textbf{tr}(\b{\Phi}(\b{U})^\top \b{\Phi}(\b{X}) \b{L}_b \b{\Phi}(\b{X})^\top \b{\Phi}(\b{U})) \\
&= \textbf{tr}\big(\b{T}^\top \b{\Phi}(\b{X})^\top \b{\Phi}(\b{X}) \b{L}_b \b{\Phi}(\b{X})^\top \b{\Phi}(\b{X}) \b{T}\big) \\
&\overset{(a)}{=} \textbf{tr}\big(\b{T}^\top \b{K}_x \b{L}_b \b{K}_x \b{T}\big),
\end{align*}
where $(a)$ is because of the kernel trick \cite{ghojogh2021reproducing}, i.e., 
\begin{align}\label{equation_Kernel_X}
\b{K}_x := \b{\Phi}(\b{X})^\top \b{\Phi}(\b{X}).
\end{align}
similarly, the denominator of Eq. (\ref{equation_optimization_ML_LLE_3}) in RKHS becomes:
\begin{align*}
& \textbf{tr}\big(\b{\Phi}(\b{U})^\top (\b{\Phi}(\b{X}) \b{L}_w \b{\Phi}(\b{X})^\top + \lambda \b{\Phi}(\b{X})\b{E}\b{\Phi}(\b{X})^\top) \b{\Phi}(\b{U})\big) \\
& \overset{(\ref{equation_kernelization_representation_theory})}{=} \textbf{tr}\big(\b{T}^\top \b{\Phi}(\b{X})^\top (\b{\Phi}(\b{X}) \b{L}_w \b{\Phi}(\b{X})^\top \\
&~~~~~~~~~~~~~~~~~~~~~~~~~~~~~~~ + \lambda \b{\Phi}(\b{X})\b{E}\b{\Phi}(\b{X})^\top) \b{\Phi}(\b{X}) \b{T}\big) \\
&\overset{(a)}{=} \textbf{tr}\big(\b{T}^\top \b{K}_x (\b{L}_w + \lambda \b{E}) \b{K}_x \b{T} \big),
\end{align*}
where $(a)$ is because of the kernel trick \cite{ghojogh2021reproducing}.
The constrain in RKHS becomes:
\begin{align*}
& \b{\Phi}(\b{U})^\top \b{\Phi}(\b{U}) \overset{(\ref{equation_kernelization_representation_theory})}{=} \b{T}^\top \b{\Phi}(\b{X})^\top \b{\Phi}(\b{X}) \b{T} \overset{(a)}{=} \b{T}^\top \b{K}_x \b{T},
\end{align*}
where $(a)$ is because of the kernel trick \cite{ghojogh2021reproducing}.
The Eq. (\ref{equation_optimization_ML_LLE_3}) in RKHS is:
\begin{equation}\label{equation_optimization_ML_LLE_3_kernel}
\begin{aligned}
& \underset{\b{T}}{\text{maximize}}
& & \frac{\textbf{tr}\big(\b{T}^\top \b{K}_x \b{L}_b \b{K}_x \b{T}\big)}{\textbf{tr}\big(\b{T}^\top \b{K}_x (\b{L}_w + \lambda \b{E}) \b{K}_x \b{T} \big)}, \\ 
& \text{subject to}
& & \b{T}^\top \b{K}_x \b{T} = \b{I}.
\end{aligned}
\end{equation}
It can be solved using projected gradient method \cite{ghojogh2021kkt} to find the optimal $\b{T}$. Then, the projected data onto the subspace of metric is found as:
\begin{align}
\b{\Phi}(\b{U})^\top \b{\Phi}(\b{X}) \overset{(\ref{equation_kernelization_representation_theory})}{=} \b{T}^\top \b{\Phi}(\b{X})^\top \b{\Phi}(\b{X}) \overset{(a)}{=} \b{T}^\top \b{K}_x,
\end{align}
where $(a)$ is because of the kernel trick \cite{ghojogh2021reproducing}.

\subsubsection{Regularization by Laplacian}

Another kernel spectral metric learning method is \cite{baghshah2010kernel} whose optimization is in the form:
\begin{equation}\label{equation_optimization_kernel_soectral_ML_LaplacianRegularization}
\begin{aligned}
& \underset{\b{\Phi}(\b{X})}{\text{minimize}}
& & \frac{1}{|\mathcal{S}|} \sum_{(\b{x}_i, \b{x}_j) \in \mathcal{S}} \!\!\!\!\! \|\b{\phi}(\b{x}_i) - \b{\phi}(\b{x}_j)\|_2^2 + \lambda \Omega(\b{\Phi}(\b{X})), \\ 
& \text{subject to}
& & \|\b{\phi}(\b{x}_i) - \b{\phi}(\b{x}_j)\|_2^2 \geq c, \quad \forall (\b{x}_i, \b{x}_j) \in \mathcal{D},
\end{aligned}
\end{equation}
where $c>0$ is a hyperparameter and $\lambda>0$ is the regularization parameter.
Consider the $k$NN graph of data with an adjacency matrix $\b{A} \in \mathbb{R}^{n \times n}$ whose $(i,j)$-th element is one if $\b{x}_i$ and $\b{x}_j$ are neighbors and is zero otherwise. Let the Laplacian matrix of this adjacency matrix be denoted by $\b{L}$. 

In this method, the regularization term $\Omega(\b{\Phi}(\b{X}))$ can be the objective of Laplacian eigenmap \cite{ghojogh2021laplacian}:
\begin{align*}
\Omega(\b{\Phi}(\b{X})) := &\frac{1}{2n} \sum_{i=1}^n \sum_{j=1}^n \|\b{\phi}(\b{x}_i) - \b{\phi}(\b{x}_j)\|_2^2 \b{A}(i,j) \\
&\overset{(a)}{=} \textbf{tr}(\b{\Phi}(\b{X}) \b{L} \b{\Phi}(\b{X})^\top) \\
&\overset{(b)}{=} \textbf{tr}(\b{L} \b{\Phi}(\b{X})^\top \b{\Phi}(\b{X})) \overset{(c)}{=} \textbf{tr}(\b{L} \b{K}_x),
\end{align*}
where $(a)$ is according to \cite{belkin2001laplacian} (see \cite{ghojogh2021laplacian} for proof), $(b)$ is because of the cyclic property of trace, and $(c)$ is because of the kernel trick \cite{ghojogh2021reproducing}. 
Moreover, according to Eq. (\ref{equation_distance_in_RKHS}), the distance in RKHS is $\|\b{\phi}(\b{x}_i) - \b{\phi}(\b{x}_j)\|_2^2 = k(\b{x}_i, \b{x}_i) + k(\b{x}_j, \b{x}_j) - 2 k(\b{x}_i, \b{x}_j)$. 
We can simplify the term in Eq. (\ref{equation_optimization_kernel_soectral_ML_LaplacianRegularization}) as:
\begin{align*}
&\frac{1}{|\mathcal{S}|} \sum_{(\b{x}_i, \b{x}_j) \in \mathcal{S}} \!\!\!\!\! \|\b{\phi}(\b{x}_i) - \b{\phi}(\b{x}_j)\|_2^2 \\
&\overset{(\ref{equation_distance_in_RKHS})}{=} \frac{1}{|\mathcal{S}|} \sum_{(\b{x}_i, \b{x}_j) \in \mathcal{S}} \!\!\!\!\! k(\b{x}_i, \b{x}_i) + k(\b{x}_j, \b{x}_j) - 2 k(\b{x}_i, \b{x}_j) \\
&= \frac{1}{|\mathcal{S}|} \sum_{(\b{x}_i, \b{x}_j) \in \mathcal{S}} \!\!\!\!\! (\b{e}_i - \b{e}_j)^\top \b{K}_x (\b{e}_i - \b{e}_j) \overset{(a)}{=} \textbf{tr}(\b{E}_{\mathcal{S}} \b{K}_x), 
\end{align*}
where $(a)$ is because the scalar is equal to its trace and we use the cyclic property of trace, i.e., $(\b{e}_i - \b{e}_j)^\top \b{K}_x (\b{e}_i - \b{e}_j) = \textbf{tr}((\b{e}_i - \b{e}_j)^\top \b{K}_x (\b{e}_i - \b{e}_j)) = \textbf{tr}((\b{e}_i - \b{e}_j) (\b{e}_i - \b{e}_j)^\top \b{K}_x)$, and then we define $\b{E}_{\mathcal{S}} := (1 / |\mathcal{S}|) \sum_{(\b{x}_i, \b{x}_j) \in \mathcal{S}} (\b{e}_i - \b{e}_j) (\b{e}_i - \b{e}_j)^\top$.

Hence, Eq. (\ref{equation_optimization_kernel_soectral_ML_LaplacianRegularization}) can be restated as:
\begin{equation}\label{equation_optimization_kernel_soectral_ML_LaplacianRegularization_2}
\begin{aligned}
& \underset{\b{K}_x}{\text{minimize}}
& & \textbf{tr}(\b{E}_{\mathcal{S}} \b{K}_x) + \lambda\, \textbf{tr}(\b{L} \b{K}_x), \\ 
& \text{subject to}
& & k(\b{x}_i, \b{x}_i) + k(\b{x}_j, \b{x}_j) - 2 k(\b{x}_i, \b{x}_j) \geq c, \\
& & & ~~~~~~~~~~~~~~ \quad \forall (\b{x}_i, \b{x}_j) \in \mathcal{D}, \\
& & & \b{K}_x \succeq \b{0},
\end{aligned}
\end{equation}
noticing that the kernel matrix is positive semidefinite. 
This problem is a Semidefinite Programming (SDP) problem and can be solved using the interior point method \cite{ghojogh2021kkt}.
The optimal kernel matrix can be decomposed using eigenvalue decomposition to find the embedding of data in RKHS, i.e., $\b{\Phi}(\b{X})$:
\begin{align*}
\b{K}_x = \b{V}^\top \b{\Sigma} \b{V} = \b{V}^\top \b{\Sigma}^{(1/2} \b{\Sigma}^{(1/2)} \b{V} \overset{(\ref{equation_Kernel_X})}{=} \b{\Phi}(\b{X})^\top \b{\Phi}(\b{X}),
\end{align*}
where $\b{V}$ and $\b{\Sigma}$ are the eigenvectors and eigenvalues, $(a)$ is because $\b{K}_x \succeq \b{0}$ so its eigenvalues are non-negative can be taken second root of, and $(b)$ is because we get $\b{\Phi}(\b{X}) := \b{\Sigma}^{(1/2)} \b{V}$. 

\subsubsection{Kernel Discriminative Component Analysis}

Here, we explain the kernel version of DCA \cite{hoi2006learning} which was introduced in Section \ref{section_DCA}. 

\begin{lemma}
The generalized Mahalanobis distance metric in RKHS, with the pulled weight matrix to RKHS denoted by $\b{\Phi}(\b{W})$, can be seen as measuring the Euclidean distance in RKHS after projection onto the column subspace of $\b{T}$ where $\b{T}$ is the coefficient matrix in Eq. (\ref{equation_kernelization_representation_theory}). In other words:
\begin{equation}\label{equation_Mahalanobis_distance_in_RKHS}
\begin{aligned}
\|\b{\phi}(\b{x}_i) - &\,\b{\phi}(\b{x}_j)\|_{\b{\Phi}(\b{W})}^2 = \|\b{k}_i - \b{k}_j\|_{\b{T}\b{T}^\top}^2 \\
&= (\b{k}_i - \b{k}_j)^\top \b{T}\b{T}^\top (\b{k}_i - \b{k}_j),
\end{aligned}
\end{equation}
where $\b{k}_i := \b{k}(\b{X}, \b{x}_i) = \b{\Phi}(\b{X})^\top \b{\phi}(\b{x}_i) = [k(\b{x}_1, \b{x}_i), \dots, k(\b{x}_n, \b{x}_i)]^\top \in \mathbb{R}^n$ is the kernel vector between $\b{X}$ and $\b{x}_i$.
\end{lemma}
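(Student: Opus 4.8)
The plan is to mirror the input-space argument of Proposition \ref{proposition_metric_learning_projection} entirely inside the RKHS, replacing each object by its pulled counterpart and then invoking the representer theorem to collapse everything down to finite-dimensional kernel vectors. First I would write out the left-hand side directly from the definition of the generalized Mahalanobis distance, Eq. (\ref{equation_generalized_Mahalanobis_distance}), applied to the pulled points, namely $\|\b{\phi}(\b{x}_i) - \b{\phi}(\b{x}_j)\|_{\b{\Phi}(\b{W})}^2 = (\b{\phi}(\b{x}_i) - \b{\phi}(\b{x}_j))^\top \b{\Phi}(\b{W}) (\b{\phi}(\b{x}_i) - \b{\phi}(\b{x}_j))$.

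Next, because $\b{\Phi}(\b{W})$ is positive semi-definite in the RKHS, I would factor it exactly as in Eq. (\ref{equation_W_U_UT}), i.e. $\b{\Phi}(\b{W}) = \b{\Phi}(\b{U}) \b{\Phi}(\b{U})^\top$. This lets me rewrite the quadratic form as the squared Euclidean norm of $\b{\Phi}(\b{U})^\top (\b{\phi}(\b{x}_i) - \b{\phi}(\b{x}_j))$, which is precisely the statement of Proposition \ref{proposition_metric_learning_projection} lifted one level up into feature space.

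The key substitution is the representer theory, Eq. (\ref{equation_kernelization_representation_theory}): writing $\b{\Phi}(\b{U}) = \b{\Phi}(\b{X}) \b{T}$, I obtain $\b{\Phi}(\b{U})^\top \b{\phi}(\b{x}_i) = \b{T}^\top \b{\Phi}(\b{X})^\top \b{\phi}(\b{x}_i) = \b{T}^\top \b{k}_i$, where I recognize the kernel vector $\b{k}_i = \b{\Phi}(\b{X})^\top \b{\phi}(\b{x}_i)$ supplied in the statement. Performing the same reduction for $\b{x}_j$ and pulling $\b{T}$ out of the norm gives $(\b{T}^\top \b{k}_i - \b{T}^\top \b{k}_j)^\top (\b{T}^\top \b{k}_i - \b{T}^\top \b{k}_j) = (\b{k}_i - \b{k}_j)^\top \b{T} \b{T}^\top (\b{k}_i - \b{k}_j)$, which is exactly the claimed right-hand side $\|\b{k}_i - \b{k}_j\|_{\b{T}\b{T}^\top}^2$.

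I expect the only genuinely delicate point to be the factorization $\b{\Phi}(\b{W}) = \b{\Phi}(\b{U}) \b{\Phi}(\b{U})^\top$: since the feature space may be infinite-dimensional, this identity should be read as the direct RKHS analogue of Eq. (\ref{equation_W_U_UT}) rather than a literal finite eigendecomposition, with positive semi-definiteness of $\b{\Phi}(\b{W})$ guaranteeing the existence of such a square-root factor $\b{\Phi}(\b{U})$ that is itself representable as $\b{\Phi}(\b{X}) \b{T}$. Everything after that is a routine reshuffling using the kernel trick $\b{k}_i = \b{\Phi}(\b{X})^\top \b{\phi}(\b{x}_i)$ together with the same cyclic-trace and substitution manipulations already used in the kernelized locally-linear-embedding derivation, so no further obstacle is anticipated.
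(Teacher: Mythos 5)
Your proposal is correct and follows essentially the same route as the paper's proof: decompose $\b{\Phi}(\b{W}) = \b{\Phi}(\b{U})\b{\Phi}(\b{U})^\top$ as the RKHS analogue of Eq. (\ref{equation_W_U_UT}), substitute $\b{\Phi}(\b{U}) = \b{\Phi}(\b{X})\b{T}$ from Eq. (\ref{equation_kernelization_representation_theory}), and apply the kernel trick $\b{k}_i = \b{\Phi}(\b{X})^\top \b{\phi}(\b{x}_i)$ to collapse the expression to $(\b{k}_i - \b{k}_j)^\top \b{T}\b{T}^\top (\b{k}_i - \b{k}_j)$. Your remark about reading the factorization carefully in a possibly infinite-dimensional feature space is a point the paper passes over silently, but it does not change the argument.
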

\begin{proof}
We can have the decomposition of the weight matrix, i.e. Eq. (\ref{equation_W_U_UT}), in RKHS which is:
\begin{align}\label{equation_W_U_UT_RKHS}
\b{\Phi}(\b{W}) = \b{\Phi}(\b{U}) \b{\Phi}(\b{U})^\top.
\end{align}
The generalized Mahalanobis distance metric in RKHS is:
\begin{align*}
&\|\b{\phi}(\b{x}_i) - \b{\phi}(\b{x}_j)\|_{\b{\Phi}(\b{W})}^2 \\
&\overset{(\ref{equation_W_U_UT})}{=} (\b{\phi}(\b{x}_i) - \b{\phi}(\b{x}_j))^\top \b{\Phi}(\b{U}) \b{\Phi}(\b{U})^\top (\b{\phi}(\b{x}_i) - \b{\phi}(\b{x}_j)) \\
&= \big(\b{\Phi}(\b{U})^\top \b{\phi}(\b{x}_i) - \b{\Phi}(\b{U})^\top \b{\phi}(\b{x}_j)\big)^\top \\
& ~~~~~~~~~~~~~~~~~~~~~~ \big(\b{\Phi}(\b{U})^\top \b{\phi}(\b{x}_i) - \b{\Phi}(\b{U})^\top \b{\phi}(\b{x}_j)\big) \\
&\overset{(\ref{equation_kernelization_representation_theory})}{=} \big(\b{T}^\top \b{\Phi}(\b{X})^\top \b{\phi}(\b{x}_i) - \b{T}^\top \b{\Phi}(\b{X})^\top \b{\phi}(\b{x}_j)\big)^\top \\
& ~~~~~~~~~~~~~~ \big(\b{T}^\top \b{\Phi}(\b{X})^\top \b{\phi}(\b{x}_i) - \b{T}^\top \b{\Phi}(\b{X})^\top \b{\phi}(\b{x}_j)\big) \\
&\overset{(a)}{=} \big(\b{T}^\top \b{k}_i - \b{T}^\top \b{k}_j\big)^\top \big(\b{T}^\top \b{k}_i - \b{T}^\top \b{k}_j\big) \\
&= \big(\b{k}_i - \b{k}_j\big)^\top \b{T}\b{T}^\top \big(\b{k}_i - \b{k}_j\big) = \|\b{k}_i - \b{k}_j\|_{\b{T}\b{T}^\top}^2, 
\end{align*}
where $(a)$ is because of the kernel trick, i.e., $\b{k}(\b{X}, \b{x}_i) = \b{\Phi}(\b{X})^\top \b{\phi}(\b{x}_i)$. Q.E.D. 
\end{proof}

Let $\b{\nu}_l := [\frac{1}{n_l} \sum_{i=1}^{n_l} \b{k}(\b{x}_1, \b{x}_i), \dots, \frac{1}{n_l} \sum_{i=1}^{n_l} \b{k}(\b{x}_n, \b{x}_i)]^\top \in \mathbb{R}^n$ where $n_l$ denotes the cardinality of the $l$-th class. 
Let $\b{K}_w$ and $\b{K}_b$ be the kernelized versions of $\b{S}_w$ and $\b{S}_b$, respectively (see Eqs. (\ref{equation_intra_cluster_scatter}) and (\ref{equation_inter_cluster_scatter})). If $\mathcal{X}_l$ denotes the $l$-th class, we have:
\begin{align}
& \mathbb{R}^{n \times n} \ni \b{K}_w := \frac{1}{n} \sum_{l=1}^c \sum_{\b{x}_i \in \mathcal{X}_l} (\b{k}_i - \b{\nu}_l) (\b{k}_i - \b{\nu}_l)^\top \\
&\mathbb{R}^{n \times n} \ni \b{K}_b := \frac{1}{n} \sum_{l=1}^c \sum_{j=1}^c (\b{\nu}_l - \b{\nu}_j) (\b{\nu}_l - \b{\nu}_j)^\top.
\end{align}
We saw the metric in RKHS can be seen as projection onto a subspace with the projection matrix $\b{T}$. Therefore, Eq. (\ref{equation_optimization_DCA}) in RKHS becomes \cite{hoi2006learning}:
\begin{equation}\label{equation_optimization_DCA_kernel}
\begin{aligned}
& \underset{\b{T}}{\text{maximize}}
& & \frac{\textbf{tr}(\b{T}^\top \b{K}_b \b{T})}{\textbf{tr}(\b{T}^\top \b{K}_w \b{T})},
\end{aligned}
\end{equation}
which is a generalized Rayleigh-Ritz quotient. The solution $\b{T}$ to this optimization problem is the generalized eigenvalue problem $(\b{K}_b, \b{K}_w)$ \cite{ghojogh2019eigenvalue}. 
The weight matrix of the generalized Mahalanobis distance is obtained by Eqs. (\ref{equation_kernelization_representation_theory}) and (\ref{equation_W_U_UT_RKHS}). 

\subsubsection{Relevant to Kernel Fisher Discriminant Analysis}

Here, we explain the kernel version of the metric learning method \cite{alipanahi2008distance} which was introduced in Section \ref{section_relation_to_FDA}. 

According to Eq. (\ref{equation_Mahalanobis_distance_in_RKHS}), we have:
\begin{align*}
&\|\b{\phi}(\b{x}_i) - \b{\phi}(\b{x}_j)\|_{\b{\Phi}(\b{W})}^2 = (\b{k}_i - \b{k}_j)^\top \b{T}\b{T}^\top (\b{k}_i - \b{k}_j) \\
&~~~~~~~~~~~~~~~~~~\overset{(a)}{=} \textbf{tr}\big((\b{k}_i - \b{k}_j)^\top \b{T}\b{T}^\top (\b{k}_i - \b{k}_j)\big) \\
&~~~~~~~~~~~~~~~~~~\overset{(b)}{=} \textbf{tr}\big(\b{T}^\top (\b{k}_i - \b{k}_j) (\b{k}_i - \b{k}_j)^\top \b{T}\big),
\end{align*}
where $(a)$ is because a scalar it equal to its trace and $(b)$ is because of the cyclic property of trace. 
Hence, Eq. (\ref{equation_Sigma_S_prime_Sigma_D_prime}) in RKHS becomes:
\begin{align*}
& \frac{1}{|\mathcal{S}|} \sum_{(\b{x}_i, \b{x}_j) \in \mathcal{S}} \textbf{tr}\big(\b{T}^\top (\b{k}_i - \b{k}_j) (\b{k}_i - \b{k}_j)^\top \b{T}\big) \\
& = \textbf{tr}\Big(\b{T}^\top \big(\frac{1}{|\mathcal{S}|} \sum_{(\b{x}_i, \b{x}_j) \in \mathcal{S}} (\b{k}_i - \b{k}_j) (\b{k}_i - \b{k}_j)^\top \b{T}\big)\Big) \\
&= \textbf{tr}(\b{T}^\top \b{\Sigma}^{\phi}_{\mathcal{S}} \b{T}), 
\end{align*}
and likewise:
\begin{align*}
&\frac{1}{|\mathcal{D}|} \sum_{(\b{x}_i, \b{x}_j) \in \mathcal{D}} \textbf{tr}\big(\b{T}^\top (\b{k}_i - \b{k}_j) (\b{k}_i - \b{k}_j)^\top \b{T}\big) \\
&~~~~~~~~~~~~~~~~~~~~~~ = \textbf{tr}(\b{T}^\top \b{\Sigma}^{\phi}_{\mathcal{D}} \b{T}),
\end{align*}
where:
\begin{align*}
& \b{\Sigma}^{\phi}_{\mathcal{S}} := \frac{1}{|\mathcal{S}|} \sum_{(\b{x}_i, \b{x}_j) \in \mathcal{S}} (\b{k}_i - \b{k}_j) (\b{k}_i - \b{k}_j)^\top, \\
& \b{\Sigma}^{\phi}_{\mathcal{D}} := \frac{1}{|\mathcal{D}|} \sum_{(\b{x}_i, \b{x}_j) \in \mathcal{D}} (\b{k}_i - \b{k}_j) (\b{k}_i - \b{k}_j)^\top.
\end{align*}
Hence, in RKHS, the objective of the optimization problem (\ref{equation_ML_optimization_spectral_method3_approach2}) becomes $\textbf{tr}(\b{T}^\top (\b{\Sigma}^{\phi}_{\mathcal{S}} - \b{\Sigma}^{\phi}_{\mathcal{D}}) \b{T}^\top)$. 
We change the constraint in Eq. (\ref{equation_ML_optimization_spectral_method3_approach2}) to $\b{U}^\top \b{U} = \b{I}$. In RKHS, this constraint becomes:
\begin{align*}
\b{\Phi}(\b{U})^\top \b{\Phi}(\b{U}) &\overset{(\ref{equation_kernelization_representation_theory})}{=} \b{T}^\top \b{\Phi}(\b{X})^\top \b{\Phi}(\b{X}) \b{T} \\
&\overset{(\ref{equation_Kernel_X})}{=} \b{T}^\top \b{K}_x \b{T} \overset{\text{set}}{=} \b{I}, 
\end{align*}
Finally, (\ref{equation_ML_optimization_spectral_method3_approach2}) in RKHS becomes:
\begin{equation}\label{equation_ML_optimization_spectral_method3_approach2_kernel}
\begin{aligned}
& \underset{\b{T}}{\text{minimize}}
& & \textbf{tr}(\b{T}^\top (\b{\Sigma}^{\phi}_{\mathcal{S}} - \b{\Sigma}^{\phi}_{\mathcal{D}}) \b{T}) \\
& \text{subject to} 
& & \b{T}^\top \b{K}_x \b{T} = \b{I},
\end{aligned}
\end{equation}
whose solution is a generalized eigenvalue problem $(\b{\Sigma}^{\phi}_{\mathcal{S}} - \b{\Sigma}^{\phi}_{\mathcal{D}}, \b{K}_x)$ where $\b{T}$ is the matrix of eigenvectors. 
The weight matrix of the generalized Mahalanobis distance is obtained by Eqs. (\ref{equation_kernelization_representation_theory}) and (\ref{equation_W_U_UT_RKHS}). 
This is relevant to kernel Fisher discriminant analysis \cite{mika1999fisher,ghojogh2019fisher} which minimizes and maximizes the intra-class and inter-class variances in RKHS. 



\subsubsection{Relevant to Kernel Support Vector Machine}

Here, we explain the kernel version of the metric learning method \cite{tsang2003distance} which was introduced in Section \ref{section_relation_to_SVM}. It is relevant to kernel SVM. 
Using kernel trick \cite{ghojogh2021reproducing} and Eq. (\ref{equation_Mahalanobis_distance_in_RKHS}), the Eq. (\ref{equation_relation_to_SVM_dual_optimization}) can be kernelized as \cite{tsang2003distance}:
\begin{equation}\label{equation_relation_to_SVM_dual_optimization_kernel}
\begin{aligned}
& \underset{\{\alpha_{ij}\}}{\text{maximize}}
~~~~~ \sum_{(\b{x}_i, \b{x}_j) \in \mathcal{D}} \alpha_{ij} \b{T}^\top (k_{ii} + k_{jj} - 2k_{ij}) \\
& -\frac{1}{2} \sum_{(\b{x}_i, \b{x}_j) \in \mathcal{D}} \sum_{(\b{x}_k, \b{x}_l) \in \mathcal{D}} \alpha_{ij} \alpha_{kl} (k_{ik} - k_{il} - k_{jk} + k_{jl})^2 \\
& + \frac{\lambda_1}{|\mathcal{S}|} \sum_{(\b{x}_i, \b{x}_j) \in \mathcal{D}} \sum_{(\b{x}_k, \b{x}_l) \in \mathcal{S}} \alpha_{ij} (k_{ik} - k_{il} - k_{jk} + k_{jl})^2 \\
& \text{subject to} 
~~~~~~~~~~ \frac{1}{\lambda_2} \sum_{(\b{x}_i, \b{x}_j) \in \mathcal{D}} \alpha_{ij} \geq \nu, \\
& ~~~~~~~~~~~~~~~~~~~~~~~~~~ \alpha_{ij} \in [0, \frac{\lambda_2}{|\mathcal{D}|}],
\end{aligned}
\end{equation}
which is a quadratic programming problem and can be solved by optimization solvers. 

\subsection{Geometric Spectral Metric Learning}

Some spectral metric learning methods are geometric methods which use Riemannian manifolds. In the following, we introduce the mist well-known geometric methods. There are some other geometric methods, such as \cite{hauberg2012geometric}, which are not covered for brevity. 

\subsubsection{Geometric Mean Metric Learning}\label{section_geometric_mean_metric_learning}

One of the geometric spectral metric learning is Geometric Mean Metric Learning (GMML) \cite{zadeh2016geometric}. 
Let $\b{W}$ be the weight matrix in the generalized Mahalanobis distance for similar points. 

\hfill\break
\textbf{-- Regular GMML:}
In GMML, we use the inverse of weight matrix, i.e. $\b{W}^{-1}$ , for the dissimilar points. 
The optimization problem of GMML is \cite{zadeh2016geometric}:
\begin{equation}\label{equation_GMML_optimization}
\begin{aligned}
& \underset{\b{W}}{\text{minimize}}
& & \sum_{(\b{x}_i, \b{x}_j) \in \mathcal{S}} \|\b{x}_i - \b{x}_j\|_{\b{W}}^2 \\ 
& & & ~~~~~~~~~~~~~~~~~~ + \sum_{(\b{x}_i, \b{x}_j) \in \mathcal{D}} \|\b{x}_i - \b{x}_j\|_{\b{W}^{-1}}^2 \\
& \text{subject to}
& & \b{W} \succeq \b{0}.
\end{aligned}
\end{equation}
According to Eq. (\ref{equation_spectral_ML_first_method_trace_W_Sigma_S}), this problem can be restated as:
\begin{equation}\label{equation_GMML_optimization_1}
\begin{aligned}
& \underset{\b{W}}{\text{minimize}}
& & \textbf{tr}(\b{W} \b{\Sigma}_{\mathcal{S}}) + \textbf{tr}(\b{W}^{-1} \b{\Sigma}_{\mathcal{D}}) \\
& \text{subject to}
& & \b{W} \succeq \b{0},
\end{aligned}
\end{equation}
where $\b{\Sigma}_{\mathcal{S}}$ and $\b{\Sigma}_{\mathcal{D}}$ are defined in Eq. (\ref{equation_spectral_ML_first_method_Sigma_S}). 
Taking derivative of the objective function w.r.t. $\b{W}$ and setting it to zero gives:
\begin{align}
&\frac{\partial }{\partial \b{W}} \big( \textbf{tr}(\b{W} \b{\Sigma}_{\mathcal{S}}) + \textbf{tr}(\b{W}^{-1} \b{\Sigma}_{\mathcal{D}}) \big) \nonumber\\
&= \b{\Sigma}_{\mathcal{S}} - \b{W}^{-1} \b{\Sigma}_{\mathcal{D}} \b{W}^{-1} \overset{\text{set}}{=} \b{0} \implies \b{\Sigma}_{\mathcal{D}} =\b{W} \b{\Sigma}_{\mathcal{S}} \b{W}. \label{equation_GMML_solution_1}
\end{align}
This equation is the Riccati equation \cite{riccati1724animadversiones} and its solution is the midpoint of the geodesic connecting $\b{\Sigma}_{\mathcal{S}}^{-1}$ and $\b{\Sigma}_{\mathcal{D}}$ {\citep[Section 1.2.13]{bhatia2007positive}}. 

\begin{lemma}[{\citep[Chapter 6]{bhatia2007positive}}]
The geodesic curve connecting two points $\b{\Sigma}_1$ and $\b{\Sigma}_2$ on the Symmetric Positive Definite (SPD) Riemannian manifold is denoted by $\b{\Sigma}_1 \sharp_t \b{\Sigma}_2$ and is computed as:
\begin{align}\label{equation_SPD_geodesic}
\b{\Sigma}_1 \sharp_t \b{\Sigma}_2 := \b{\Sigma}_1^{(1/2)} \big(\b{\Sigma}_1^{(-1/2)} \b{\Sigma}_2 \b{\Sigma}_1^{(-1/2)}\big)^t \b{\Sigma}_1^{(1/2)}, 
\end{align}
where $t \in [0,1]$. 
\end{lemma}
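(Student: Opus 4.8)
The plan is to exploit the homogeneity of the SPD manifold under congruence transformations, reducing the general case to a geodesic emanating from the identity. First I would equip the manifold of symmetric positive definite matrices with the affine-invariant Riemannian metric, whose inner product on the tangent space (the symmetric matrices) at a point $\b{P}$ is $\langle \b{X}, \b{Y}\rangle_{\b{P}} := \textbf{tr}(\b{P}^{-1}\b{X}\b{P}^{-1}\b{Y})$. The associated geodesic equation is $\ddot{\b{\gamma}} - \dot{\b{\gamma}}\, \b{\gamma}^{-1} \dot{\b{\gamma}} = \b{0}$, which I would derive from the metric via the Euler--Lagrange equations (equivalently, by computing the Christoffel symbols of the non-constant metric).

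Next I would show that for any invertible $\b{G}$ the congruence map $\Phi_{\b{G}}(\b{P}) := \b{G}\b{P}\b{G}^\top$ is an isometry. Its differential sends a tangent vector $\b{X}$ to $\b{G}\b{X}\b{G}^\top$, and a short computation using $(\b{G}\b{P}\b{G}^\top)^{-1} = \b{G}^{-\top}\b{P}^{-1}\b{G}^{-1}$ together with the cyclic property of trace collapses $\langle \b{G}\b{X}\b{G}^\top, \b{G}\b{Y}\b{G}^\top \rangle_{\Phi_{\b{G}}(\b{P})}$ back to $\langle \b{X}, \b{Y}\rangle_{\b{P}}$. Because isometries map geodesics to geodesics, I may take $\b{G} = \b{\Sigma}_1^{(-1/2)}$ to send $\b{\Sigma}_1 \mapsto \b{I}$ and $\b{\Sigma}_2 \mapsto \b{A} := \b{\Sigma}_1^{(-1/2)} \b{\Sigma}_2 \b{\Sigma}_1^{(-1/2)}$, so that it suffices to find the geodesic from $\b{I}$ to the SPD matrix $\b{A}$.

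For the base point $\b{I}$ I would solve the geodesic equation with $\b{\gamma}(0) = \b{I}$ and $\dot{\b{\gamma}}(0) = \b{X}$ symmetric; the one-parameter subgroup $\b{\gamma}(t) = \exp(t\b{X})$ satisfies $\ddot{\b{\gamma}} = \dot{\b{\gamma}}\b{\gamma}^{-1}\dot{\b{\gamma}} = \b{X}^2 \exp(t\b{X})$ and is therefore a geodesic. Matching the endpoint $\exp(\b{X}) = \b{A}$ forces $\b{X} = \log \b{A}$, which is symmetric since $\b{A} \succ \b{0}$, giving $\b{\gamma}(t) = \b{A}^t$. Transporting back through $\Phi_{\b{\Sigma}_1^{(1/2)}}$ yields $\b{\Sigma}_1^{(1/2)} \b{A}^t \b{\Sigma}_1^{(1/2)}$, which is exactly Eq. (\ref{equation_SPD_geodesic}); evaluating at $t=0$ and $t=1$ confirms the endpoints are $\b{\Sigma}_1$ and $\b{\Sigma}_2$.

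The main obstacle I anticipate is not the symmetry reduction but the two analytic facts underpinning it. The first is justifying the geodesic equation from the affine-invariant metric, which requires the Christoffel-symbol computation for a metric that varies with the base point. The second is arguing uniqueness, so that the curve produced is \emph{the} geodesic rather than merely \emph{a} geodesic; for this I would invoke that the SPD manifold with this metric is a Cartan--Hadamard manifold (complete, simply connected, of nonpositive sectional curvature), which guarantees a unique minimizing geodesic between any two points. Establishing the nonpositive-curvature and completeness properties cleanly, or alternatively citing them from {\citep[Chapter 6]{bhatia2007positive}}, is where the real work lies.
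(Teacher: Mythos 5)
The paper does not actually prove this lemma; it is stated as a citation to {\citep[Chapter 6]{bhatia2007positive}} with no proof given. Your outline reproduces the standard argument from that reference — affine-invariant metric, congruence maps $\b{P} \mapsto \b{G}\b{P}\b{G}^\top$ as isometries, reduction to the base point $\b{I}$ where the geodesic is $t \mapsto \b{A}^t$, and transport back by $\b{\Sigma}_1^{(1/2)}$ — and it is correct in both its reduction and its endpoint check. You also correctly identify that the substantive remaining work is the derivation of the geodesic equation from the metric and the uniqueness claim via the Cartan--Hadamard property; those are exactly the facts the paper is implicitly outsourcing to Bhatia, so either proving them or citing them as you propose would complete the argument.
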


Hence, the solution of Eq. (\ref{equation_GMML_solution_1}) is:
\begin{align}
\b{W} &= \b{\Sigma}_{\mathcal{S}}^{-1} \sharp_{(1/2)} \b{\Sigma}_{\mathcal{D}} \nonumber\\
&\overset{(\ref{equation_SPD_geodesic})}{=} \b{\Sigma}_{\mathcal{S}}^{(-1/2)} \big(\b{\Sigma}_{\mathcal{S}}^{(1/2)} \b{\Sigma}_{\mathcal{D}} \b{\Sigma}_{\mathcal{S}}^{(1/2)}\big)^{(1/2)} \b{\Sigma}_{\mathcal{S}}^{(-1/2)}. \label{equation_GMML_solution_1_1}
\end{align}
The proof of Eq. (\ref{equation_GMML_solution_1_1}) is as follows \cite{hajiabadi2019layered}:
\begin{align*}
&\b{\Sigma}_{\mathcal{D}} \overset{(\ref{equation_GMML_solution_1})}{=} \b{W} \b{\Sigma}_{\mathcal{S}} \b{W} \\
&\implies \b{\Sigma}_{\mathcal{S}}^{(1/2)} \b{\Sigma}_{\mathcal{D}} \b{\Sigma}_{\mathcal{S}}^{(1/2)} = \b{\Sigma}_{\mathcal{S}}^{(1/2)} \b{W} \b{\Sigma}_{\mathcal{S}} \b{W} \b{\Sigma}_{\mathcal{S}}^{(1/2)} \\
&\implies (\b{\Sigma}_{\mathcal{S}}^{(1/2)} \b{\Sigma}_{\mathcal{D}} \b{\Sigma}_{\mathcal{S}}^{(1/2)})^{(1/2)} \\
&~~~~~~~~~~~~~~~~~~~~~~~~~~~ = (\b{\Sigma}_{\mathcal{S}}^{(1/2)} \b{W} \b{\Sigma}_{\mathcal{S}} \b{W} \b{\Sigma}_{\mathcal{S}}^{(1/2)})^{(1/2)} \\
&\implies (\b{\Sigma}_{\mathcal{S}}^{(1/2)} \b{\Sigma}_{\mathcal{D}} \b{\Sigma}_{\mathcal{S}}^{(1/2)})^{(1/2)} \\
&~~~~~~~~~~~~~~~~ \overset{(a)}{=} ((\b{\Sigma}_{\mathcal{S}}^{(1/2)} \b{W} \b{\Sigma}_{\mathcal{S}}^{(1/2)}) (\b{\Sigma}_{\mathcal{S}}^{(1/2)} \b{W} \b{\Sigma}_{\mathcal{S}}^{(1/2)}))^{(1/2)} \\
&~~~~~~~~~~~~~~~~ = (\b{\Sigma}_{\mathcal{S}}^{(1/2)} \b{W} \b{\Sigma}_{\mathcal{S}}^{(1/2)}) \\
&\implies \b{\Sigma}_{\mathcal{S}}^{(-1/2)} (\b{\Sigma}_{\mathcal{S}}^{(1/2)} \b{\Sigma}_{\mathcal{D}} \b{\Sigma}_{\mathcal{S}}^{(1/2)})^{(1/2)} \b{\Sigma}_{\mathcal{S}}^{(-1/2)} \\
&~~~~~~~~~~~~~~~~ = \b{\Sigma}_{\mathcal{S}}^{(-1/2)} (\b{\Sigma}_{\mathcal{S}}^{(1/2)} \b{W} \b{\Sigma}_{\mathcal{S}}^{(1/2)}) \b{\Sigma}_{\mathcal{S}}^{(-1/2)} = \b{W},
\end{align*}
where $(a)$ is because $\b{\Sigma}_{\mathcal{S}} \succeq \b{0}$ so its eigenvalues are non-negative and the matrix of eigenvalues can be decomposed by the second root in its eigenvalue decomposition to have $\b{\Sigma}_{\mathcal{S}} = \b{\Sigma}_{\mathcal{S}}^{(1/2)} \b{\Sigma}_{\mathcal{S}}^{(1/2)}$.

\hfill\break
\textbf{-- Regularized GMML:}
The matrix $\b{\Sigma}_{\mathcal{S}}$ might be singular or near singular and hence non-invertible. Therefore, we regularize Eq. (\ref{equation_GMML_optimization_1}) to make the weight matrix close to a prior known positive definite matrix $\b{W}_0$. 
\begin{equation}\label{equation_GMML_optimization_2}
\begin{aligned}
& \underset{\b{W}}{\text{minimize}}
& & \textbf{tr}(\b{W} \b{\Sigma}_{\mathcal{S}}) + \textbf{tr}(\b{W}^{-1} \b{\Sigma}_{\mathcal{D}}) \\
& & & + \lambda \big( \textbf{tr}(\b{W} \b{W}_0^{-1}) + \textbf{tr}(\b{W}^{-1} \b{W}_0) - 2 d \big), \\
& \text{subject to}
& & \b{W} \succeq \b{0},
\end{aligned}
\end{equation}
where $\lambda>0$ is the regularization parameter. The regularization term is the symmetrized log-determinant divergence between $\b{W}$ and $\b{W}_0$. 
Taking derivative of the objective function w.r.t. $\b{W}$ and setting it to zero gives:
\begin{align*}
&\frac{\partial }{\partial \b{W}} \big( \textbf{tr}(\b{W} \b{\Sigma}_{\mathcal{S}}) + \textbf{tr}(\b{W}^{-1} \b{\Sigma}_{\mathcal{D}}) + \lambda \textbf{tr}(\b{W} \b{W}_0^{-1}) \\
&~~~~~~~~~ + \lambda \textbf{tr}(\b{W}^{-1} \b{W}_0) - 2\lambda d \big) \nonumber\\
&= \b{\Sigma}_{\mathcal{S}} - \b{W}^{-1} \b{\Sigma}_{\mathcal{D}} \b{W}^{-1} + \lambda \b{W}_0^{-1} \\
& ~~~~~~~~~~~~~~~~~~~~~~ + \lambda \b{W}^{-1} \b{W}_0 \b{W}^{-1} \overset{\text{set}}{=} \b{0} \\
& \implies \b{\Sigma}_{\mathcal{D}} + \lambda \b{W}_0 =\b{W} (\b{\Sigma}_{\mathcal{S}} + \lambda \b{W}_0^{-1}) \b{W},
\end{align*}
which is again a Riccati equation \cite{riccati1724animadversiones} whose solution is the midpoint of the geodesic connecting $(\b{\Sigma}_{\mathcal{S}} + \lambda \b{W}_0^{-1})^{-1}$ and $(\b{\Sigma}_{\mathcal{D}} + \lambda \b{W}_0)$:
\begin{align}\label{equation_GMML_solution_2}
\b{W} &= (\b{\Sigma}_{\mathcal{S}} + \lambda \b{W}_0^{-1})^{-1} \sharp_{(1/2)} (\b{\Sigma}_{\mathcal{D}} + \lambda \b{W}_0).
\end{align}

\textbf{-- Weighted GMML:}
Eq. (\ref{equation_GMML_optimization_1}) can be restated as:
\begin{equation}\label{equation_GMML_optimization_3}
\begin{aligned}
& \underset{\b{W}}{\text{minimize}}
& & \delta^2(\b{W}, \b{\Sigma}_{\mathcal{S}}^{-1}) + \delta^2(\b{W}, \b{\Sigma}_{\mathcal{D}}) \\
& \text{subject to}
& & \b{W} \succeq \b{0},
\end{aligned}
\end{equation}
where $\delta(.,.)$ is the Riemannian distance (or Fr{\'e}chet mean) on the SPD manifold {\citep[Eq 1.1]{arsigny2007geometric}}:
\begin{align*}
\delta(\b{\Sigma}_1, \b{\Sigma}_2) := \|\log(\b{\Sigma}_2^{(-1/2)} \b{\Sigma}_1 \b{\Sigma}_2^{(-1/2)})\|_F,
\end{align*}
where $\|.\|_F$ is the Frobenius norm. 
We can weight the objective in Eq. (\ref{equation_GMML_optimization_3}):
\begin{equation}\label{equation_GMML_optimization_4}
\begin{aligned}
& \underset{\b{W}}{\text{minimize}}
& & (1-t) \delta^2(\b{W}, \b{\Sigma}_{\mathcal{S}}^{-1}) + t \delta^2(\b{W}, \b{\Sigma}_{\mathcal{D}}) \\
& \text{subject to}
& & \b{W} \succeq \b{0},
\end{aligned}
\end{equation}
where $t \in [0,1]$ is a hyperparameter. 
The solution of this problem is the weighted version of Eq. (\ref{equation_GMML_solution_2}):
\begin{align}\label{equation_GMML_solution_3}
\b{W} &= (\b{\Sigma}_{\mathcal{S}} + \lambda \b{W}_0^{-1})^{-1} \sharp_t (\b{\Sigma}_{\mathcal{D}} + \lambda \b{W}_0).
\end{align}

\subsubsection{Low-rank Geometric Mean Metric Learning}

We can learn a low-rank weight matrix in GMML \cite{bhutani2018low}, where the rank of wight matrix is set to be $p \ll d$:
\begin{equation}\label{equation_GMML_lowRank_optimization_1}
\begin{aligned}
& \underset{\b{W}}{\text{minimize}}
& & \textbf{tr}(\b{W} \b{\Sigma}_{\mathcal{S}}) + \textbf{tr}(\b{W}^{-1} \b{\Sigma}_{\mathcal{D}}) \\
& \text{subject to}
& & \b{W} \succeq \b{0}, \\
& & & \textbf{rank}(\b{W}) = p.
\end{aligned}
\end{equation}
We can decompose it using eigenvalue decomposition as done in Eq. (\ref{equation_W_U_UT}), i.e., $\b{W} = \b{V} \b{\Lambda} \b{V}^\top = \b{U} \b{U}^\top$, where we only have $p$ eigenvectors and $p$ eigenvalues. Therefore, the sizes of matrices are $\b{V} \in \mathbb{R}^{d \times p}$, $\b{\Lambda} \in \mathbb{R}^{p \times p}$, and $\b{U} \in \mathbb{R}^{d \times p}$. 
By this decomposition, the objective function in Eq. (\ref{equation_GMML_lowRank_optimization_1} can be restated as:
\begin{align*}
&\textbf{tr}(\b{V} \b{\Lambda} \b{V}^\top \b{\Sigma}_{\mathcal{S}}) + \textbf{tr}(\b{V} \b{\Lambda}^{-1} \b{V}^\top \b{\Sigma}_{\mathcal{D}}) \\
&\overset{(a)}{=} \textbf{tr}(\b{\Lambda} \b{V}^\top \b{\Sigma}_{\mathcal{S}} \b{V}) + \textbf{tr}(\b{\Lambda}^{-1} \b{V}^\top \b{\Sigma}_{\mathcal{D}} \b{V}) \\
&\overset{(b)}{=} \textbf{tr}(\b{\Lambda} \widetilde{\b{\Sigma}}_{\mathcal{S}}) + \textbf{tr}(\b{\Lambda}^{-1} \widetilde{\b{\Sigma}}_{\mathcal{D}}),
\end{align*}
where $(\b{V}^\top)^{-1} = \b{V}$ because it is orthogonal, $(a)$ is because of the cyclic property of trace, and $(b)$ is because we define $\widetilde{\b{\Sigma}}_{\mathcal{S}} := \b{V}^\top \b{\Sigma}_{\mathcal{S}} \b{V}$ and $\widetilde{\b{\Sigma}}_{\mathcal{D}} := \b{V}^\top \b{\Sigma}_{\mathcal{D}} \b{V}$. 
Noticing that the matrix of eigenvectors $\b{V}$ is orthogonal, the Eq. (\ref{equation_GMML_lowRank_optimization_1}) is restated to:
\begin{equation}\label{equation_GMML_lowRank_optimization_2}
\begin{aligned}
& \underset{\b{\Lambda}, \b{V}}{\text{minimize}}
& & \textbf{tr}(\b{\Lambda} \widetilde{\b{\Sigma}}_{\mathcal{S}}) + \textbf{tr}(\b{\Lambda}^{-1} \widetilde{\b{\Sigma}}_{\mathcal{D}}) \\
& \text{subject to}
& & \b{\Lambda} \succeq \b{0}, \\
& & & \b{V}^\top \b{V} = \b{I},
\end{aligned}
\end{equation}
where $\textbf{rank}(\b{W}) = p$ is automatically satisfied by taking $\b{V} \in \mathbb{R}^{d \times p}$ and $\b{\Lambda} \in \mathbb{R}^{p \times p}$ in the decomposition. 
This problem can be solved by the alternative optimization \cite{ghojogh2021kkt}. If the variable $\b{V}$ is fixed, minimization w.r.t. $\b{\Lambda}$ is similar to the problem (\ref{equation_GMML_optimization_1}); hence, its solution is similar to Eq. (\ref{equation_GMML_solution_1_1}), i.e., $\b{\Lambda} = {\widetilde{\b{\Sigma}}_{\mathcal{S}}}^{-1} \sharp_{(1/2)} \widetilde{\b{\Sigma}}_{\mathcal{D}}$ (see Eq. (\ref{equation_SPD_geodesic}) for the definition of $\sharp_t$). If $\b{\Lambda}$ is fixed, the orthogonality constraint $\b{V}^\top \b{V} = \b{I}$ can be modeled by $\b{V}$ belonging to the Grassmannian manifold $G(p,d)$ which is the set of $p$-dimensional subspaces of $\mathbb{R}^d$. 
To sum up, the alternative optimization is:
\begin{align*}
& \b{\Lambda}^{(\tau+1)} = (\b{V}^{(\tau)\top} \b{\Sigma}_{\mathcal{S}} \b{V}^{(\tau)})^{-1} \sharp_{(1/2)} (\b{V}^{(\tau)\top} \b{\Sigma}_{\mathcal{D}} \b{V}^{(\tau)}), \\
& \b{V}^{(\tau+1)} := \arg \min_{\b{V} \in G(p,d)} \Big( \textbf{tr}(\b{\Lambda}^{(\tau+1)} \b{V}^\top \b{\Sigma}_{\mathcal{S}} \b{V}) \\
&~~~~~~~~~~~~~~~~~~~~~~~~~~~~~~~~~~~~~~~~~~ + \textbf{tr}((\b{\Lambda}^{(\tau+1)})^{-1} \b{V}^\top \b{\Sigma}_{\mathcal{D}} \b{V})\Big), 
\end{align*}
where $\tau$ is the iteration index. 
Optimization of $\b{V}$ can be solved by Riemannian optimization \cite{absil2009optimization}.

\subsubsection{Geometric Mean Metric Learning for Partial Labels}

Partial label learning \cite{cour2011learning} refers to when a set of candidate labels is available for every data point. 
GMML can be modified to be used for partial label learning \cite{zhou2018geometric}. 
Let $\mathcal{Y}_i$ denote the set of candidate labels for $\b{x}_i$. 
If there are $q$ candidate labels in total, we denote $\b{y}_i = [y_{i1}, \dots, y_{iq}]^\top \in \{0,1\}^q$ where $y_{ij}$ is one if the $j$-th label is a candidate label for $\b{x}_i$ and is zero otherwise.
We define $\b{X}_i^+ := \{\b{x}_j | j=1, \dots, n, j \neq i, \mathcal{Y}_i \cap \mathcal{Y}_j \neq \varnothing\}$ and $\b{X}_i^- := \{\b{x}_j | j=1, \dots, n, \mathcal{Y}_i \cap \mathcal{Y}_j = \varnothing\}$. 
In other words, $\b{X}_i^+$ and $\b{X}_i^-$ are the data points which share and do not share some candidate labels with $\b{x}_i$, respectively. 
Let $\mathcal{N}_i^+$ be the indices of the $k$ nearest neighbors of $\b{x}_i$ among $\b{X}_i^+$. Also, let $\mathcal{N}_i^-$ be the indices of points in $\b{X}_i^-$ whose distance from $\b{x}_i$ are smaller than the distance of the furthest point in $\mathcal{N}_i^+$ from $\b{x}_i$. In other words, $\mathcal{N}_i^- := \{j | j=1, \dots, n, \b{x}_j \in \b{X}_i^-, \|\b{x}_i - \b{x}_j\|_2 \leq \max_{t \in \mathcal{N}_i^+} \|\b{x}_i - \b{x}_t\|_2\}$.

Let $\b{w}_i^{(1)} = [w_{i,t}^{(1)}, \forall t \in \mathcal{N}_i^+]^\top \in \mathbb{R}^k$ contain the probabilities that each of the $k$ neighbors of $\b{x}_i$ share the same label with $\b{x}_i$. It can be estimated by linear reconstruction of $\b{y}_i$ by the neighbor $\b{y}_t$'s:
\begin{equation*}
\begin{aligned}
& \underset{\b{w}_i^{(1)}}{\text{minimize}}
& & \frac{1}{q} \big\|\b{y}_i - \sum_{t \in \mathcal{N}_i^+} w_{i,t}^{(1)} \b{y}_t\big\|_2^2 + \frac{\lambda_1}{k} \sum_{t \in \mathcal{N}_i^+} (w_{i,t}^{(1)})^2 \\ 
& \text{subject to}
& & w_{i,t}^{(1)} \geq 0, \quad t \in \mathcal{N}_i^+, 
\end{aligned}
\end{equation*}
where $\lambda_1 > 0$ is the regularization parameter. 
Let $\b{w}_i^{(2)} = [w_{i,t}^{(2)}, \forall t \in \mathcal{N}_i^+]^\top \in \mathbb{R}^k$ denote the coefficients for linear reconstruction of $\b{x}_i$ by its $k$ nearest neighbors. It is obtained as:
\begin{equation*}
\begin{aligned}
& \underset{\b{w}_i^{(2)}}{\text{minimize}}
& & \big\|\b{x}_i - \sum_{t \in \mathcal{N}_i^+} w_{i,t}^{(2)} \b{x}_t\big\|_2^2 \\ 
& \text{subject to}
& & w_{i,t}^{(2)} \geq 0, \quad t \in \mathcal{N}_i^+.
\end{aligned}
\end{equation*}
These two optimization problems are quadratic programming and can be solved using the interior point method \cite{ghojogh2021kkt}. 

The main optimization problem of GMML for partial labels is \cite{zhou2018geometric}:
\begin{equation}\label{equation_optimization_GMML_partial_labels}
\begin{aligned}
& \underset{\b{W}}{\text{minimize}}
& & \textbf{tr}(\b{W}\b{\Sigma}'_\mathcal{S}) + \textbf{tr}(\b{W}^{-1} \b{\Sigma}'_\mathcal{D}) \\ 
& \text{subject to}
& & \b{W} \succeq \b{0},
\end{aligned}
\end{equation}
where:
\begin{align*}
& \b{\Sigma}'_\mathcal{S} := \sum_{i=1}^n \Bigg( \frac{\sum_{t \in \mathcal{N}_i^+} w_{i,t}^{(1)} (\b{x}_i - \b{x}_t) (\b{x}_i - \b{x}_t)^\top}{\sum_{t \in \mathcal{N}_i^+} w_{i,t}^{(1)}} \\
& ~~~~~~~~~~ + \lambda \Big(\b{x}_i - \sum_{t \in \mathcal{N}_i^+} w_{i,t}^{(2)} \b{x}_t\Big) \Big(\b{x}_i - \sum_{t \in \mathcal{N}_i^+} w_{i,t}^{(2)} \b{x}_t\Big)^\top \Bigg), \\
& \b{\Sigma}'_\mathcal{D} := \sum_{i=1}^n \sum_{t \in \mathcal{N}_i^-} (\b{x}_i - \b{x}_t) (\b{x}_i - \b{x}_t)^\top. 
\end{align*}
Minimizing the first term of $\b{\Sigma}'_\mathcal{S}$ in $\textbf{tr}(\b{W}\b{\Sigma}'_\mathcal{S})$ decreases the distances of similar points which share some candidate labels. Minimizing the second term of $\b{\Sigma}'_\mathcal{S}$ in $\textbf{tr}(\b{W}\b{\Sigma}'_\mathcal{S})$ tries to preserve linear reconstruction of $\b{x}_i$ by its neighbors after projection onto the subspace of metric. 
Minimizing $\textbf{tr}(\b{W}^{-1} \b{\Sigma}'_\mathcal{D})$ increases the the distances of dissimilar points which do not share any candidate labels. 
The problem (\ref{equation_optimization_GMML_partial_labels}) is similar to the problem (\ref{equation_GMML_optimization_1}); hence, its solution is similar to Eq. (\ref{equation_GMML_solution_1_1}), i.e., $\b{W} = {\b{\Sigma}'_{\mathcal{S}}}^{-1} \sharp_{(1/2)} \b{\Sigma}'_{\mathcal{D}}$ (see Eq. (\ref{equation_SPD_geodesic}) for the definition of $\sharp_t$).

\subsubsection{Geometric Mean Metric Learning on SPD and Grassmannian Manifolds}

The GMML method \cite{zadeh2016geometric}, introduced in Section \ref{section_geometric_mean_metric_learning}, can be implemented on Symmetric Positive Definite (SPD) and Grassmannian manifolds \cite{zhu2018towards}. 
If $\b{X}_i, \b{X}_j \in \mathcal{S}_{++}^d$ is a point on the SPD manifold, the distance metric on this manifold is \cite{zhu2018towards}:
\begin{align}
d_{\b{W}}(\b{T}_i, \b{T}_j) := \textbf{tr}\big(\b{W} (\b{T}_i - \b{T}_j) (\b{T}_i - \b{T}_j)\big),
\end{align}
where $\b{W} \in \mathbb{R}^{d \times d}$ is the weight matrix of metric and $\b{T}_i := \log(\b{X}_i)$ is the logarithm operation on the SPD manifold. 
The Grassmannian manifold $Gr(k,d)$ is the $k$-dimensional subspaces of the $d$-dimensional vector space. 
A point in $Gr(k,d)$ is a linear subspace spanned by a full-rank $\b{X}_i \in \mathbb{R}^{d \times k}$ which is orthogonal, i.e., $\b{X}_i^\top \b{X}_i = \b{I}$. If $\b{M} \in \mathbb{R}^{d \times r}$ is any matrix, We define $\b{X}'_i$ in a way that $\b{M}^\top \b{X}'_i$ is the orthogonal components of $\b{M}^\top \b{X}_i$. 
If $\mathbb{R}^{d \times d} \ni \b{T}_{ij} := \b{X}'_i \b{X}^{'\top}_i - \b{X}'_j \b{X}^{'\top}_j$, the distance on the Grassmannian manifold is \cite{zhu2018towards}:
\begin{align}
d_{\b{W}}(\b{T}_{ij}) := \textbf{tr}\big(\b{W} \b{T}_{ij} \b{T}_{ij}\big),
\end{align}
$\b{W} \in \mathbb{R}^{d \times d}$ is the weight matrix of metric.

Similar to the optimization problem of GMML, i.e. Eq. (\ref{equation_GMML_optimization}), we solve the following problem for the SPD manifold:
\begin{equation}
\begin{aligned}
& \underset{\b{W}}{\text{minimize}}
& & \sum_{(\b{T}_i, \b{T}_j) \in \mathcal{S}} \textbf{tr}\big(\b{W} (\b{T}_i - \b{T}_j) (\b{T}_i - \b{T}_j)\big) \\
& & & + \sum_{(\b{T}_i, \b{T}_j) \in \mathcal{D}} \textbf{tr}\big(\b{W}^{-1} (\b{T}_i - \b{T}_j) (\b{T}_i - \b{T}_j)\big) \\
& \text{subject to}
& & \b{W} \succeq \b{0}.
\end{aligned}
\end{equation}
Likewise, for the Grassmannian manifold, the optimization problem is:
\begin{equation}
\begin{aligned}
& \underset{\b{W}}{\text{minimize}}
& & \sum_{(\b{T}_i, \b{T}_j) \in \mathcal{S}} \textbf{tr}\big(\b{W} \b{T}_{ij} \b{T}_{ij}\big) \\
& & & + \sum_{(\b{T}_i, \b{T}_j) \in \mathcal{D}} \textbf{tr}\big(\b{W}^{-1} \b{T}_{ij} \b{T}_{ij}\big) \\
& \text{subject to}
& & \b{W} \succeq \b{0}.
\end{aligned}
\end{equation}
Suppose, for the SPD manifold, we define:
\begin{align*}
& \b{\Sigma}'_{\mathcal{S}} := \sum_{(\b{T}_i, \b{T}_j) \in \mathcal{S}} (\b{T}_i - \b{T}_j) (\b{T}_i - \b{T}_j), \\
& \b{\Sigma}'_{\mathcal{D}} := \sum_{(\b{T}_i, \b{T}_j) \in \mathcal{D}} (\b{T}_i - \b{T}_j) (\b{T}_i - \b{T}_j).
\end{align*}
and, for the Grassmannian manifold, we define:
\begin{align*}
& \b{\Sigma}'_{\mathcal{S}} := \sum_{(\b{T}_i, \b{T}_j) \in \mathcal{S}} \b{T}_{ij} \b{T}_{ij}, \\
& \b{\Sigma}'_{\mathcal{D}} := \sum_{(\b{T}_i, \b{T}_j) \in \mathcal{D}} \b{T}_{ij} \b{T}_{ij}.
\end{align*}
Hence, for either SPD or Grassmannian manifold, the optimization problem becomes Eq. (\ref{equation_GMML_optimization_1}) in which $\b{\Sigma}_{\mathcal{S}}$ and $\b{\Sigma}_{\mathcal{D}}$ are replaced with $\b{\Sigma}'_{\mathcal{S}}$ and $\b{\Sigma}'_{\mathcal{D}}$, respectively. 

\subsubsection{Metric Learning on Stiefel and SPD Manifolds}

According to Eq. (\ref{equation_W_U_UT}), the weight matrix in the metric can be decomposed as $\b{W} = \b{V} \b{\Lambda} \b{V}^\top$. If we do not restrict $\b{V}$ and $\b{\Lambda}$ to be the matrices of eigenvectors and eigenvalues as in Eq. (\ref{equation_W_U_UT}), we can learn both $\b{V} \in \mathbb{R}^{d \times p}$ and $\b{\Lambda} \in \mathbb{R}^{p \times p}$ by optimization \cite{harandi2017joint}. 
The optimization problem in this method is:
\begin{equation}
\begin{aligned}
& \underset{\b{V}, \b{\Lambda}}{\text{minimize}}
& & \sum_{(\b{x}_i, \b{x}_j) \in \mathcal{S}} \log(1 + q_{ij}) \\
& & &+ \sum_{(\b{x}_i, \b{x}_j) \in \mathcal{D}} \log(1 + q_{ij}^{-1}) \\
& & &+\lambda \Big( \textbf{tr}(\b{\Lambda} \b{\Lambda}_0^{-1}) - \log\big(\textbf{det}(\b{\Lambda} \b{\Lambda}_0^{-1})\big) - p \Big) \\ 
& \text{subject to}
& & \b{V}^\top \b{V} = \b{I}, \\
& & & \b{\Lambda} \succeq \b{0}, 
\end{aligned}
\end{equation}
where $\lambda>0$ is the regularization parameter, $\textbf{det}(.)$ denotes the determinant of matrix, and $q_{ij}$ models Gaussian distribution with the generalized Mahalanobis distance metric:
\begin{align*}
& q_{ij} := \exp(\|\b{x}_i - \b{x}_j\|_{\b{V} \b{\Lambda} \b{V}^\top}).
\end{align*}
The constraint $\b{V}^\top \b{V} = \b{I}$ means that the matrix $\b{V}$ belongs to the Stiefel manifold $\text{St}(p,d) := \{\b{V} \in \mathbb{R}^{d \times p} | \b{V}^\top \b{V} = \b{I}\}$ and the constraint $\b{\Lambda} \succeq \b{0}$ means $\b{\Lambda}$ belongs to the SPD manifold $\mathcal{S}^p_{++}$. 
Hence, these two variables belong to the product manifold $\text{St}(p,d) \times \mathcal{S}^p_{++}$. 
Hence, we can solve this optimization problem using Riemannian optimization methods \cite{absil2009optimization}. This method can also be kernelized; the reader can refer to {\citep[Section 4]{harandi2017joint}} for its kernel version. 



\subsubsection{Curvilinear Distance Metric Learning (CDML)}

\begin{lemma}[\cite{chen2019curvilinear}]
The generalized Mahalanobis distance can be restated as:
\begin{align}\label{equation_metric_int_arc_length}
& \|\b{x}_i - \b{x}_j\|_{\b{W}}^2 = \sum_{l=1}^p \|\b{u}_l\|_2^2 \Big( \int_{T_l(\b{x}_i)}^{T_l(\b{x}_j)} \|\b{u}_l\|_2\, dt \Big)^2,
\end{align}
where $\b{u}_l \in \mathbb{R}^d$ is the $l$-th column of $\b{U}$ in Eq. (\ref{equation_W_U_UT}), $t \in \mathbb{R}$, and $T_l(\b{x}) \in \mathbb{R}$ is the projection of $\b{x}$ satisfying $(\b{u}_l T_l(\b{x}) - \b{x})^\top \b{u}_l = 0$. 
\end{lemma}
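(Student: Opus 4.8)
The plan is to reduce the claimed identity to the projection form of the generalized Mahalanobis distance from Proposition \ref{proposition_metric_learning_projection}, and then show that the right-hand side collapses, term by term, back to that same projection form. First I would invoke Eq. (\ref{equation_metric_learning_projection}), which gives $\|\b{x}_i - \b{x}_j\|_{\b{W}}^2 = \|\b{U}^\top \b{x}_i - \b{U}^\top \b{x}_j\|_2^2$. Writing this out coordinatewise over the $p$ columns $\b{u}_1, \dots, \b{u}_p$ of $\b{U}$, the left-hand side becomes $\sum_{l=1}^p \big(\b{u}_l^\top (\b{x}_i - \b{x}_j)\big)^2$. It therefore suffices to establish the identity for each fixed index $l$ separately.

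Next I would solve the defining orthogonality condition for $T_l(\b{x})$. The equation $(\b{u}_l T_l(\b{x}) - \b{x})^\top \b{u}_l = 0$ expands to $T_l(\b{x}) \|\b{u}_l\|_2^2 = \b{u}_l^\top \b{x}$, so $T_l(\b{x}) = (\b{u}_l^\top \b{x}) / \|\b{u}_l\|_2^2$; geometrically, $\b{u}_l T_l(\b{x})$ is the orthogonal projection of $\b{x}$ onto the line spanned by $\b{u}_l$. The crucial simplification is that the integrand $\|\b{u}_l\|_2$ does not depend on the integration variable $t$, so the integral is simply $\|\b{u}_l\|_2 \big(T_l(\b{x}_j) - T_l(\b{x}_i)\big) = \b{u}_l^\top(\b{x}_j - \b{x}_i)/\|\b{u}_l\|_2$.

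Substituting this into the $l$-th summand of the right-hand side of Eq. (\ref{equation_metric_int_arc_length}) gives $\|\b{u}_l\|_2^2 \cdot \big(\b{u}_l^\top(\b{x}_j - \b{x}_i)\big)^2 / \|\b{u}_l\|_2^2 = \big(\b{u}_l^\top(\b{x}_i - \b{x}_j)\big)^2$, which matches the $l$-th term of the left-hand side; summing over $l$ then completes the argument. There is no real analytic obstacle here: the only subtlety is recognizing that $T_l$ is exactly the scalar projection coefficient onto $\b{u}_l$, after which the integral is trivial. I expect the integral notation is introduced chiefly to set up the later curvilinear generalization, where the one-dimensional coordinate $t$ is allowed to parametrize a nonlinear curve and the integrand genuinely varies with $t$; in the present linear case it contributes nothing beyond a constant factor.
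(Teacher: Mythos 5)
Your proposal is correct and follows essentially the same route as the paper's proof: both reduce $\|\b{x}_i - \b{x}_j\|_{\b{W}}^2$ to $\sum_{l=1}^p \big(\b{u}_l^\top(\b{x}_i - \b{x}_j)\big)^2$ via $\b{W} = \b{U}\b{U}^\top$ and identify $T_l$ as the orthogonal-projection coefficient onto $\b{u}_l$, so that the integral is just $\|\b{u}_l\|_2\big(T_l(\b{x}_j) - T_l(\b{x}_i)\big)$. The only cosmetic difference is that you evaluate the integral explicitly by solving $T_l(\b{x}) = \b{u}_l^\top\b{x}/\|\b{u}_l\|_2^2$, whereas the paper passes through a law-of-cosines step and then substitutes the arc length for the segment length; your direct computation is, if anything, slightly cleaner.
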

\begin{proof}
\begin{align*}
& \|\b{x}_i - \b{x}_j\|_{\b{W}}^2 = (\b{x}_i - \b{x}_j)^\top \b{W} (\b{x}_i - \b{x}_j) \\
&\overset{(\ref{equation_W_U_UT})}{=} (\b{x}_i - \b{x}_j)^\top \b{U}\b{U}^\top (\b{x}_i - \b{x}_j) = \|\b{U}^\top (\b{x}_i - \b{x}_j)\|_2^2 \\
&= \|[\b{u}_1^\top (\b{x}_i - \b{x}_j), \dots, \b{u}_p^\top (\b{x}_i - \b{x}_j)]^\top\|_2^2 \\
&= \sum_{l=1}^p \big( \b{u}_l^\top (\b{x}_i - \b{x}_j) \big)^2 \\
&\overset{(a)}{=} \sum_{l=1}^p \|\b{u}_l\|_2^2 \|\b{x}_i - \b{x}_j\|_2^2 \cos^2(\b{u}_l, \b{x}_i - \b{x}_j) \\
&\overset{(b)}{=} \sum_{l=1}^p \|\b{u}_l\|_2^2 \|\b{u}_l T_l(\b{x}_i) - \b{u}_l T_l(\b{x}_j)\|_2^2,
\end{align*}
where $(a)$ is because of the law of cosines and $(b)$ is because of $(\b{u}_l T_l(\b{x}) - \b{x})^\top \b{u}_l = 0$. 
The distance $\|\b{u}_l T_l(\b{x}_i) - \b{u}_l T_l(\b{x}_j)\|_2$ can be replaced by the length of the arc between $T_l(\b{x}_i)$ and $T_l(\b{x}_j)$ on the straight line $\b{u}_l t$ for $t \in \mathbb{R}$. This gives the Eq. (\ref{equation_metric_int_arc_length}). Q.E.D.
\end{proof}

The condition $(\b{u}_l T_l(\b{x}) - \b{x})^\top \b{u}_l = 0$ is equivalent to finding the nearest neighbor to the line $\b{u}_l t, \forall t \in \mathbb{R}$, i.e., $T_l(\b{x}) := \arg \min_{t \in \mathbb{R}} \|\b{u}_l t - \b{x}\|_2^2$ \cite{chen2019curvilinear}. This equation can be generalized to find the nearest neighbor to the geodesic curve $\b{\theta}_l(t)$ rather than the line $\b{u}_l t$:
\begin{align}
T_{\b{\theta}_l}(\b{x}) := \arg \min_{t \in \mathbb{R}} \|\b{\theta}_l(t) - \b{x}\|_2^2.
\end{align}
Hence, we can replace the arc length of the straight line in Eq. (\ref{equation_metric_int_arc_length}) with the arc length of the curve:
\begin{align}\label{equation_metric_int_arc_length_2}
& \|\b{x}_i - \b{x}_j\|_{\b{W}}^2 = \sum_{l=1}^p \alpha_l \Big( \int_{T_{\b{\theta}_l}(\b{x}_i)}^{T_{\b{\theta}_l}(\b{x}_j)} \|\b{\theta}'_l(t)\|_2\, dt \Big)^2,
\end{align}
where $\b{\theta}'_l(t)$ is derivative of $\b{\theta}_l(t)$ w.r.t. $t$ and $\alpha_l := ( \int_{0}^{1} \|\b{\theta}'_l(t)\|_2\, dt )^2$ is the scale factor. 
The Curvilinear Distance Metric Learning (CDML) \cite{chen2019curvilinear} uses this approximation of distance metric by the above curvy geodesic on manifold, i.e., Eq. (\ref{equation_metric_int_arc_length_2}). The optimization problem in CDML is:
\begin{equation}\label{equation_CDML_optimization}
\begin{aligned}
& \underset{\b{\Theta}}{\text{minimize}}
& & \frac{1}{n} \sum_{i=1}^n \mathcal{L}(\|\b{x}_i - \b{x}_j\|_{\b{W}}^2; y_{ij}) + \lambda \Omega(\b{\Theta}),
\end{aligned}
\end{equation}
where $n$ is the number of points, $\b{\Theta} := [\b{\theta}_1, \dots, \b{\theta}_p]$, $y_{ij}=1$ if $(\b{x}_i, \b{x}_j) \in \mathcal{S}$ and $y_{ij}=0$ if $(\b{x}_i, \b{x}_j) \in \mathcal{D}$, $\|\b{x}_i - \b{x}_j\|_{\b{W}}^2$ is defined in Eq. (\ref{equation_metric_int_arc_length_2}), $\lambda>0$ is the regularization parameter, $\mathcal{L}(.)$ is some loss function, and $\Omega(\b{\Theta})$ is some penalty term. 
The optimal $\b{\Theta}$, obtained from Eq. (\ref{equation_CDML_optimization}), can be used in Eq. (\ref{equation_metric_int_arc_length_2}) to have the optimal distance metric. 
A recent follow-up of CDML is \cite{zhang2021curvilinear}.

\subsection{Adversarial Metric Learning (AML)}

Adversarial Metric Learning (AML) \cite{chen2018adversarial} uses adversarial learning \cite{goodfellow2014generative,ghojogh2021generative} for metric learning. 
On one hand, we have a distinguishment stage which tries to discriminate the dissimilar points and push similar points close to one another. 
On the other hand, we have an confusion or adversarial stage which tries to fool the metric learning method by pulling the dissimilar points close to each other and pushing the similar points away. The distinguishment and confusion stages are trained simultaneously and they make each other stronger gradually.  

From the dataset, we form random pairs $\mathcal{X} := \{(\b{x}_i, \b{x}'_i)\}_{i=1}^{n/2}$. If $\b{x}_i$ and $\b{x}'_i$ are similar points, we set $y_i=1$ and if they are dissimilar, we have $y_i=-1$.
We also generate some random new points in pairs $\mathcal{X}^g := \{(\b{x}_i^g, \b{x}^{g'}_i)\}_{i=1}^{n/2}$. The generated points are updated iteratively by optimization of the confusion stage to fool the metric. 
The loss functions for both stages are Eq. (\ref{equation_GMML_optimization}) used in geometric mean metric learning (see Section \ref{section_geometric_mean_metric_learning}). 

The alternative optimization \cite{ghojogh2021kkt} used in AML is:
\begin{equation*}
\begin{aligned}
& \b{W}^{(t+1)} := \arg \min_{\b{W}} \Big( \sum_{y_i = 1} \|\b{x}_i - \b{x}'_i\|_{\b{W}}^2 \\
& + \sum_{y_i = -1} \|\b{x}_i - \b{x}'_i\|_{\b{W}^{-1}}^2 + \lambda_1 \big( \sum_{y_i = 1} \|\b{x}^{g(t)}_i - \b{x}^{g'(t)}_i\|_{\b{W}}^2 \\
& + \sum_{y_i = -1} \|\b{x}^{g(t)}_i - \b{x}^{g'(t)}_i\|_{\b{W}^{-1}}^2 \big) \Big), 
\end{aligned}
\end{equation*}
\begin{equation}
\begin{aligned}
& {\mathcal{X}}^{g(t+1)} := \arg \min_{\mathcal{X}'} \Big( \sum_{y_i = -1} \|\b{x}_i - \b{x}'_i\|_{\b{W}^{(t+1)}}^2 \\
& + \sum_{y_i = 1} \|\b{x}_i - \b{x}'_i\|_{(\b{W}^{(t+1)})^{-1}}^2 + \lambda_2 \big( \sum_{i=1}^{n/2} \|\b{x}_i - \b{x}^g_i\|_{\b{W}^{(t+1)}}^2 \\
&+ \sum_{i=1}^{n/2} \|\b{x}'_i - \b{x}^{g'}_i\|_{\b{W}^{(t+1)}}^2 \big) \Big),
\end{aligned}
\end{equation}
until convergence, where $\lambda_1, \lambda_2>0$ are the regularization parameters. 
Updating $\b{W}$ and $\mathcal{X}^g$ are the distinguishment and confusion stages, respectively. 
In the distinguishment stage, we find a weight matrix $\b{W}$ to minimize the distances of similar points in both $\mathcal{X}$ and $\mathcal{X}^g$ and maximize the distances of dissimilar points in both $\mathcal{X}$ and $\mathcal{X}^g$. 
In the confusion stage, we generate new points $\mathcal{X}^g$ to adversarially maximize the distances of similar points in $\mathcal{X}$ and adversarially minimize the distances of dissimilar points in $\mathcal{X}$. In this stage, we also make the points $\b{x}_i^g$ and $\b{x}_i^{g'}$ similar to their corresponding points $\b{x}_i$ and $\b{x}'_i$, respectively. 

\section{Probabilistic Metric Learning}\label{section_probabilistic_metric_learning}

Probabilistic methods for metric learning learn the weight matrix in the generalized Mahalanobis distance using probability distributions. They define some probability distribution for each point accepting other points as its neighbors. Of course, the closer points have higher probability for being neighbors. 

\subsection{Collapsing Classes}

One probabilistic method for metric learning is collapsing similar points to the same class while pushing the dissimilar points away from one another \cite{globerson2005metric}. 
The probability distribution between points for being neighbors can be a Gaussian distribution which uses the generalized Mahalanobis distance as its metric. The distribution for $\b{x}_i$ to take $\b{x}_j$ as its neighbor is \cite{goldberger2005neighbourhood}:
\begin{align}\label{equation_P_W_classCollapse}
p^W_{ij} := \frac{\exp(-\|\b{x}_i - \b{x}_j\|_{\b{W}}^2)}{\sum_{k \neq i} \exp(-\|\b{x}_i - \b{x}_k\|_{\b{W}}^2)}, \quad j \neq i,
\end{align}
where we define the normalization factor, also called the partition function, as $Z_i := \sum_{k \neq i} \exp(-\|\b{x}_i - \b{x}_k\|_{\b{W}}^2)$. This factor makes the summation of distribution one. 
Eq. (\ref{equation_P_W_classCollapse}) is a Gaussian distribution whose covariance matrix is $\b{W}^{-1}$ because it is equivalent to:
\begin{align*}
p^W_{ij} := \frac{1}{Z_i} \exp\big(-(\b{x}_i - \b{x}_j)^\top \b{W} (\b{x}_i - \b{x}_j)\big).
\end{align*}

We want the similar points to collapse to the same point after projection onto the subspace of metric (see Proposition \ref{proposition_metric_learning_projection}). Hence, we define the desired neighborhood distribution to be a bi-level distribution \cite{globerson2005metric}:
\begin{align}
p^0_{ij} := 
\left\{
    \begin{array}{ll}
        1 & \mbox{if } (\b{x}_i, \b{x}_j) \in \mathcal{S} \\
        0 & \mbox{if } (\b{x}_i, \b{x}_j) \in \mathcal{D}.
    \end{array}
\right.
\end{align}
This makes all similar points of a group/class a same point after projection. 

\subsubsection{Collapsing Classes in the Input Space}

For making $p^W_{ij}$ close to the desired distribution $p^0_{ij}$, we minimize the KL-divergence between them \cite{globerson2005metric}:
\begin{equation}\label{equation_optimization_collapse_classes}
\begin{aligned}
& \underset{\b{W}}{\text{minimize}}
& & \sum_{i=1}^n \sum_{j=1, j \neq i}^n \text{KL}(p^0_{ij}\, \|\, p^W_{ij}) \\ 
& \text{subject to}
& & \b{W} \succeq \b{0}.
\end{aligned}
\end{equation}

\begin{lemma}[\cite{globerson2005metric}]\label{lemma_classCollapse_gradient}
Let the the objective function in Eq. (\ref{equation_optimization_collapse_classes}) be denoted by $c := \sum_{i=1}^n \sum_{j=1, j \neq i}^n \text{KL}(p^0_{ij}\, \|\, p^W_{ij})$. The gradient of this function w.r.t. $\b{W}$ is:
\begin{align}
&\frac{\partial c}{\partial \b{W}} = \sum_{i=1}^n \sum_{j=1, j \neq i}^n (p^0_{ij} - p^W_{ij}) (\b{x}_i - \b{x}_j) (\b{x}_i - \b{x}_j)^\top. \label{equation_classCollapse_gradient}
\end{align}
\end{lemma}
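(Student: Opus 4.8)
The plan is to expand the KL divergence, isolate the part that does not depend on $\b{W}$, and then differentiate the surviving log-likelihood term; the gradient of the log-partition function is what will produce the $p^W_{ij}$ contribution. First I would write
\begin{align*}
c = \sum_{i=1}^n \sum_{j \neq i} p^0_{ij} \log p^0_{ij} - \sum_{i=1}^n \sum_{j \neq i} p^0_{ij} \log p^W_{ij},
\end{align*}
and observe that the first double sum is constant in $\b{W}$, so it vanishes under $\partial / \partial \b{W}$. Only the cross-entropy term $-\sum_i \sum_{j\neq i} p^0_{ij} \log p^W_{ij}$ remains to be handled.

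Next I would substitute the explicit form of $p^W_{ij}$ from Eq. (\ref{equation_P_W_classCollapse}), namely $\log p^W_{ij} = -\|\b{x}_i - \b{x}_j\|_{\b{W}}^2 - \log Z_i$ with $Z_i := \sum_{k \neq i} \exp(-\|\b{x}_i - \b{x}_k\|_{\b{W}}^2)$. This turns the $\b{W}$-dependent part of $c$ into
\begin{align*}
\sum_{i=1}^n \sum_{j \neq i} p^0_{ij}\, \|\b{x}_i - \b{x}_j\|_{\b{W}}^2 + \sum_{i=1}^n \Big(\sum_{j \neq i} p^0_{ij}\Big) \log Z_i.
\end{align*}
The two pieces are then differentiated separately. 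For the quadratic term I would use $\|\b{x}_i - \b{x}_j\|_{\b{W}}^2 = \textbf{tr}\big(\b{W} (\b{x}_i - \b{x}_j)(\b{x}_i - \b{x}_j)^\top\big)$, so that $\partial \|\b{x}_i - \b{x}_j\|_{\b{W}}^2 / \partial \b{W} = (\b{x}_i - \b{x}_j)(\b{x}_i - \b{x}_j)^\top$. For the log-partition term the chain rule gives $\partial \log Z_i / \partial \b{W} = -\sum_{k \neq i} p^W_{ik} (\b{x}_i - \b{x}_k)(\b{x}_i - \b{x}_k)^\top$, since each $\exp(-\|\b{x}_i - \b{x}_k\|_{\b{W}}^2)/Z_i$ is precisely $p^W_{ik}$. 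This is the standard softmax/score-function derivative and is the computational heart of the argument.

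Finally I would combine the two gradients. The crucial step is to use that $p^0_{ij}$ is a normalized neighborhood distribution, so $\sum_{j \neq i} p^0_{ij} = 1$ for every $i$; with this, the factor $\big(\sum_{j \neq i} p^0_{ij}\big)$ multiplying $\partial \log Z_i / \partial \b{W}$ collapses to $1$ and the two double sums can be merged index-for-index into $\sum_i \sum_{j \neq i}(p^0_{ij} - p^W_{ij})(\b{x}_i - \b{x}_j)(\b{x}_i - \b{x}_j)^\top$, which is Eq. (\ref{equation_classCollapse_gradient}). The main obstacle I anticipate is bookkeeping rather than depth: one must correctly differentiate the log-partition function (noticing it reintroduces $p^W_{ik}$) and then exploit the row-normalization of $p^0$ to line up the two sums over the \emph{same} dummy index so that the cancellation into the difference $p^0_{ij} - p^W_{ij}$ goes through cleanly.
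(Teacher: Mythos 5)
Your proposal is correct and follows essentially the same route as the paper's proof: both drop the constant entropy part of the KL divergence and reduce everything to differentiating the cross-entropy term, where the quadratic piece contributes the $p^0_{ij}$ term and the log-partition (softmax) derivative contributes the $-p^W_{ij}$ term. The only difference is organizational --- the paper routes the chain rule through the intermediate variables $r_{ij} = \|\b{x}_i - \b{x}_j\|_{\b{W}}^2$ in the style of the SNE gradient derivation, whereas you differentiate with respect to $\b{W}$ directly --- and both arguments rely on the same normalization assumption $\sum_{j \neq i} p^0_{ij} = 1$ on the target distribution.
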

\begin{proof}
The derivation is similar to the derivation of gradient in Stochastic Neighbor Embedding (SNE) and t-SNE \cite{hinton2003stochastic,maaten2008visualizing,ghojogh2020stochastic}.
Let:
\begin{align}\label{equation_SNE_r}
\mathbb{R} \ni r_{ij} := d_{ij}^2 = ||\b{x}_i - \b{x}_j||_{\b{W}}^2.
\end{align}
By changing $\b{x}_i$, we only have change impact in $d_{ij}$ and $d_{ji}$ (or $r_{ij}$ and $r_{ji}$) for all $j$'s.
According to chain rule, we have:
\begin{align*}
\frac{\partial c}{\partial \b{W}} = \sum_{i,j} \big(\frac{\partial c}{\partial r_{ij}} \frac{\partial r_{ij}}{\partial \b{W}} + \frac{\partial c}{\partial r_{ji}} \frac{\partial r_{ji}}{\partial \b{W}}\big).
\end{align*}
According to Eq. (\ref{equation_SNE_r}), we have:
\begin{align*}
& r_{ij} = ||\b{x}_i - \b{x}_j||_{\b{W}}^2 = \textbf{tr}((\b{x}_i - \b{x}_j)^\top \b{W} (\b{x}_i - \b{x}_j))\\
&~~~~~~~~~~~~~ \overset{(a)}{=} \textbf{tr}((\b{x}_i - \b{x}_j) (\b{x}_i - \b{x}_j)^\top \b{W}) \\
&\implies \frac{\partial r_{ij}}{\partial \b{W}} = (\b{x}_i - \b{x}_j) (\b{x}_i - \b{x}_j)^\top, \\
& r_{ji} = ||\b{x}_j - \b{x}_i||_{\b{W}}^2 = ||\b{x}_i - \b{x}_j||_{\b{W}}^2 = r_{ij} \\
&\implies \frac{\partial r_{ji}}{\partial \b{W}} = (\b{x}_i - \b{x}_j) (\b{x}_i - \b{x}_j)^\top,
\end{align*}
where $(a)$ is because of the cyclic property of trace. 
Therefore:
\begin{align}\label{equation_SNE_deriv_c1_y}
\therefore ~~~~ \frac{\partial c}{\partial \b{W}} = 2 \sum_{i,j} \big(\frac{\partial c}{\partial r_{ij}} \big) (\b{x}_i - \b{x}_j) (\b{x}_i - \b{x}_j)^\top.
\end{align}
The dummy variables in cost function can be re-written as:
\begin{align*}
c &= \sum_{k} \sum_{l\neq k} p_0(l|k) \log (\frac{p_0(l|k)}{p_W(l|k)}) \\
&= \sum_{k \neq l} p_0(l|k) \log (\frac{p_0(l|k)}{p_W(l|k)}) \\
&= \sum_{k \neq l} \big(p_0(l|k) \log (p_0(l|k)) - p_0(l|k) \log (p_W(l|k)) \big),
\end{align*}
whose first term is a constant with respect to $p_W(l|k)$ and thus to $\b{W}$. We have:
\begin{align*}
\mathbb{R} \ni \frac{\partial c}{\partial r_{ij}} = - \sum_{k \neq l} p_0(l|k) \frac{\partial (\log (p_W(l|k)))}{\partial r_{ij}}.
\end{align*}
According to Eqs. (\ref{equation_P_W_classCollapse}) and (\ref{equation_SNE_r}), the $p_W(l|k)$ is:
\begin{align*}
p_W(l|k) := \frac{\exp(-d_{kl}^2)}{\sum_{k \neq f}\exp(-d_{kf}^2)} = \frac{\exp(-r_{kl})}{\sum_{k \neq f}\exp(-r_{kf})}.
\end{align*}
We take the denominator of $p_W(l|k)$ as:
\begin{align}\label{equation_SNE_beta}
\beta := \sum_{k \neq f} \exp(- d_{kf}^2) = \sum_{k \neq f} \exp(- r_{kf}).
\end{align}
We have $\log (p_W(l|k)) = \log (p_W(l|k)) + \log \beta - \log \beta = \log (p_W(l|k)\, \beta) - \log \beta$. Therefore:
\begin{align*}
&\therefore ~~~ \frac{\partial c}{\partial r_{ij}} = - \sum_{k \neq l} p_0(l|k) \frac{\partial \big(\log (p_W(l|k) \beta) - \log \beta\big)}{\partial r_{ij}} \\
&= - \sum_{k \neq l} p_0(l|k) \bigg[\frac{\partial \big(\log (p_W(l|k) \beta)\big)}{\partial r_{ij}} - \frac{\partial \big(\log \beta\big)}{\partial r_{ij}}\bigg] \\
&= - \sum_{k \neq l} p_0(l|k) \bigg[\frac{1}{p_W(l|k) \beta}\frac{\partial \big( p_W(l|k) \beta\big)}{\partial r_{ij}} - \frac{1}{\beta}\frac{\partial \beta}{\partial r_{ij}}\bigg].
\end{align*}
The $p_W(l|k) \beta$ is:
\begin{align*}
p_W(l|k) \beta &= \frac{\exp(-r_{kl})}{\sum_{f \neq k}\exp(-r_{kf})} \times \sum_{k \neq f} \exp(- r_{kf}) \\
&= \exp(-r_{kl}).
\end{align*}
Therefore, we have:
\begin{align*}
&\therefore ~~~ \frac{\partial c}{\partial r_{ij}} = \\
& - \sum_{k \neq l} p_0(l|k) \bigg[\frac{1}{p_W(l|k) \beta}\frac{\partial \big( \exp(-r_{kl}) \big)}{\partial r_{ij}} - \frac{1}{\beta}\frac{\partial \beta}{\partial r_{ij}}\bigg].
\end{align*}
The $\partial \big( \exp(-r_{kl}) \big)/\partial r_{ij}$ is non-zero for only $k=i$ and $l=j$; therefore:
\begin{align*}
\frac{\partial \big( \exp(-r_{ij}) \big)}{\partial r_{ij}} &= - \exp(-r_{ij}), \\
\frac{\partial \beta}{\partial r_{ij}} &= \frac{\partial \sum_{k \neq f} \exp(- r_{kf})}{\partial r_{ij}} = \frac{\partial \exp(- r_{ij})}{\partial r_{ij}} \\
&= - \exp(- r_{ij}).
\end{align*}
Therefore:
\begin{align*}
&\therefore ~~~ \frac{\partial c}{\partial r_{ij}} = \\
&- \bigg( p^0_{ij} \Big[\frac{-1}{p^W_{ij} \beta} \exp(-r_{ij})\Big] + 0 + \dots + 0 \bigg) \\
& - \sum_{k \neq l} p_0(l|k) \Big[\frac{1}{\beta} \exp(- r_{ij}) \Big].
\end{align*}
We have $\sum_{k \neq l} p_0(l|k) = 1$ because summation of all possible probabilities is one. Thus:
\begin{align}
\frac{\partial c}{\partial r_{ij}} &= -  p^0_{ij} \Big[\frac{-1}{p^W_{ij} \beta} \exp(-r_{ij})\Big] - \Big[\frac{1}{\beta} \exp(- r_{ij}) \Big] \nonumber \\
&= \underbrace{\frac{\exp(- r_{ij})}{\beta}}_{=p^W_{ij}} \Big[\frac{p^0_{ij}}{p^W_{ij}} - 1\Big] = p^0_{ij}  - p^W_{ij}. \label{equation_SNE_derivative_r_ij}
\end{align}
Substituting the obtained derivative in Eq. (\ref{equation_SNE_deriv_c1_y}) gives Eq. (\ref{equation_classCollapse_gradient}). Q.E.D.
\end{proof}

The optimization problem (\ref{equation_optimization_collapse_classes}) is convex; hence, it has a unique solution. We can solve it using any optimization method such as the projected gradient method, where after every gradient descent step, we project the solution onto the positive semi-definite cone \cite{ghojogh2021kkt}:
\begin{align*}
& \b{W} := \b{W} - \eta \frac{\partial c}{\partial \b{W}}, \\
& \b{W} := \b{V}\, \textbf{diag}(\max(\lambda_1, 0), \dots, \max(\lambda_d, 0))\, \b{V}^\top,
\end{align*}
where $\eta>0$ is the learning rate and $\b{V}$ and $\b{\Lambda} = \textbf{diag}(\lambda_1, \dots, \lambda_d)$ are the eigenvectors and eigenvalues of $\b{W}$, respectively (see Eq. (\ref{equation_W_U_UT})). 

\subsubsection{Collapsing Classes in the Feature Space}

According to Eq. (\ref{equation_Mahalanobis_distance_in_RKHS}), the distance in the feature space can be stated using kernels as $\|\b{k}_i - \b{k}_j\|_{\b{T}\b{T}^\top}^2$ where $\b{k}_i \in \mathbb{R}^n$ is the kernel vector between dataset $\b{X}$ and the point $\b{x}_i$. 
We define $\b{R} := \b{T}\b{T}^\top \in \mathbb{R}^{n \times n}$.
Hence, in the feature space, Eq. (\ref{equation_P_W_classCollapse}) becomes:
\begin{align}
p^R_{ij} := \frac{\exp(-\|\b{k}_i - \b{k}_j\|_{\b{R}}^2)}{\sum_{k \neq i} \exp(-\|\b{k}_i - \b{k}_k\|_{\b{R}}^2)}, \quad j \neq i.
\end{align}
The gradient in Eq. (\ref{equation_classCollapse_gradient}) becomes:
\begin{align}
&\frac{\partial c}{\partial \b{R}} = \sum_{i=1}^n \sum_{j=1, j \neq i}^n (p^0_{ij} - p^R_{ij}) (\b{k}_i - \b{k}_j) (\b{k}_i - \b{k}_j)^\top. \label{equation_classCollapse_gradient_kernel}
\end{align}
Again, we can find the optimal $\b{R}$ using projected gradient method. This gives us the optimal metric for collapsing classes in the feature space \cite{globerson2005metric}. Note that we can also regularize the objective function, using the trace operator or Frobenius norm, for avoiding overfitting.

\subsection{Neighborhood Component Analysis Methods}

Neighborhood Component Analysis (NCA) is one of the most well-known probabilistic metric learning methods. In the following, we introduce different variants of NCA. 

\subsubsection{Neighborhood Component Analysis (NCA)}\label{section_NCA_spectral}

In the original NCA \cite{goldberger2005neighbourhood}, the probability that $\b{x}_j$ takes $\b{x}_i$ as its neighbor is as in Eq. (\ref{equation_P_W_classCollapse}), where we assume $p^W_{ii} = 0$ by convention:
\begin{align}\label{equation_P_W_NCA}
p^W_{ij} := 
\left\{
    \begin{array}{ll}
        \frac{\exp(-\|\b{x}_i - \b{x}_j\|_{\b{W}}^2)}{\sum_{k \neq i} \exp(-\|\b{x}_i - \b{x}_k\|_{\b{W}}^2)} & \mbox{if } j \neq i \\
        0 & \mbox{if } j = i.
    \end{array}
\right.
\end{align}
Consider the decomposition of the weight matrix of metric as in Eq. (\ref{equation_W_U_UT}), i.e., $\b{W} = \b{U} \b{U}^\top$. 
Let $\mathcal{S}_i$ denote the set of similar points to $\b{x}_i$ where $(\b{x}_i, \b{x}_j) \in \mathcal{S}$. 
The optimization problem of NCA is to find a $\b{U}$ to maximize this probability distribution for similar points \cite{goldberger2005neighbourhood}:
\begin{equation}\label{equation_optimization_NCA}
\begin{aligned}
& \underset{\b{U}}{\text{maximize}}
& & \sum_{(\b{x}_i, \b{x}_j) \in \mathcal{S}} p^W_{ij} = \sum_{i=1}^n \sum_{\b{x}_j \in \mathcal{S}_i} p^W_{ij} = \sum_{i=1}^n p^W_i,
\end{aligned}
\end{equation}
where:
\begin{align}
p^W_i := \sum_{\b{x}_j \in \mathcal{S}_i} p^W_{ij}.
\end{align}
Note that the required constraint $\b{W} \succeq \b{0}$ is already satisfied because of the decomposition in Eq. (\ref{equation_P_W_classCollapse}). 

\begin{lemma}[\cite{goldberger2005neighbourhood}]
Suppose the objective function of Eq. (\ref{equation_optimization_NCA}) is denoted by $c$. The gradient of this cost function w.r.t. $\b{U}$ is:
\begin{equation}\label{equation_NCA_gradient}
\begin{aligned}
\frac{\partial c}{\partial \b{U}} = &\,2 \sum_{i=1}^n \Big( p^W_i \sum_{k=1}^n p^W_{ik} (\b{x}_i - \b{x}_k) (\b{x}_i - \b{x}_k)^\top \\
&- \sum_{\b{x}_j \in \mathcal{S}_i} p^W_{ij} (\b{x}_i - \b{x}_j) (\b{x}_i - \b{x}_j)^\top \Big) \b{U}.
\end{aligned}
\end{equation}
\end{lemma}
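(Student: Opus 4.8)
The plan is to mirror the chain-rule computation used in the proof of Lemma \ref{lemma_classCollapse_gradient}, but now differentiating the objective directly with respect to the projection matrix $\b{U}$ rather than $\b{W}$. First I would set $r_{ij} := \|\b{x}_i - \b{x}_j\|_{\b{W}}^2 = (\b{x}_i - \b{x}_j)^\top \b{U}\b{U}^\top (\b{x}_i - \b{x}_j)$ and record its derivative with respect to $\b{U}$. Since $r_{ij}$ is a quadratic form in $\b{U}$, we obtain $\partial r_{ij}/\partial \b{U} = 2(\b{x}_i - \b{x}_j)(\b{x}_i - \b{x}_j)^\top \b{U}$; this is where the overall factor of $2$ in the claimed formula originates, and it is worth contrasting with Lemma \ref{lemma_classCollapse_gradient}, where the factor of $2$ instead arose from summing the symmetric pair $r_{ij}$ and $r_{ji}$ while differentiating the \emph{linear} dependence on $\b{W}$.

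The key structural observation is that each probability $p^W_{ij}$ in Eq. (\ref{equation_P_W_NCA}) depends on $\b{U}$ only through the distances $\{r_{ik}\}_{k \neq i}$ emanating from its anchor point $\b{x}_i$, because the partition function $Z_i = \sum_{k \neq i}\exp(-r_{ik})$ is row-specific. Hence, when applying the chain rule to $c = \sum_{i=1}^n p^W_i = \sum_{i=1}^n \sum_{\b{x}_j \in \mathcal{S}_i} p^W_{ij}$, only the terms sharing the same anchor contribute, so there is no double-counting of $r_{ij}$ versus $r_{ji}$. I would then compute the two standard softmax partial derivatives: $\partial p^W_{ij}/\partial r_{ij} = -p^W_{ij}(1 - p^W_{ij})$ for the numerator term, and $\partial p^W_{ij}/\partial r_{ik} = p^W_{ij}\, p^W_{ik}$ for $k \neq j$ coming through the partition function.

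Combining these, for a fixed anchor $i$ the contribution of $p^W_{ij}$ to the gradient is $2\big[-p^W_{ij}(\b{x}_i-\b{x}_j)(\b{x}_i-\b{x}_j)^\top + p^W_{ij}\sum_{k \neq i} p^W_{ik}(\b{x}_i-\b{x}_k)(\b{x}_i-\b{x}_k)^\top\big]\b{U}$, where the self-term $(p^W_{ij})^2(\b{x}_i-\b{x}_j)(\b{x}_i-\b{x}_j)^\top$ has been absorbed back into the full sum over $k$. Summing over $\b{x}_j \in \mathcal{S}_i$ then replaces the leading factor $p^W_{ij}$ in the second piece by $p^W_i := \sum_{\b{x}_j \in \mathcal{S}_i} p^W_{ij}$, and summing over $i$ yields Eq. (\ref{equation_NCA_gradient}) exactly (using $p^W_{ii}=0$ to rewrite $\sum_{k \neq i}$ as $\sum_{k=1}^n$). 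The main obstacle, as in the collapsing-classes lemma, is the bookkeeping in this final step: one must correctly merge the diagonal ($k=j$) contribution with the off-diagonal ($k \neq j$) contributions so that they telescope into the single weighted scatter $p^W_i \sum_{k} p^W_{ik}(\b{x}_i-\b{x}_k)(\b{x}_i-\b{x}_k)^\top$. Keeping the anchor-versus-neighbor roles straight in the double summation is the only genuinely delicate part of the argument.
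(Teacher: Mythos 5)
Your derivation is correct and follows exactly the route the paper prescribes: the paper omits the proof of this lemma and simply defers to the chain-rule argument of Lemma \ref{lemma_classCollapse_gradient}, and you have filled in those details accurately, including the correct softmax partials $\partial p^W_{ij}/\partial r_{ij} = -p^W_{ij}(1-p^W_{ij})$ and $\partial p^W_{ij}/\partial r_{ik} = p^W_{ij}p^W_{ik}$, the absorption of the self-term into the full sum over $k$, and the origin of the overall factor of $2$ from $\partial r_{ij}/\partial \b{U} = 2(\b{x}_i-\b{x}_j)(\b{x}_i-\b{x}_j)^\top\b{U}$. Your observation that differentiating with respect to $\b{U}$ avoids the $r_{ij}$-versus-$r_{ji}$ pairing used in the earlier lemma (since each $p^W_{ij}$ depends only on distances anchored at $\b{x}_i$) is a correct and slightly cleaner piece of bookkeeping than the paper's own template.
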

The derivation of this gradient is similar to the approach in the proof of Lemma \ref{lemma_classCollapse_gradient}. 
We can use gradient ascent for solving the optimization.

Another approach is to maximize the log-likelihood of neighborhood probability \cite{goldberger2005neighbourhood}:
\begin{equation}\label{equation_optimization_NCA_2}
\begin{aligned}
& \underset{\b{U}}{\text{maximize}}
& & \sum_{i=1}^n \log\Big( \sum_{\b{x}_j \in \mathcal{S}_i} p^W_{ij} \Big),
\end{aligned}
\end{equation}
whose gradient is \cite{goldberger2005neighbourhood}:
\begin{equation}
\begin{aligned}
\frac{\partial c}{\partial \b{U}} = &\,2 \sum_{i=1}^n \Big( \sum_{k=1}^n p^W_{ik} (\b{x}_i - \b{x}_k) (\b{x}_i - \b{x}_k)^\top \\
& - \frac{\sum_{\b{x}_j \in \mathcal{S}_i} p^W_{ij} (\b{x}_i - \b{x}_j) (\b{x}_i - \b{x}_j)^\top}{\sum_{\b{x}_j \in \mathcal{S}_i} p^W_{ij}} \Big) \b{U}.
\end{aligned}
\end{equation}
Again, gradient ascent can give us the optimal $\b{U}$. 
As explained in Proposition \ref{proposition_metric_learning_projection}, the subspace is metric is the column space of $\b{U}$ and projection of points onto this subspace reduces the dimensionality of data. 

\subsubsection{Regularized Neighborhood Component Analysis}

It is shown by some experiments that NCA can overfit to training data for high-dimensional data \cite{yang2007regularized}. 
Hence, we can regularize it to avoid overfitting. In regularized NCA \cite{yang2007regularized}, we use the log-posterior of the matrix $\b{U}$ which is equal to:
\begin{align}\label{equation_regularized_NCA_posterior}
\mathbb{P}(\b{U}|\b{x}_i, \mathcal{S}_i) = \frac{\mathbb{P}(\b{x}_i, \mathcal{S}_i|\b{U})\, \mathbb{P}(\b{U})}{\mathbb{P}(\b{x}_i, \mathcal{S}_i)},
\end{align}
according to the Bayes' rule. 
We can use Gaussian distribution for the prior:
\begin{align}\label{equation_regularized_NCA_prior}
\mathbb{P}(\b{U}) = \prod_{k=1}^d \prod_{l=1}^d c\, \exp(-\lambda (\b{U}(k,l))^2),
\end{align}
where $c>0$ is a constant factor including the normalization factor, $\lambda>0$ is the inverse of variance, and $\b{U}(k,l)$ is the $(k,l)$-th element of $\b{U} \in \mathbb{R}^{d \times d}$. Note that we can have $\b{U} \in \mathbb{R}^{d \times p}$ if we truncate it to have $p$ leading eigenvectors of $\b{W}$ (see Eq. (\ref{equation_W_U_UT})). 
The likelihood 
\begin{align}\label{equation_regularized_NCA_likelihood}
\mathbb{P}(\b{x}_i, \mathcal{S}_i|\b{U}) \propto \exp\Big( \sum_{(\b{x}_i, \b{x}_j) \in \mathcal{S}} p^W_{ij} \Big).
\end{align}
The regularized NCA maximizes the log-posterior \cite{yang2007regularized}:
\begin{equation*}
\begin{aligned}
&\log \mathbb{P}(\b{U}|\b{x}_i, \mathcal{S}_i) \overset{(\ref{equation_regularized_NCA_posterior})}{=} \log \mathbb{P}(\b{x}_i, \mathcal{S}_i|\b{U}) + \log \mathbb{P}(\b{U}) \\
&- \underbrace{\log \mathbb{P}(\b{x}_i, \mathcal{S}_i)}_{\text{constant w.r.t. } \b{U}} \overset{(a)}{=} \sum_{(\b{x}_i, \b{x}_j) \in \mathcal{S}} p^W_{ij} - \lambda \|\b{U}\|_F^2,
\end{aligned}
\end{equation*}
where $(a)$ is because of Eqs. (\ref{equation_regularized_NCA_prior}) and (\ref{equation_regularized_NCA_likelihood}) and $\|.\|_F$ denotes the Frobenius norm. 
Hence, the optimization problem of regularized NCA is \cite{yang2007regularized}:
\begin{equation}\label{equation_optimization_regularized_NCA}
\begin{aligned}
& \underset{\b{U}}{\text{maximize}}
& & \sum_{(\b{x}_i, \b{x}_j) \in \mathcal{S}} p^W_{ij} - \lambda \|\b{U}\|_F^2,
\end{aligned}
\end{equation}
where $\lambda>0$ can be seen as the regularization parameter. 
The gradient is similar to Eq. (\ref{equation_NCA_gradient}) but plus the derivative of the regularization term which is $-2\lambda \b{U}$. 

\subsubsection{Fast Neighborhood Component Analysis}\label{section_fast_NCA}

\textbf{-- Fast NCA:}
The fast NCA \cite{yang2012fast} accelerates NCA by using $k$-Nearest Neighbors ($k$NN) rather than using all points for computing the neighborhood distribution of every point. Let $\mathcal{N}_i$ and $\mathcal{M}_i$ denote the $k$NN of $\b{x}_i$ among the similar points to $\b{x}_i$ (denoted by $\mathcal{S}_i$) and dissimilar points (denoted by $\mathcal{D}_i$), respectively. 
Fast NCA uses following probability distribution for $\b{x}_i$ to take $\b{x}_i$ as its neighbor \cite{yang2012fast}:
\begin{align}\label{equation_P_W_fastNCA}
&p^W_{ij} := \left\{
    \begin{array}{ll}
        \frac{\exp(-\|\b{x}_i - \b{x}_j\|_{\b{W}})}{\sum_{\b{x}_k \in \mathcal{N}_i \cup \mathcal{M}_i} \exp(-\|\b{x}_i - \b{x}_k\|_{\b{W}})} & \mbox{if } \b{x}_k \in \mathcal{N}_i \cup \mathcal{M}_i \\
        0 & \mbox{otherwise.}
    \end{array}
\right.
\end{align}
The optimization problem of fast NCA is similar to Eq. (\ref{equation_optimization_regularized_NCA}):
\begin{equation}\label{equation_optimization_fast_NCA}
\begin{aligned}
& \underset{\b{U}}{\text{maximize}}
& & \sum_{i=1}^n \sum_{\b{x}_j \in \mathcal{M}_i} p^W_{ij} - \lambda \|\b{U}\|_F^2,
\end{aligned}
\end{equation}
where $p^W_{ij}$ is Eq. (\ref{equation_P_W_fastNCA}) and $\b{U}$ is the matrix in the decomposition of $\b{W}$ (see Eq. (\ref{equation_W_U_UT})). 

\begin{lemma}[\cite{yang2012fast}]
Suppose the objective function of Eq. (\ref{equation_optimization_fast_NCA}) is denoted by $c$. The gradient of this cost function w.r.t. $\b{U}$ is:
\begin{equation}\label{equation_fastNCA_gradient}
\begin{aligned}
&\frac{\partial c}{\partial \b{U}} = \sum_{i=1}^n \Big( p^W_i \sum_{\b{x}_k \in \mathcal{N}_i} p^W_{ik} (\b{x}_i - \b{x}_k) (\b{x}_i - \b{x}_k)^\top \\
&+ (p^W_i-1) \sum_{\b{x}_j \in \mathcal{M}_i} p^W_{ij} (\b{x}_i - \b{x}_j) (\b{x}_i - \b{x}_j)^\top \Big) \b{U} - 2\lambda \b{U}.
\end{aligned}
\end{equation}
\end{lemma}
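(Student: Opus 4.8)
The plan is to follow the structure of the proof of Lemma~\ref{lemma_classCollapse_gradient}, but to differentiate with respect to $\b{U}$ rather than $\b{W}$ and to account for the softmax form of the neighborhood probability restricted to $\mathcal{N}_i \cup \mathcal{M}_i$. First I would introduce the squared-distance shorthand $r_{il} := \|\b{x}_i - \b{x}_l\|_{\b{W}}^2 = \|\b{U}^\top(\b{x}_i - \b{x}_l)\|_2^2$ and the partition function $Z_i := \sum_{\b{x}_k \in \mathcal{N}_i \cup \mathcal{M}_i} \exp(-r_{ik})$, so that Eq.~(\ref{equation_P_W_fastNCA}) reads $p^W_{ij} = \exp(-r_{ij})/Z_i$. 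The objective of Eq.~(\ref{equation_optimization_fast_NCA}) then becomes $c = \sum_{i=1}^n p^W_i - \lambda \|\b{U}\|_F^2$ with $p^W_i := \sum_{\b{x}_j \in \mathcal{M}_i} p^W_{ij}$, and the chain rule $\partial c / \partial \b{U} = \sum_{i,l}(\partial p^W_i/\partial r_{il})(\partial r_{il}/\partial \b{U}) - 2\lambda \b{U}$ organizes the whole computation.

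Two elementary derivatives do the work. The outer derivative comes from Eq.~(\ref{equation_W_U_UT}): since $r_{il} = (\b{x}_i - \b{x}_l)^\top \b{U}\b{U}^\top(\b{x}_i - \b{x}_l)$, I get $\partial r_{il}/\partial \b{U} = 2(\b{x}_i - \b{x}_l)(\b{x}_i - \b{x}_l)^\top \b{U}$, which supplies both the rank-one matrices and the right factor $\b{U}$ appearing in the claim. The inner derivative is the standard softmax calculation: differentiating $\log p^W_{ij} = -r_{ij} - \log Z_i$ gives $\partial p^W_{ij}/\partial r_{il} = p^W_{ij}(p^W_{il} - [l=j])$, where $[\cdot]$ is the indicator.

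Summing the inner derivative over $\b{x}_j \in \mathcal{M}_i$ is the key step that produces the two distinct coefficients. It yields $\partial p^W_i / \partial r_{il} = p^W_i\, p^W_{il} - [l \in \mathcal{M}_i]\, p^W_{il}$, so the coefficient of the rank-one term is $p^W_i$ when $l \in \mathcal{N}_i$ (a similar neighbor, which influences $c$ only through the normalizer $Z_i$) and $(p^W_i - 1)$ when $l \in \mathcal{M}_i$ (a dissimilar neighbor, which additionally appears in the numerator sum defining $p^W_i$). Multiplying by $\partial r_{il}/\partial \b{U}$, summing over $i$ and over $l \in \mathcal{N}_i \cup \mathcal{M}_i$, and appending the regularizer derivative $-2\lambda \b{U}$ assembles Eq.~(\ref{equation_fastNCA_gradient}).

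I expect the main obstacle to be bookkeeping rather than any deep idea: correctly tracking how the shared partition function $Z_i$ couples every neighbor to every term of $p^W_i$, so that the split into the $\mathcal{N}_i$ case (coefficient $p^W_i$) and the $\mathcal{M}_i$ case (coefficient $p^W_i - 1$) emerges cleanly. I would also flag a factor-of-two subtlety: the outer derivative carries a factor $2$, so a literal computation gives $2$ times the matrix term, matching the factor present in the analogous NCA gradient of Eq.~(\ref{equation_NCA_gradient}). I would treat the distance in Eq.~(\ref{equation_P_W_fastNCA}) as squared (consistent with Lemma~\ref{lemma_classCollapse_gradient}), since otherwise a surviving $1/\|\b{x}_i - \b{x}_j\|_{\b{W}}$ factor would spoil the stated rank-one form.
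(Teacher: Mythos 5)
Your derivation is correct and follows exactly the route the paper intends: the paper gives no proof of this lemma beyond noting it is ``similar to Eq.\ (\ref{equation_NCA_gradient})'' and deferring to the original reference, and your softmax-gradient chain-rule computation is precisely the argument of Lemma \ref{lemma_classCollapse_gradient} adapted to the restricted neighborhood $\mathcal{N}_i \cup \mathcal{M}_i$, with the indicator $[l \in \mathcal{M}_i]$ correctly producing the split between the coefficient $p^W_i$ on the $\mathcal{N}_i$ term and $p^W_i - 1$ on the $\mathcal{M}_i$ term. The two subtleties you flag are real discrepancies in the paper's own statements rather than gaps in your argument: Eq.\ (\ref{equation_fastNCA_gradient}) drops the factor of $2$ that appears in the analogous Eq.\ (\ref{equation_NCA_gradient}), and Eq.\ (\ref{equation_P_W_fastNCA}) as written uses the non-squared distance, so your reading (squared distance, literal gradient carrying the factor $2$) is the one consistent with the stated rank-one form.
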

This is similar to Eq. (\ref{equation_NCA_gradient}).
See \cite{yang2012fast} for the derivation.
We can use gradient ascent for solving the optimization. 

\textbf{-- Kernel Fast NCA:}
According to Eq. (\ref{equation_Mahalanobis_distance_in_RKHS}), the distance in the feature space is $\|\b{k}_i - \b{k}_j\|_{\b{T}\b{T}^\top}^2$ where $\b{k}_i \in \mathbb{R}^n$ is the kernel vector between dataset $\b{X}$ and the point $\b{x}_i$. 
We can use this distance metric in Eq. (\ref{equation_P_W_fastNCA}) to have kernel fast NCA \cite{yang2012fast}. 
Hence, the gradient of kernel fast NCA is similar to Eq. (\ref{equation_fastNCA_gradient}):
\begin{equation}\label{equation_kernel_fastNCA_gradient}
\begin{aligned}
&\frac{\partial c}{\partial \b{T}} = \sum_{i=1}^n \Big( p^W_i \sum_{\b{x}_k \in \mathcal{N}_i} p^W_{ik} (\b{k}_i - \b{k}_k) (\b{k}_i - \b{k}_k)^\top \\
&+ (p^W_i-1) \sum_{\b{x}_j \in \mathcal{M}_i} p^W_{ij} (\b{k}_i - \b{k}_j) (\b{k}_i - \b{k}_j)^\top \Big) \b{T} - 2\lambda \b{T}.
\end{aligned}
\end{equation}
Again, we can find the optimal $\b{T}$ using gradient ascent. 
Note that the same technique can be used to kernelize the original NCA.

\subsection{Bayesian Metric Learning Methods}

In this section, we introduce the Bayesian metric learning methods which use variational inference \cite{ghojogh2021factor} for metric learning. 
In Bayesian metric learning, we learn a distribution for the distance metric between every two points; we sample the pairwise distances from these learned distributions. 

First, we provide some definition required in these methods. 
According to Eq. (\ref{equation_W_U_UT}), we can decompose the weight matrix in the metric using the eigenvalue decomposition. Accordingly, we can approximate this matrix by:
\begin{align}\label{equation_Bayesian_NCA_W_decomposition}
\b{W} \approx \b{V}_x \b{\Lambda} \b{V}_x^\top,
\end{align}
where $\b{V}_x$ contains the eigenvectors of $\b{X}\b{X}^\top$ and $\b{\Lambda} = \textbf{diag}([\lambda_1, \dots, \lambda_d]^\top)$ is the diagonal matrix of eigenvalues which we learn in Bayesian metric learning. 
Let $X$ and $Y$ denote the random variables for data and labels, respectively, and let $\b{\lambda} = [\lambda_1, \dots, \lambda_d]^\top \in \mathbb{R}^d$ denote the learnable eigenvalues. 
Let $\b{v}_x^l \in \mathbb{R}^d$ denote the $l$-th column of $\b{V}_x$. We define $\b{w}_{ij} = [w_{ij}^1, \dots, w_{ij}^d]^\top := [((\b{v}_x^1)^\top (\b{x}_i - \b{x}_j))^2, \dots, ((\b{v}_x^d)^\top (\b{x}_i - \b{x}_j))^2]^\top \in \mathbb{R}^d$. 
The reader should not confuse $\b{w}_{ij}$ with $\b{W}$ which is the weight matrix of metric in out notations.

\subsubsection{Bayesian Metric Learning Using Sigmoid Function}

One of the Bayesian metric learning methods is \cite{yang2007bayesian}. 
We define:
\begin{align}\label{equation_y_ij}
y_{ij} := 
\left\{
    \begin{array}{ll}
        1 & \mbox{if } (\b{x}_i, \b{x}_j) \in \mathcal{S} \\
        -1 & \mbox{if } (\b{x}_i, \b{x}_j) \in \mathcal{D}.
    \end{array}
\right.
\end{align}
We can consider a sigmoid function for the likelihood \cite{yang2007bayesian}:
\begin{align}
\mathbb{P}(Y|X,\b{\Lambda}) = \frac{1}{1 + \exp(y_{ij} (\sum_{l=1}^d \lambda_l w_{ij}^l - \mu))},
\end{align}
where $\mu>0$ is a threshold. 
We can also derive an evidence lower bound for $\mathbb{P}(\mathcal{S}, \mathcal{D})$; we do not provide the derivation for brevity (see \cite{yang2007bayesian} for derivation of the lower bound). As in the variational inference, we maximize this lower bound for likelihood maximization \cite{ghojogh2021factor}. 
We assume a Gaussian distribution with mean $\b{m}_{\lambda} \in \mathbb{R}^d$ and covariance $\b{V}_{\lambda} \in \mathbb{R}^{d \times d}$ for the distribution $\mathbb{P}(\b{\lambda})$. 
By maximizing the lower bound, we can estimate these parameters as \cite{yang2007bayesian}:
\begin{align}
& \b{V}_T := \Big(\delta \b{I} + 2\sum_{(\b{x}_i, \b{x}_j) \in \mathcal{S}} \frac{\tanh(\xi_{ij}^s)}{4 \xi_{ij}^s} \b{w}_{ij} \b{w}_{ij}^\top \nonumber \\
&~~~~~~~~~~~~~ + 2\sum_{(\b{x}_i, \b{x}_j) \in \mathcal{D}} \frac{\tanh(\xi_{ij}^d)}{4 \xi_{ij}^d} \b{w}_{ij} \b{w}_{ij}^\top \Big)^{-1}, \label{equation_Bayesian_ML_V}\\
& \b{m}_T := \b{V}_T \Big(\delta \b{\gamma}_0 - \frac{1}{2} \sum_{(\b{x}_i, \b{x}_j) \in \mathcal{S}} \b{w}_{ij} + \frac{1}{2} \sum_{(\b{x}_i, \b{x}_j) \in \mathcal{D}} \b{w}_{ij} \Big), \label{equation_Bayesian_ML_m}
\end{align}
where $\delta>0$ and $\b{\gamma}_0$ are hyper-parameters related to the priors on the weight matrix of metric and the threshold. We define the following variational parameter \cite{yang2007bayesian}:
\begin{align}
& \xi_{ij}^s := \sqrt{(\b{m}_T^\top \b{w}_{ij})^2 + \b{w}_{ij}^\top \b{V}_T \b{w}_{ij}}, \label{equation_Bayesian_ML_xi}
\end{align}
for $(\b{x}_i, \b{x}_j) \in \mathcal{S}$. We similarly define the variational parameter $\xi_{ij}^d$ for $(\b{x}_i, \b{x}_j) \in \mathcal{D}$. 
The variables $\b{V}_T$, $\b{m}_T$, $\xi_{ij}^s$, and $\xi_{ij}^d$ are updated iteratively by Eqs. (\ref{equation_Bayesian_NCA_V}), (\ref{equation_Bayesian_NCA_m}), and (\ref{equation_Bayesian_ML_xi}), respectively, until convergence.
After these parameters are learned, we can sample the eigenvalues from the posterior, $\b{\lambda} \sim \mathcal{N}(\b{m}_T, \b{V}_T)$. These eigenvalues can be used in Eq. (\ref{equation_Bayesian_NCA_W_decomposition}) to obtain the weight matrix in the metric.
Note that Bayesian metric learning can also be used for active learning (see \cite{yang2007bayesian} for details). 

\subsubsection{Bayesian Neighborhood Component Analysis}

Bayesian NCA \cite{wang2017bayesian} using variational inference \cite{ghojogh2021factor} in the NCA formulation. 
If $\mathcal{N}_{im}$ denotes the dataset index of the $m$-th nearest neighbor of $\b{x}_i$, we define $\b{W}_i^j := [w_{ij} - w_{i \mathcal{N}_{i1}}, \dots, w_{ij} - w_{i \mathcal{N}_{ik}}] \in \mathbb{R}^{d \times k}$. 
As in the variational inference \cite{ghojogh2021factor}, we consider an evidence lower-bound on the log-likelihood:
\begin{align*}
\log(\mathbb{P}(Y|X,\b{\Lambda})) > \sum_{i=1}^n \sum_{\b{x}_j \in \mathcal{N}_i} \Big( &-\frac{1}{2} \b{\lambda}^\top \b{W}_i^j \b{H} (\b{W}_i^j)^\top \b{\lambda} \\
&+ \b{b}_{ij}^\top (\b{W}_i^j)^\top \b{\lambda} - c_{ij}\Big),
\end{align*}
where $\mathcal{N}_i$ was defined before in Section \ref{section_fast_NCA}, $\b{H} := \frac{1}{2} (\b{I} - \frac{1}{k+1} \b{1 }\b{1}^\top) \in \mathbb{R}^{k \times k}$ is the centering matrix, and:
\begin{align}
& \mathbb{R}^k \ni \b{b}_{ij} := \b{H} \b{\psi}_{ij} \nonumber\\
&- \exp\Bigg(\b{\psi}_{ij} - \log\Big(1 + \sum_{\b{x}_t \in \mathcal{N}_i} \exp\big( (\b{w}_{ij} - \b{w}_{it})^\top \b{\lambda} \big)\Big)\Bigg), \label{equation_Bayesian_NCA_b}
\end{align}
in which $\b{\psi}_{ij} \in \mathbb{R}^k$ is the learnable variational parameter. 
See \cite{wang2017bayesian} for the derivation of this lower-bound. The sketch of this derivation is using Eq. (\ref{equation_P_W_classCollapse}) but for the $k$NN among the similar points, i.e., $\mathcal{N}_i$. Then, the lower-bound is obtained by a logarithm inequality as well as the Bohning's quadratic bound \cite{murphy2012machine}. 

We assume a Gaussian distribution for the prior of $\b{\lambda}$ with mean $\b{m}_0 \in \mathbb{R}^d$ and covariance $\b{V}_0 \in \mathbb{R}^{d \times d}$. This prior is assumed to be known. 
Likewise, we assume a Gaussian distribution with mean $\b{m}_T \in \mathbb{R}^d$ and covariance $\b{V}_T \in \mathbb{R}^{d \times d}$ for the posterior $\mathbb{P}(X,\b{\Lambda}|Y)$. 
Using Bayes' rule and the above lower-bound on the likelihood, we can estimate these parameters as \cite{wang2017bayesian}:
\begin{align}
& \b{V}_T := \Big(\b{V}_0^{-1} + \sum_{i=1}^n \sum_{\b{x}_j \in \mathcal{N}_i} \b{W}_i^j \b{H} (\b{W}_i^j)^\top \Big)^{-1}, \label{equation_Bayesian_NCA_V}\\
& \b{m}_T := \b{V}_T \Big(\b{V}_0^{-1} \b{m}_0 + \sum_{i=1}^n \sum_{\b{x}_j \in \mathcal{N}_i} \b{W}_i^j \b{b}_{ij} \Big). \label{equation_Bayesian_NCA_m}
\end{align}
The variational parameter can also be obtained by \cite{wang2017bayesian}:
\begin{align}\label{equation_Bayesian_NCA_psi}
\b{\psi}_{ij} := (\b{W}_i^j)^\top \b{m}_T. 
\end{align}
The variables $\b{b}_{ij}$, $\b{V}_T$, $\b{m}_T$, and $\b{\psi}_{ij}$ are updated iteratively by Eqs. (\ref{equation_Bayesian_NCA_b}), (\ref{equation_Bayesian_NCA_V}), (\ref{equation_Bayesian_NCA_m}), and (\ref{equation_Bayesian_NCA_psi}), respectively, until convergence.

After these parameters are learned, we can sample the eigenvalues from the posterior, $\b{\lambda} \sim \mathcal{N}(\b{m}_T, \b{V}_T)$. These eigenvalues can be used in Eq. (\ref{equation_Bayesian_NCA_W_decomposition}) to obtain the weight matrix in the metric. Alternatively, we can directly sample the distance metric from the following distribution:
\begin{align}
\|\b{x}_i - \b{x}_j\|_{\b{W}}^2 \sim \mathcal{N}(\b{w}_{ij}^\top \b{m}_T, \b{w}_{ij}^\top \b{V}_T \b{w}_{ij}).
\end{align}

\subsubsection{Local Distance Metric (LDM)}

Let the set of similar and dissimilar points for the point $\b{x}_i$ be denoted by $\mathcal{S}_i$ and $\mathcal{D}_i$, respectively. 
In Local Distance Metric (LDM) \cite{yang2006efficient}, we consider the following for the likelihood: 
\begin{align}
\mathbb{P}(y_i|\b{x}_i) = &\sum_{\b{x}_j \in \mathcal{S}_i} \exp(-\|\b{x}_i - \b{x}_j\|_{\b{W}}^2) \nonumber\\
&\times \Big(\sum_{\b{x}_j \in \mathcal{S}_i} \exp(-\|\b{x}_i - \b{x}_j\|_{\b{W}}^2) \nonumber\\
&~~~~~ + \sum_{\b{x}_j \in \mathcal{D}_i} \exp(-\|\b{x}_i - \b{x}_j\|_{\b{W}}^2)\Big)^{-1}.
\end{align}
If we consider Eq. (\ref{equation_Bayesian_NCA_W_decomposition}) for decomposition of the weight matrix, the log-likelihood becomes: 
\begin{align*}
&\sum_{i=1}^n \log (\mathbb{P}(y_i|\b{x}_i,\b{\Lambda})) = \\
&~~~~~~~~~~~~ \sum_{i=1}^n \log \Big(\sum_{\b{x}_j \in \mathcal{S}_i} \exp\big(\!-\sum_{l=1}^d \lambda_l w_{ij}^l\big)\Big) \\
&~~~~~~~~~~~~\sum_{i=1}^n \log \Big(\sum_{\b{x}_j \in \mathcal{S}_i} \exp\big(\!-\sum_{l=1}^d \lambda_l w_{ij}^l\big) \\
&~~~~~~~~~~~~~~~~~~~~~~~~~~~+ \sum_{\b{x}_j \in \mathcal{D}_i} \exp\big(\!-\sum_{l=1}^d \lambda_l w_{ij}^l\big)\Big).
\end{align*}
We want to maximize this log-likelihood for learning the variables $\{\lambda_1, \dots, \lambda_d\}$. 
An evidence lower bound on this log-likelihood can be \cite{yang2006efficient}:
\begin{equation}\label{equation_LDM_lower_bound}
\begin{aligned}
\sum_{i=1}^n \log &(\mathbb{P}(y_i|\b{x}_i,\b{\Lambda})) \geq \\
&\sum_{i=1}^n \sum_{\b{x}_j \in \mathcal{S}_i} \phi_{ij} \sum_{l=1}^d \lambda_l w_{ij}^l \\
& - \sum_{i=1}^n \log \Big(\sum_{\b{x}_j \in \mathcal{S}_i} \exp\big(\!-\sum_{l=1}^d \lambda_l w_{ij}^l\big) \\
&~~~~~~~~~~~~ + \sum_{\b{x}_j \in \mathcal{D}_i} \exp\big(\!-\sum_{l=1}^d \lambda_l w_{ij}^l\big)\Big),
\end{aligned}
\end{equation}
where $\phi_{ij}$ is the variational parameter which is:
\begin{equation}\label{equation_LDM_phi}
\begin{aligned}
& \phi_{ij} := \frac{\exp\big(\!-\sum_{l=1}^d \lambda_l w_{ij}^l\big)}{\sum_{\b{x}_j \in \mathcal{S}_i} \exp\big(\!-\sum_{l=1}^d \lambda_l w_{ij}^l\big)} \times \\
&~~~~~~~~~~~~ \Big(1 + \frac{\exp\big(\!-\sum_{l=1}^d \lambda_l w_{ij}^l\big)}{\sum_{\b{x}_j \in \mathcal{S}_i} \exp\big(\!-\sum_{l=1}^d \lambda_l w_{ij}^l\big)}\Big)^{-1}.
\end{aligned}
\end{equation}
See \cite{yang2006efficient} for derivation of the lower bound. 
Iteratively, we maximize the lower bound, i.e. Eq. (\ref{equation_LDM_lower_bound}), and update $\phi_{ij}$ by Eq. (\ref{equation_LDM_phi}). 
The learned parameters $\{\lambda_1, \dots, \lambda_d\}$ can be used in Eq. (\ref{equation_Bayesian_NCA_W_decomposition}) to obtain the weight matrix in the metric. 

\subsection{Information Theoretic Metric Learning}

There exist information theoretic approaches for metric learning where KL-divergence (relative entropy) or mutual information is used. 

\subsubsection{Information Theoretic Metric Learning with a Prior Weight Matrix}

One of the information theoretic methods for metric learning is using a prior weight matrix \cite{davis2007information} where we consider a known weight matrix $\b{W}_0$ as the regularizer and try to minimize the KL-divergence between the distributions with $\b{W}$ and $\b{W}_0$: 
\begin{align}
\text{KL}(p_{ij}^{W_0} \| p_{ij}^W) := \sum_{i=1}^n \sum_{j=1}^n p_{ij}^{W_0} \log\Big(\frac{p_{ij}^{W_0}}{p_{ij}^{W}}\Big).
\end{align}
There are both offline and online approaches for metric learning using batch and streaming data, respectively. 

\textbf{-- Offline Information Theoretic Metric Learning:}
We consider a Gaussian distribution, i.e. Eq. (\ref{equation_P_W_classCollapse}), for the probability of $\b{x}_i$ taking $\b{x}_j$ as its neighbor, i.e. $p_{ij}^W$.
While we make the weight matrix similar to the prior weight matrix through KL-divergence, we find a weight matrix which makes all the distances of similar points less than an upper bound $u>0$ and all the distances of dissimilar points larger than a lower bound $l$ (where $l>u$).
Note that, for Gaussian distributions, the KL divergence is related to the LogDet $D_{ld}(.,.)$ between covariance matrices \cite{dhillon2007differential}; hence, we can say:
\begin{align*}
&\text{KL}(p_{ij}^{W_0} \| p_{ij}^W) = \frac{1}{2} D_{ld}(\b{W}_0^{-1}, \b{W}^{-1}) = \frac{1}{2} D_{ld}(\b{W}, \b{W}_0) \\
&~~~~~~~~~~~~~~~~~ \overset{(a)}{=} \textbf{tr}(\b{W} \b{W}_0^{-1}) - \log (\det(\b{W} \b{W}_0^{-1})) - n,
\end{align*}
where $(a)$ is because of the definition of LogDet.
Hence, the optimization problem can be \cite{davis2007information}:
\begin{equation}\label{equation_optimization_information_theoretic}
\begin{aligned}
& \underset{\b{W}}{\text{minimize}}
& & D_{ld}(\b{W}, \b{W}_0) \\ 
& \text{subject to}
& & \|\b{x}_i - \b{x}_j\|_{\b{W}}^2 \leq u, \quad \forall (\b{x}_i, \b{x}_j) \in \mathcal{S}, \\
& & & \|\b{x}_i - \b{x}_j\|_{\b{W}}^2 \geq l, \quad \forall (\b{x}_i, \b{x}_j) \in \mathcal{D}.
\end{aligned}
\end{equation}

\textbf{-- Online Information Theoretic Metric Learning:}
The online information theoretic metric learning \cite{davis2007information} is suitable for streaming data. For this, we use the offline approach where the known weight matrix $\b{W}_0$ is learned weight matrix by the data which have been received so far. Consider the time slot $t$ where we have been accumulated some data until then and some new data points are received at this time. 
The optimization problem is Eq. (\ref{equation_optimization_information_theoretic}) where $\b{W}_0 = \b{W}_t$ which is the learned weight matrix so far at time $t$.
Note that if there is some label information available, we can incorporate it in the optimization problem as a regularizer. 

\subsubsection{Information Theoretic Metric Learning for Imbalanced Data}

Distance Metric by Balancing KL-divergence (DMBK) \cite{feng2018learning} can be used for imbalanced data where the cardinality of classes are different.
Assume the classes have Gaussian distributions where $\b{\mu}_i \in \mathbb{R}^d$ and $\b{\Sigma}_i \in \mathbb{R}^{d \times d}$ denote the mean and covariance of the $i$-th class. Recall the projection matrix $\b{U}$ in Eq. (\ref{equation_W_U_UT}) and Proposition \ref{proposition_metric_learning_projection}. The KL-divergence between the probabilities of the $i$-th and $j$-th classes after projection onto the subspace of metric is \cite{feng2018learning}:
\begin{equation}
\begin{aligned}
\text{KL}(p_i \| p_j) = &\,\frac{1}{2} \Big(\log\big(\det(\b{U}^\top \b{\Sigma}_j \b{U})\big) \\
&- \log\big(\det(\b{U}^\top \b{\Sigma}_i \b{U})\big) \\
&+ \textbf{tr}\big( (\b{U}^\top \b{\Sigma}_j \b{U})^{-1} \b{U}^\top (\b{\Sigma}_i + \b{D}_{ij}) \b{U} \big) \Big),
\end{aligned}
\end{equation}
where $\b{D}_{ij} := (\b{\mu}_i - \b{\mu}_j) (\b{\mu}_i - \b{\mu}_j)^\top$. 
To cancel the effect of cardinality of classes in imbalanced data, we use the normalized divergence of classes:
\begin{align}
e_{ij} := \frac{n_i n_j \text{KL}(p_i \| p_j)}{\sum_{1 \leq k < l \leq c} n_k n_l \text{KL}(p_k \| p_l)}, 
\end{align}
where $n_i$ and $c$ denote the number of the $i$-th class and the number of classes, respectively. 
We maximize the geometric mean of this divergence between pairs of classes to separate classes after projection onto the subspace of metric. A regularization term is used to increase the distances of dissimilar points and a constraint is used to decrease the similar points \cite{feng2018learning}:
\begin{equation}\label{equation_optimization_information_theoretic_imbalanced}
\begin{aligned}
& \underset{\b{W}}{\text{maximize}}
& & \log\Big(\Big(\prod_{1 \leq i < j \leq c} e_{ij}\Big)^{\frac{1}{c(c-1)}}\Big) \\
& & & ~~~~~~~~~~~ + \lambda \sum_{(\b{x}_i, \b{x}_j) \in \mathcal{D}} \|\b{x}_i - \b{x}_j\|_{\b{W}} \\ 
& \text{subject to}
& & \sum_{(\b{x}_i, \b{x}_j) \in \mathcal{S}} \|\b{x}_i - \b{x}_j\|_{\b{W}}^2 \leq 1, \\
& & & \b{W} \succeq \b{0},
\end{aligned}
\end{equation}
where $\lambda>0$ is the regularization parameter. 
This problem can be solved using projected gradient method \cite{ghojogh2021kkt}. 

\subsubsection{Probabilistic Relevant Component Analysis Methods}

Recall the Relevant Component Analysis (RCA method) \cite{shental2002adjustment} which was introduced in Section \ref{section_RCA}. 
Here, we introduce probabilistic RCA \cite{bar2003learning,bar2005learning} which uses information theory.
Suppose the $n$ data points can be divided into $c$ clusters, or so-called chunklets. Let $\mathcal{X}_l$ denote the data of the $l$-th chunklet and $\b{\mu}_l$ be the mean of $\mathcal{X}_l$. 
Consider Eq. (\ref{equation_W_U_UT}) for decomposition of the weight matrix in the metric where the column-space of $\b{U}$ is the subspace of metric. 
Let projection of data onto this subspace be denoted by $\b{Y} = \b{U}^\top \b{X}$, the projected data in the $l$-th chunklet be $\mathcal{Y}_l$, and $\b{\mu}^y_l$ be the mean of $\mathcal{Y}_l$. 

In probabilistic RCA, we maximize the mutual information between data and the projected data while we want the summation of distances of points in a chunklet from the mean of chunklet is less than a threshold or margin $m>0$.
The mutual information is related to the entropy as $I(X,Y) := H(Y) - H(Y|X)$; hence, we can maximize the entropy of projected data $H(Y)$ rather than the mutual information. Because $\b{Y} = \b{U}^\top \b{X}$, we have $H(Y) \propto \det(\b{U})$. According to Eq. (\ref{equation_W_U_UT}), we have $\det(\b{U}) \propto \det(\b{W})$. 
Hence, the optimization problem can be \cite{bar2003learning,bar2005learning}:
\begin{equation}\label{equation_optimization_probabilistic_RCA}
\begin{aligned}
& \underset{\b{W}}{\text{maximize}}
& & \det(\b{W}) \\
& \text{subject to}
& & \sum_{l=1}^c \sum_{\b{y}_i \in \mathcal{Y}_l} \|\b{y}_i - \b{\mu}^y_l\|_{\b{W}}^2 \leq m, \\
& & & \b{W} \succeq \b{0}.
\end{aligned}
\end{equation}
This preserves the information of data after projection while the inter-chunklet variances are upper-bounded by a margin. 

If we assume Gaussian distribution for each chunklet with the covariance matrix $\b{\Sigma}_l$ for the $l$-th chunklet, we have $\det(\b{W}) \propto \log(\det(\b{U}^\top \b{\Sigma}_l \b{U}))$ because of the quadratic characteristic of covariance. In this case, the optimization problem becomes:
\begin{equation}\label{equation_optimization_probabilistic_RCA_2}
\begin{aligned}
& \underset{\b{U}}{\text{maximize}}
& & \sum_{l=1}^c \log(\det(\b{U}^\top \b{\Sigma}_l \b{U})) \\
& \text{subject to}
& & \sum_{l=1}^c \sum_{\b{y}_i \in \mathcal{Y}_l} \|\b{y}_i - \b{\mu}^y_l\|_{\b{U}\b{U}^\top}^2 \leq m,
\end{aligned}
\end{equation}
where $\b{W} \succeq \b{0}$ is already satisfied because of Eq. (\ref{equation_W_U_UT}). 

\subsubsection{Metric Learning by Information Geometry}

Another information theoretic methods for metric learning is using information geometry in which kernels on data and labels are used \cite{wang2009information}. 
Let $\b{L} \in \mathbb{R}^{c \times n}$ denote the one-hot encoded labels of $n$ data points with $c$ classes and let $\b{X} \in \mathbb{R}^{d \times n}$ be the data points. 
The kernel matrix on the labels is $\b{K}_L = \b{Y}^\top \b{Y} + \lambda \b{I}$ whose main diagonal is strengthened by a small positive number $\lambda$ to have a full rank. 
Recall Proposition \ref{proposition_metric_learning_projection} and Eq. (\ref{equation_W_U_UT}) where $\b{U}$ is the projection matrix onto the subspace of metric. 
The kernel matrix over the projected data, $\b{Y} = \b{U}^\top \b{X}$, is:
\begin{align}
\b{K}_Y = &\b{Y}^\top \b{Y} = (\b{U}^\top \b{X})^\top (\b{U}^\top \b{X}) \nonumber \\
&= \b{X}^\top \b{U} \b{U}^\top \b{X} \overset{(\ref{equation_W_U_UT})}{=} \b{X}^\top \b{W} \b{X}. \label{equation_ML_information_geometry_K_Y}
\end{align}
We can minimize the KL-divergence between the distributions of kernels $\b{K}_Y$ and $\b{K}_L$ \cite{wang2009information}:
\begin{equation}\label{equation_optimization_by_information_geometry}
\begin{aligned}
& \underset{\b{W}}{\text{minimize}}
& & \text{KL}(\b{K}_Y \| \b{K}_L) \\
& \text{subject to}
& & \b{W} \succeq \b{0}.
\end{aligned}
\end{equation}
For simplicity, we assume Gaussian distributions for the kernels. The KL divergence between the distributions of two matrices, $\b{K}_Y \in \mathbb{R}^{n \times n}$ and $\b{K}_L \in \mathbb{R}^{n \times n}$, with Gaussian distributions is simplified to {\citep[Theorem 1]{wang2009information}}:
\begin{align*}
&\text{KL}(\b{K}_Y \| \b{K}_L) = \frac{1}{2} \Big(\textbf{tr}(\b{K}_L^{-1} \b{K}_Y) + \log(\det(\b{K}_L)) \\
&~~~~~~~~~~~~~~~~~~~~~~~~~~~~~~~~~ - \log(\det(\b{K}_Y)) - n\Big) \\
&\overset{(\ref{equation_ML_information_geometry_K_Y})}{\propto} \frac{1}{2} \Big(\textbf{tr}(\b{K}_L^{-1} \b{X}^\top \b{W} \b{X}) + \log(\det(\b{K}_L)) \\
&~~~~~~~~~~~~~ - \log(\det(\b{W})) - n\Big).
\end{align*}
After ignoring the constant terms w.r.t. $\b{W}$, we can restate Eq. (\ref{equation_optimization_by_information_geometry}) to:
\begin{equation}\label{equation_optimization_by_information_geometry_2}
\begin{aligned}
& \underset{\b{W}}{\text{minimize}}
& & \textbf{tr}(\b{K}_L^{-1} \b{X}^\top \b{W} \b{X}) - \log(\det(\b{W})) \\
& \text{subject to}
& & \b{W} \succeq \b{0}.
\end{aligned}
\end{equation}
If we take the derivative of the objective function in Eq. (\ref{equation_optimization_by_information_geometry_2}) and set it to zero, we have:
\begin{align}
& \frac{\partial c}{\partial \b{W}} = \b{X} \b{K}_L^{-1} \b{X}^\top - \b{W}^{-1} \overset{\text{set}}{=} \b{0} \nonumber \\
&\implies \b{W} = (\b{X} \b{K}_L^{-1} \b{X}^\top)^{-1}. \label{equation_optimization_by_information_geometry_solution}
\end{align}
Note that the constraint $\b{W} \succeq \b{0}$ is already satisfied by the solution, i.e., Eq. (\ref{equation_optimization_by_information_geometry_solution}). 

Although this method has used kernels, it can be kernelized further. 
We can also have a kernel version of this method by using Eq. (\ref{equation_Mahalanobis_distance_in_RKHS}) as the generalized Mahalanobis distance in the feature space, where $\b{T}$ (defined in Eq. (\ref{equation_kernelization_representation_theory})) is the projection matrix for the metric. 
Using this in Eqs. (\ref{equation_optimization_by_information_geometry_2}) and (\ref{equation_optimization_by_information_geometry_solution}) can give us the kernel version of this method. 
See \cite{wang2009information} for more information about it. 

\subsection{Empirical Risk Minimization in Metric Learning}

We can learn the metric by minimizing some empirical risk. In the following, some metric learning metric learning methods by risk minimization are introduced. 

\subsubsection{Metric Learning Using the Sigmoid Function}

One of the metric learning methods by risk minimization is \cite{guillaumin2009you}. 
The distribution for $\b{x}_i$ to take $\b{x}_j$ as its neighbor can be stated using a sigmoid function:
\begin{align}\label{equation_P_W_ERM_1}
p^W_{ij} &:= \frac{1}{1 + \exp(\|\b{x}_i - \b{x}_j\|_{\b{W}}^2 - b)},
\end{align}
where $b>0$ is a bias, because close-by points should have larger probability. 
We can maximize and minimize this probability for similar and dissimilar points, respectively:
\begin{equation}\label{equation_optimization_ERM_1}
\begin{aligned}
& \underset{\b{W}}{\text{maximize}}
& & \sum_{i=1}^n \sum_{j=1}^n y_{ij} \log(p^W_{ij}) + (1-y_{ij}) \log(1 - p^W_{ij}) \\ 
& \text{subject to}
& & \b{W} \succeq \b{0},
\end{aligned}
\end{equation}
where $y_{ij}$ is defined in Eq. (\ref{equation_y_ij}).
This can be solved using projected gradient method \cite{ghojogh2021kkt}.
This optimization can be seen as minimization of the empirical risk where close-by points are pushed toward each other and dissimilar points are pushed away to have less error. 



\subsubsection{Pairwise Constrained Component Analysis (PCCA)}

Pairwise Constrained Component Analysis (PCCA) \cite{mignon2012pcca} minimizes the following empirical risk to minimize and maximize the distances of similar points and dissimilar points, respectively:
\begin{equation}\label{equation_optimization_PCCA}
\begin{aligned}
& \underset{\b{U}}{\text{minimize}} \\
&~~~~~~~~~ \sum_{i=1}^n \sum_{j=1}^n \log\Big(1 + \exp\big(y_{ij} (\|\b{x}_i - \b{x}_j\|_{\b{U}\b{U}^\top}^2 - b)\big)\Big),
\end{aligned}
\end{equation}
where $y_{ij}$ is defined in Eq. (\ref{equation_y_ij}), $b>0$ is a bias, $\b{W} \succeq \b{0}$ is already satisfied because of Eq. (\ref{equation_W_U_UT}). 
This can be solved using projected gradient method \cite{ghojogh2021kkt} with the gradient \cite{mignon2012pcca}:
\begin{equation}\label{equation_gradient_PCCA}
\begin{aligned}
\frac{\partial c}{\partial \b{U}} &= 2 \sum_{i=1}^n \sum_{j=1}^n \frac{y_{ij}}{1 + \exp\big(y_{ij} (\|\b{x}_i - \b{x}_j\|_{\b{U}\b{U}^\top}^2 - b)\big)} \\
&~~~~~~~~~~~~ \times (\b{x}_i - \b{x}_j)(\b{x}_i - \b{x}_j)^\top \b{U}.
\end{aligned}
\end{equation}
Note that we can have kernel PCCA by using Eq. (\ref{equation_Mahalanobis_distance_in_RKHS}). In other words, we can replace $\|\b{x}_i - \b{x}_j\|_{\b{U}\b{U}^\top}^2$ and $(\b{x}_i - \b{x}_j)(\b{x}_i - \b{x}_j)^\top \b{U}$ with $\|\b{k}_i - \b{k}_j\|_{\b{T}\b{T}^\top}^2$ and $(\b{k}_i - \b{k}_j)(\b{k}_i - \b{k}_j)^\top \b{T}$, respectively, to have PCCA in the feature space. 

\subsubsection{Metric Learning for Privileged Information}

In some applications, we have a dataset with privileged information where for every point, we have two feature vector; one for the main feature (denoted by $\{\b{x}_i\}_{i=1}^n$) and one for the privileged information (denoted by $\{\b{z}_i\}_{i=1}^n$). 
A metric learning method for using privileged information is \cite{yang2016empirical} where we minimize and maximize the distances of similar and dissimilar points, respectively, for the main features. Simultaneously, we make the distances of privileged features close to the distances of main features. 
Having these two simultaneous goals, we minimize the following empirical risk \cite{yang2016empirical}:
\begin{equation}\label{equation_optimization_ERM_3}
\begin{aligned}
& \underset{\b{W}_1, \b{W}_2}{\text{minimize}} 
& & \sum_{i=1}^n \log\Big(1 + \\
& & &\exp\big(y_{ij}\, (\|\b{x}_i - \b{x}_j\|_{\b{W}_1}^2 - \|\b{z}_i - \b{z}_j\|_{\b{W}_2}^2 ) \big) \Big) \\ 
& \text{subject to}
& & \b{W}_1 \succeq \b{0}, \quad \b{W}_2 \succeq \b{0}.
\end{aligned}
\end{equation}

\section{Deep Metric Learning}\label{section_deep_metric_learning}

We saw in Sections \ref{section_spectral_metric_learning} and \ref{section_probabilistic_metric_learning} that both spectral and probabilistic metric learning methods use the generalized Mahalanobis distance, i.e. Eq. (\ref{equation_generalized_Mahalanobis_distance}), and learn the weight matrix in the metric.
Deep metric learning, however, has a different approach. The methods in deep metric learning usually do not use a generalized Mahalanobis distance but they earn an embedding space using a neural network. 
The network learns a $p$-dimensional embedding space for discriminating classes or the dissimilar points and making the similar points close to each other.
The network embeds data in the embedding space (or subspace) of metric. Then, any distance metric $d(.,.): \mathbb{R}^p \times \mathbb{R}^p \rightarrow \mathbb{R}$ can be used in this embedding space. 
In the loss functions of network, we can use the distance function $d(.,.)$ in the embedding space. For example, an option for the distance function is the squared $\ell_2$ norm or squared Euclidean distance:
\begin{align}
d\big(\textbf{f}(\b{x}_i^1), \textbf{f}(\b{x}_i^2)\big) := \|\textbf{f}(\b{x}_i^1) - \textbf{f}(\b{x}_i^2)\|_2^2,
\end{align}
where $\textbf{f}(\b{x}_i) \in \mathbb{R}^p$ denotes the output of network for the input $\b{x}_i$ as its $p$-dimensional embedding. 
We train the network using mini-batch methods such as the mini-batch stochastic gradient descent and denote the mini-batch size by $b$. The shared weights of sub-networks are denoted by the learnable parameter $\theta$.

\subsection{Reconstruction Autoencoders}

\subsubsection{Types of Autoencoders}

An autoencoder is a model consisting of an encoder $E(.)$ and a decoder $D(.)$. 
There are several types of autoencoders.
All types of autoencoders learn a code layer in the middle of encoder and decoder.
Inferential autoencoders learn a stochastic latent space in the code layer between the encoder and decoder. 
Variational autoencoder \cite{ghojogh2021factor} and adversarial autoencoder \cite{ghojogh2021generative} are two important types of inferential autoencoders. 
Another type of autoencoder is the reconstruction autoencoder consisting of an encoder, transforming data to a code, and a decoder, transforming the code back to the data. Hence, the decoder reconstructs the input data to the encoder. 
The code is a representation for data. 
Each of the encoder and decoder can be multiple layers of neural network with activation functions. 

\subsubsection{Reconstruction Loss}

We denote the input data point to the encoder by $\b{x} \in \mathbb{R}^d$ where $d$ is the dimensionality of data. The reconstructed data point is the output of decoder and is denoted by $\widehat{\b{x}} \in \mathbb{R}^d$.
The representation code, which is the output of encoder and the input of decoder, is denoted by $\textbf{f}(\b{x}) := E(\b{x}) \in \mathbb{R}^p$. 
We have $\widehat{\b{x}} = D(E(\b{x})) = D(\textbf{f}(\b{x}))$. 
If the dimensionality of code is greater than the dimensionality of input data, i.e. $p > d$, the autoencoder is called an over-complete autoencoder \cite{goodfellow2016deep}. Otherwise, if $p < d$, the autoencoder is an under-complete autoencoder \cite{goodfellow2016deep}. 
The loss function of reconstruction autoencoder tries to make the reconstructed data close to the input data:
\begin{equation}\label{equation_reconstruction_autoencoder_loss}
\begin{aligned}
\underset{\theta} {\text{minimize}} ~~~ \sum_{i=1}^b \Big( d\big(\b{x}_i, \widehat{\b{x}}_i\big) + \lambda \Omega(\theta) \Big),
\end{aligned}
\end{equation}
where $\lambda \geq 0$ is the regularization parameter and $\Omega(\theta)$ is some penalty or regularization on the weights. Here, the distance function $d(.,.)$ is defined on $\mathbb{R}^d \times \mathbb{R}^d$. Note that the penalty term can be regularization on the code $\textbf{f}(\b{x}_i)$. 
If the used distance metric is the squared Euclidean distance, this loss is named the regularized Mean Squared Error (MSE) loss. 

\subsubsection{Denoising Autoencoder}

A problem with over-complete autoencoder is that its training only copies each feature of data input to one of the neurons in the code layer and then copies it back to the corresponding feature of output layer. 
This is because the number of neurons in the code layer is greater than the number of neurons in the input and output layers. 
In other words, the networks just memorizes or gets overfit. 
This coping happens by making some of the weights equal to one (or a scale of one depending on the activation functions) and the rest of weights equal to zero. 
To avoid this problem in over-complete autoencoders, one can add some noise to the input data and try to reconstruct the data without noise. For this, Eq. (\ref{equation_reconstruction_autoencoder_loss}) is used while the input to the network is the mini-batch plus some noise. 
This forces the over-complete autoencoder to not just copy data to the code layer. 
This autoencoder can be used for denoising as it reconstructs the data without noise for a noisy input. This network is called the Denoising Autoencoder (DAE) \cite{goodfellow2016deep}. 

\subsubsection{Metric Learning by Reconstruction Autoencoder}

The under-complete reconstruction autoencoder can be used for metric learning and dimensionality reduction, especially when $p \ll d$. 
The loss function for learning a low-dimensional representation code and reconstructing data by the autoencoder is Eq. (\ref{equation_reconstruction_autoencoder_loss}). 
The code layer between the encoder and decoder is the embedding space of metric. 

Note that if the activation functions of all layers are linear, the under-complete autoencoder is reduced to Principal Component Analysis \cite{ghojogh2019unsupervised}. Let $\b{U}_l$ denote the weight matrix of the $l$-th layer of network, $\ell_e$ be the number of layers of encoder, and $\ell_d$ be the number of layers of decoder. With linear activation function, the encoder and decoder are:
\begin{align*}
&\text{encoder: } \quad \mathbb{R}^p \ni \textbf{f}(\b{x}_i) = \underbrace{\b{U}_{\ell_e}^\top \b{U}_{\ell_e-1}^\top \dots \b{U}_{1}^\top}_{\b{U}_e^\top} \b{x}_i, \\
&\text{decoder: } \quad \mathbb{R}^d \ni \widehat{\b{x}}_i = \underbrace{\b{U}_{1} \dots \b{U}_{\ell_d-1} \b{U}_{\ell_d}}_{\b{U}_d} \textbf{f}(\b{x}_i), 
\end{align*}
where linear projection by $\ell$ projection matrices can be replaced by linear projection with one projection matrices $\b{U}_e$ and $\b{U}_d$. 

For learning complicated data patterns, we can use nonlinear activation functions between layers of the encoder and decoder to have nonlinear metric learning and dimensionality reduction. 
It is noteworthy that nonlinear neural network can be seen as an ensemble or concatenation of dimensionality reduction (or feature extraction) and kernel methods. The justification of this claim is as follows. Let the dimensionality for a layer of network be $\b{U} \in \mathbb{R}^{d_1 \times d_2}$ so it connects $d_1$ neurons to $d_2$ neurons. 
Two cases can happen:
\begin{itemize}
\item If $d_1 \geq d_2$, this layer acts as dimensionality reduction or feature extraction because it has reduced the dimensionality of its input data. If this layer has a nonlinear activation function, the dimensionality reduction is nonlinear; otherwise, it is linear. 
\item If $d_1 < d_2$, this layer acts as a kernel method which maps its input data to the high-dimensional feature space in some Reproducing Kernel Hilbert Space (RKHS). This kernelization can help nonlinear separation of some classes which are not separable linearly \cite{ghojogh2021reproducing}. An example use of kernelization in machine learning is kernel support vector machine \cite{vapnik1995nature}. 
\end{itemize}
Therefore, a neural network is a complicated feature extraction method as a concatenation of dimensionality reduction and kernel methods. Each layer of network learns its own features from data. 

\subsection{Supervised Metric Learning by Supervised Loss Functions}


Various loss functions exist for supervised metric learning by neural networks. 
Supervised loss functions can teach the network to separate classes in the embedding space \cite{sikaroudi2020supervision}. 
For this, we use a network whose last layer is for classification of data points. The features of the one-to-last layer can be used for feature embedding. 
The last layer after the embedding features is named the classification layer. 
The structure of this network is shown in Fig. \ref{figure_Supervised_losses_embedding}.
Let the $i$-th point in the mini-batch be denoted by $\b{x}_i \in \mathbb{R}^d$ and its label be denoted by $y_i \in \mathbb{R}$. 
Suppose the network has one output neuron and its output for the input $\b{x}_i$ is denoted by $\textbf{f}_o(\b{x}_i) \in \mathbb{R}$. This output is the estimated class label by the network.
We denote output of the the one-to-last layer by $\textbf{f}(\b{x}_i) \in \mathbb{R}^p$ where $p$ is the number of neurons in that layer which is equivalent to the dimensionality of the embedding space. 
The last layer of network, connecting the $p$ neurons to the output neuron is a fully-connected layer.
The network until the one-to-last layer can be any feed-forward or convolutional network depending on the type of data. If the network is convolutional, it should be flattened at the one-to-last layer. 
The network learns to classify the classes, by the supervised loss functions, so the features of the one-to-last layers will be discriminating features and suitable for embedding. 

\begin{figure}[!t]
\centering
\includegraphics[width=3in]{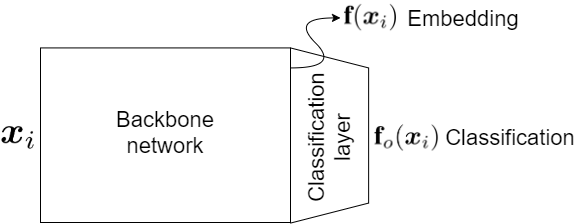}
\caption{The structure of network for metric learning with supervised loss function.}
\label{figure_Supervised_losses_embedding}
\end{figure}

\subsubsection{Mean Squared Error and Mean Absolute Value Losses}

One of the supervised losses is the Mean Squared Error (MSE) which makes the estimated labels close to the true labels using squared $\ell_2$ norm:
\begin{equation}
\begin{aligned}
\underset{\theta} {\text{minimize}} ~~~ \sum_{i=1}^b (\textbf{f}_o(\b{x}_i) - y_i)^2.
\end{aligned}
\end{equation}
One problem with this loss function is exaggerating outliers because of the square but its advantage is its differentiability. 
Another loss function is the Mean Absolute Error (MAE) which makes the estimated labels close to the true labels using $\ell_1$ norm or the absolute value:
\begin{equation}
\begin{aligned}
\underset{\theta} {\text{minimize}} ~~~ \sum_{i=1}^b |\textbf{f}_o(\b{x}_i) - y_i|.
\end{aligned}
\end{equation}
The distance used in this loss is also named the Manhattan distance. 
This loss function does not have the problem of MSE and it can be used for imposing sparsity in the embedding. It is not differentiable at the point $\textbf{f}(\b{x}_i) = y_i$ but as the derivatives are calculated numerically by the neural network, this is not a big issue nowadays.

\subsubsection{Huber and KL-Divergence Losss}

Another loss function is the Huber loss which is a combination of the MSE and MAE to have the advantages of both of them:
\begin{equation}
\begin{aligned}
&\underset{\theta} {\text{minimize}} ~~~ \\
&~~~~\sum_{i=1}^b 
\left\{
    \begin{array}{ll}
        0.5 (\textbf{f}_o(\b{x}_i) - y_i)^2 & \mbox{if } |\textbf{f}_o(\b{x}_i) - y_i| \leq \delta \\
        \delta (|\textbf{f}_o(\b{x}_i) - y_i| - 0.5 \delta) & \mbox{otherwise}.
    \end{array}
\right.
\end{aligned}
\end{equation}
KL-divergence loss function makes the distribution of the estimated labels close to the distribution of the true labels:
\begin{equation}
\begin{aligned}
\underset{\theta} {\text{minimize}} ~~~ \text{KL}(\mathbb{P}(\textbf{f}(\b{x})) \| \mathbb{P}(y)) = \sum_{i=1}^b \textbf{f}(\b{x}_i) \log(\frac{\textbf{f}(\b{x}_i)}{y_i}).
\end{aligned}
\end{equation}

\subsubsection{Hinge Loss}

If there are two classes, i.e. $c=2$, we can have true labels as $y_i \in \{-1, 1\}$. 
In this case, a possible loss function is the Hinge loss:
\begin{equation}
\begin{aligned}
\underset{\theta} {\text{minimize}} ~~~ \sum_{i=1}^b \big[m - y_i\, \textbf{f}_o(\b{x}_i)\big]_+,
\end{aligned}
\end{equation}
where $[\cdot]_+ := \max(\cdot,0)$ and $m>0$ is the margin. 
If the signs of the estimated and true labels are different, the loss is positive which should be minimized. 
If the signs are the same and $|\textbf{f}_o(\b{x}_i)| \geq m$, then the loss function is zero. If the signs are the same but $|\textbf{f}_o(\b{x}_i)| < m$, the loss is positive and should be minimized because the estimation is correct but not with enough margin from the incorrect estimation. 

\subsubsection{Cross-entropy Loss}

For any number of classes, denoted by $c$, we can have a cross-entropy loss. For this loss, we have $c$ neurons, rather than one neuron, at the last layer. In contrast to the MSE, MAE, Huber, and KL-divergence losses which use linear activation function at the last layer, cross-entropy requires softmax or sigmoid activation function at the last layer so the output values are between zero and one. For this loss, we have $c$ outputs, i.e. $\textbf{f}_o(\b{x}_i) \in \mathbb{R}^c$ (continuous values between zero and one), and the true labels are one-hot encoded, i.e., $\b{y}_i \in \{0,1\}^c$. This loss is defined as:
\begin{equation}\label{equation_cross_entropy_loss}
\begin{aligned}
\underset{\theta} {\text{minimize}} ~~~ -\sum_{i=1}^b \sum_{l=1}^c (\b{y}_i)_l \log\big(\textbf{f}_o(\b{x}_i)_l\big),
\end{aligned}
\end{equation}
where $(\b{y}_i)_l$ and $\textbf{f}_o(\b{x}_i)_l$ denote the $l$-th element of $\b{y}_i$ and $\textbf{f}_o(\b{x}_i)$, respectively. 
Minimizing this loss separates classes for classification; this separation of classes also gives us discriminating embedding in the one-to-last layer \cite{sikaroudi2020supervision,boudiaf2020unifying}. 

The reason for why cross-entropy can be suitable for metric learning is theoretically justified in \cite{boudiaf2020unifying}, explained in the following. 
Consider the mutual information between the true labels $Y$ and the estimated labels $\textbf{f}_o(X)$:
\begin{align}
I(\textbf{f}_o(X); Y) &= H(\textbf{f}_o(X)) - H(\textbf{f}_o(X)|Y) \label{equation_mutual_information_true_and_estimated_labels_1} \\
&= H(Y) - H(Y|\textbf{f}_o(X)), \label{equation_mutual_information_true_and_estimated_labels_2}
\end{align}
where $H(.)$ denotes entropy. 
On the one hand, Eq. (\ref{equation_mutual_information_true_and_estimated_labels_1}) has a generative view which exists in the metric learning loss functions generating embedding features. Eq. (\ref{equation_mutual_information_true_and_estimated_labels_2}), one the other hand, has a discriminative view used in the cross-entropy loss function. 
Therefore, the metric learning losses and the cross-entropy loss are related. 
It is shown in {\citep[Proposition 1]{boudiaf2020unifying}} that the cross-entropy is an upper-bound on the metric learning losses so its minimization for classification also provides embedding features. 

It is noteworthy that another supervised loss function is triplet loss, introduced in the next section. Triplet loss can be used for both hard labels (for classification) and soft labels (for similarity and dissimilarity of points). The triplet loss also does not need a last classification layer; therefore, the embedding layer can be the last layer for this loss.


\subsection{Metric Learning by Siamese Networks}\label{section_metric_learning_Siamese}

\begin{figure*}[!t]
\centering
\includegraphics[width=5in]{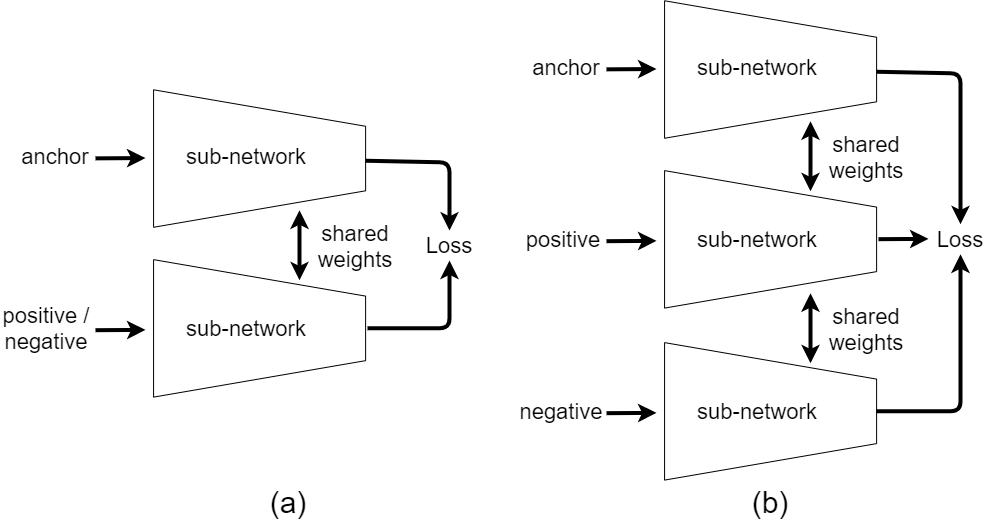}
\caption{The structure of Siamese network with (a) two and (b) three sub-networks.}
\label{figure_Siamese}
\end{figure*}

\subsubsection{Siamese and Triplet Networks}


One of the important deep metric learning methods is Siamese network which is widely used for feature extraction. 
Siamese network, originally proposed in \cite{bromley1993signature}, is a network consisting of several equivalent sub-networks sharing their weights. 
The number of sub-networks in a Siamese network can be any number but it usually is two or three. A Siamese network with three sub-networks is also called a triplet network \cite{hoffer2015deep}. 
The weights of sub-networks in a Siamese network are trained in a way that the intra- and inter-class variances are decreased and increased, respectively. In other words, the similar points are pushed toward each other while the dissimilar points are pulled away from one another. 
Siamese networks have been used in various applications such as computer vision \cite{schroff2015facenet} and natural language processing \cite{yang2020beyond}. 

\subsubsection{Pairs and Triplets of Data Points}

Depending on the number of sub-networks in the Siamese network, we have loss functions for training. 
The loss functions of Siamese networks usually require pairs or triplets of data points. 
Siamese networks do not use the data points one by one but we need to make pairs or triplets of points out of dataset for training a Siamese network. 
For making the pairs or triplets, we consider every data point as the anchor point, denoted by $\b{x}_i^a$. Then, we take one of the similar points to the anchor point as the positive (or neighbor) point, denoted by $\b{x}_i^p$. 
We also take one of the dissimilar points to the anchor point as the negative (or distant) point, denoted by $\b{x}_i^n$. 
If class labels are available, we can use them to find the positive point as one of the points in the same class as the anchor point, and to find the the negative point as one of the points in a different class from the anchor point's class. 
Another approach is to augment the anchor point, using one of the augmentation methods, to obtain a positive points for the anchor point \cite{khodadadeh2019unsupervised,chen2020simple}. 

For Siamese networks with two sub-networks, we make pairs of anchor-positive points $\{(\b{x}_i^a, \b{x}_i^p)\}_{i=1}^{n_t}$ and anchor-negative points $\{(\b{x}_i^a, \b{x}_i^n)\}_{i=1}^{n_t}$, where $n_t$ is the number of pairs. 
For Siamese networks with three sub-networks, we make triplets of anchor-positive-negative points $\{(\b{x}_i^a, \b{x}_i^p, \b{x}_i^n)\}_{i=1}^{n_t}$, where $n_t$ is the number of triplets. 
If we consider every point of dataset as an anchor, the number of pairs/triplets is the same as the number of data points, i.e., $n_t = n$. 

Various loss functions of Siamese networks use pairs or triplets of data points to push the positive point towards the anchor point and pull the negative point away from it. Doing this iteratively for all pairs or triplets will make the intra-class variances smaller and the inter-class variances larger for better discrimination of classes or clusters. 
Later in the following, we introduce some of the loss functions for training a Siamese network. 

\subsubsection{Implementation of Siamese Networks}

A Siamese network with two and three sub-networks is depicted in Fig. \ref{figure_Siamese}.
We denote the output of Siamese network for input $\b{x} \in \mathbb{R}^d$ by $\textbf{f}(\b{x}) \in \mathbb{R}^p$ where $p$ is the dimensionality of embedding (or the number of neurons at the last layer of the network) which is usually much less than the dimensionality of data, i.e., $p \ll d$. 
Note that the sub-networks of a Siamese network can be any fully-connected or convolutional network depending on the type of data. The used network structure for the sub-networks is usually called the backbone network. 

\begin{figure*}[!t]
\centering
\includegraphics[width=5in]{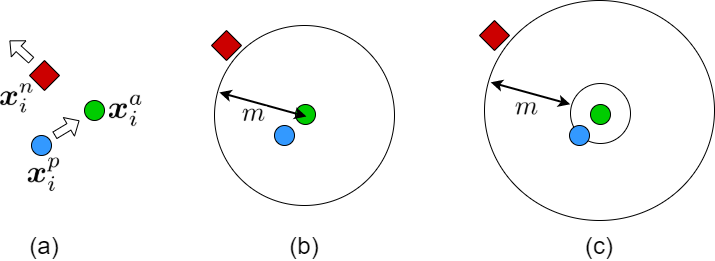}
\caption{Visualization of what contrastive and triplet losses do: (a) a triplet of anchor (green circle), positive (blue circle), and negative (red diamond) points, (b) the effect of contrastive loss making a margin between the anchor and negative point, and (c) the effect of triplet loss making a margin between the positive and negative points.}
\label{figure_triplet_contrastive_losses}
\end{figure*}

The weights of sub-networks are shared in the sense that the values of their weights are equal. Implementation of a Siamese network can be done in two ways:
\begin{enumerate}
\item We can implement several sub-networks in the memory. In the training phase, we feed every data point in the pairs or triplets to one of the sub-networks and take the outputs of sub-networks to have $\textbf{f}(\b{x}_i^a)$, $\textbf{f}(\b{x}_i^p)$, and $\textbf{f}(\b{x}_i^n)$. We use these in the loss function and update the weights of only one of the sub-networks by backpropagation \cite{ghojogh2021kkt}. Then, we copy the updated weights to the other sub-networks. We repeat this for all mini-batches and epochs until convergence. In the test phase, we feed the test point $\b{x}$ to only one of the sub-networks and get the output $\textbf{f}(\b{x})$ as its embedding. 
\item We can implement only one sub-network in the memory. In the training phase, we feed the data points in the pairs or triplets to the sub-network ont by one and take the outputs of sub-network to have $\textbf{f}(\b{x}_i^a)$, $\textbf{f}(\b{x}_i^p)$, and $\textbf{f}(\b{x}_i^n)$. We use these in the loss function and update the weights of the sub-network by backpropagation \cite{ghojogh2021kkt}. We repeat this for all mini-batches and epochs until convergence. In the test phase, we feed the test point $\b{x}$ to the sub-network and get the output $\textbf{f}(\b{x})$ as its embedding. 
\end{enumerate}
The advantage of the first approach is to have all the sub-networks ready and we do not need to feed the points of pairs or triplets one by one. Its disadvantage is using more memory. As the number of points in the pairs or triplets is small (i.e., only two or three), the second approach is more recommended as it is memory-efficient. 

\subsubsection{Contrastive Loss}

One loss function for Siamese networks is the contrastive loss which uses the anchor-positive and anchor-negative pairs of points. 
Suppose, in each mini-batch, we have $b$ pairs of points $\{(\b{x}_i^1, \b{x}_i^2)\}_{i=1}^{b}$ some of which are anchor-positive and some are anchor-negative pairs. The points in an anchor-positive pair are similar, i.e. $(\b{x}_i^1, \b{x}_i^2) \in \mathcal{S}$, and the points in an anchor-negative pair are dissimilar, i.e. $(\b{x}_i^1, \b{x}_i^2) \in \mathcal{D}$, where $\mathcal{S}$ and $\mathcal{D}$ denote the similar and dissimilar sets. 

\hfill\break
\textbf{-- Contrastive Loss:}
We define:
\begin{align}\label{equation_y_i_in_contrastive_loss}
y_i := 
\left\{
    \begin{array}{ll}
        0 & \mbox{if } (\b{x}_i^1, \b{x}_i^2) \in \mathcal{S} \\
        1 & \mbox{if } (\b{x}_i^1, \b{x}_i^2) \in \mathcal{D}.
    \end{array}
\right. \quad \forall i \in \{1, \dots, n_t\}.
\end{align}
The main contrastive loss was proposed in \cite{hadsell2006dimensionality} and is:
\begin{equation}\label{equation_constrastive_loss}
\begin{aligned}
\underset{\theta} {\text{minimize}} ~~~ \sum_{i=1}^b &\Big( (1-y_i) d\big(\textbf{f}(\b{x}_i^1), \textbf{f}(\b{x}_i^2)\big) \\
&+ y_i \big[\! -d\big(\textbf{f}(\b{x}_i^1), \textbf{f}(\b{x}_i^2)\big) + m \big]_+ \Big), 
\end{aligned}
\end{equation}
where $m>0$ is the margin and $[.]_+ := \max(.,0)$ is the standard Hinge loss. 
The first term of loss minimizes the embedding distances of similar points and the second term maximizes the embedding distances of dissimilar points. As shown in Fig. \ref{figure_triplet_contrastive_losses}-b, it tries to make the distances of similar points as small as possible and the distances of dissimilar points at least greater than a margin $m$ (because the term inside the Hinge loss should become close to zero). 

\hfill\break
\textbf{-- Generalized Contrastive Loss:}
The $y_i$, defined in Eq. (\ref{equation_y_i_in_contrastive_loss}), is used in the contrastive loss, i.e., Eq. (\ref{equation_constrastive_loss}).  
This variable is binary and a hard measure of similarity and dissimilarity. Rather than this hard measure, we can have a soft measure of similarity and dissimilarity, denoted by $\psi_i$, which states how similar $\b{x}_i^1$ and $\b{x}_i^2$ are. This measure is between zero (completely similar) and one (completely dissimilar). It can be either given by the dataset as a hand-set measure or can be computed using any similarity measure such as the cosine function:
\begin{align}
[0,1] \ni \psi_i := \frac{1}{2} \big(-\cos(\b{x}_i^1, \b{x}_i^2) + 1\big).
\end{align}
In this case, the pairs $\{(\b{x}_i^1, \b{x}_i^2)\}_{i=1}^{b}$ need not be completely similar or dissimilar points but they can be any two random points from the dataset with some level of similarity/dissimilarity. 
The generalized contrastive loss generalizes the contrastive loss using this soft measure of similarity \cite{leyva2021generalized}:
\begin{equation}\label{equation_constrastive_loss_generalized}
\begin{aligned}
\underset{\theta} {\text{minimize}} ~~~ \sum_{i=1}^b &\Big( (1-\psi_i) d\big(\textbf{f}(\b{x}_i^1), \textbf{f}(\b{x}_i^2)\big) \\
&+ \psi_i \big[\! -d\big(\textbf{f}(\b{x}_i^1), \textbf{f}(\b{x}_i^2)\big) + m \big]_+ \Big).
\end{aligned}
\end{equation}

\subsubsection{Triplet Loss}\label{section_triplet_loss}

One of the losses for Siamese networks with three sub-networks is the triplet loss \cite{schroff2015facenet} which uses the triplets in mini-batches, denoted by $\{(\b{x}_i^a, \b{x}_i^p, \b{x}_i^n)\}_{i=1}^{b}$. It is defined as:
\begin{equation}\label{equation_triplet_loss}
\begin{aligned}
\underset{\theta} {\text{minimize}}\, \sum_{i=1}^b &\Big[ d\big(\textbf{f}(\b{x}_i^a), \textbf{f}(\b{x}_i^p)\big) - d\big(\textbf{f}(\b{x}_i^a), \textbf{f}(\b{x}_i^n)\big) + m \Big]_+, 
\end{aligned}
\end{equation}
where $m>0$ is the margin and $[.]_+ := \max(.,0)$ is the standard Hinge loss. 
As shown in Fig. \ref{figure_triplet_contrastive_losses}-c, because of the used Hinge loss, this loss makes the distances of dissimilar points greater than the distances of similar points by at least a margin $m$; in other words, there will be a distance of at least margin $m$ between the positive and negative points. 
This loss desires to eventually have:
\begin{align}\label{equation_triplet_loss_desire}
d\big(\textbf{f}(\b{x}_i^a), \textbf{f}(\b{x}_i^p)\big) + m \leq d\big(\textbf{f}(\b{x}_i^a), \textbf{f}(\b{x}_i^n)\big),
\end{align}
for all triplets.
The triplet loss is closely related to the cost function for spectral large margin metric learning \cite{weinberger2006distance,weinberger2009distance} (see Section \ref{section_large_margin_metric_learning}). 
It is also noteworthy that using the triplet loss as regularization for cross-entropy loss has been shown to increase robustness of network to some adversarial attacks \cite{mao2019metric}.

\subsubsection{Tuplet Loss}

In triplet loss, i.e. Eq. (\ref{equation_triplet_loss}), we use one positive and one negative point per anchor point. The tuplet loss \cite{sohn2016improved} uses several negative points per anchor point. If $k$ denotes the number of negative points per anchor point and $\b{x}_i^{n,j}$ denotes the $j$-th negative point for $\b{x}_i$, the tuplet loss is \cite{sohn2016improved}:
\begin{equation}\label{equation_tuplet_loss}
\begin{aligned}
\underset{\theta} {\text{minimize}}\, \sum_{i=1}^b \sum_{j=1}^k &\Big[ d\big(\textbf{f}(\b{x}_i^a), \textbf{f}(\b{x}_i^p)\big) \\
&- d\big(\textbf{f}(\b{x}_i^a), \textbf{f}(\b{x}_i^{n,j})\big) + m \Big]_+.
\end{aligned}
\end{equation}
This loss function pushes multiple negative points away from the anchor point simultaneously. 

\subsubsection{Neighborhood Component Analysis Loss}

Neighborhood Component Analysis (NCA) \cite{goldberger2005neighbourhood} was originally proposed as a spectral metric learning method (see Section \ref{section_NCA_spectral}). 
After the success of deep learning, it was used as the loss function of Siamese networks where we minimize the negative log-likelihood using Gaussian distribution or the softmax form within the mini-batch.
Assume we have $c$ classes in every mini-batch. We denote the class index of $\b{x}_i$ by $c(\b{x}_i)$ and the data points of the $j$-th class in the mini-batch by $\mathcal{X}_j$. 
The NCA loss is:
\begin{equation}\label{equation_NCA_loss_Siamese}
\begin{aligned}
&\underset{\theta} {\text{minimize}} ~-\! \sum_{i=1}^{b} \log \Big(\exp\big(\!-\!d\big(\textbf{f}(\b{x}_i^a), \textbf{f}(\b{x}_i^p)\big)\big) \\
&\times \Big[\sum_{j=1, j \neq c(\b{x}_i)}^c \sum_{\b{x}_j^n \in \mathcal{X}_j} \exp\big(\!-\!d\big(\textbf{f}(\b{x}_i^a) - \textbf{f}(\b{x}_j^n)\big)\big)\Big]^{-1}\Big).
\end{aligned}
\end{equation}
The numerator minimizes the distances of similar points and the denominator maximizes the distances of dissimilar points. 



\subsubsection{Proxy Neighborhood Component Analysis Loss}

Computation of terms, especially the normalization factor in the denominator, is time- and memory-consuming in the NCA loss function (see Eq. (\ref{equation_NCA_loss_Siamese})). 
Proxy-NCA loss functions define some proxy points in the embedding space of network and use them in the NCA loss to accelerate computation and make it memory-efficient \cite{movshovitz2017no}. The proxies are representatives of classes in the embedding space and they can be defined in various ways. The simplest way is to define the proxy of every class as the mean of embedded points of that class. Of course, new mini-batches come during training. We can accumulate the embedded points of mini-batches and update the proxies after training the network by every mini-batch.
Another approach for defining proxies is to cluster the embedded points into $c$ clusters (e.g., by K-means) and use the centroid of clusters. 

Let the set of proxies be denotes by $\mathcal{P}$ whose cardinality is the number of classes, i.e., $c$. 
Every embedded point is assigned to one of the proxies by \cite{movshovitz2017no}:
\begin{align}
\Pi(\textbf{f}(\b{x}_i)) := \arg \min_{\b{\pi} \in \mathcal{P}} \|\textbf{f}(\b{x}_i) - \b{\pi}\|_2^2,
\end{align}
or we can assign every point to the proxy of its own class. 
Let $\b{pi}_j$ denote the proxy associated with the $j$-th class. 
The Proxy-NCA loss is the NCA loss, i.e. Eq. (\ref{equation_NCA_loss_Siamese}), but using proxies \cite{movshovitz2017no}:
\begin{equation}\label{equation_proxy_NCA_loss_Siamese}
\begin{aligned}
&\underset{\theta} {\text{minimize}} ~-\! \sum_{i=1}^{b} \log \Big(\exp\big(\!-\!d\big(\textbf{f}(\b{x}_i^a), \Pi(\textbf{f}(\b{x}_i^p))\big)\big) \\
&\times \Big[\sum_{j=1, j \neq c(\b{x}_i)}^c \exp\big(\!-\!d\big(\textbf{f}(\b{x}_i^a) - \b{\pi}_j\big)\big)\Big]^{-1}\Big).
\end{aligned}
\end{equation}
It is shown in \cite{movshovitz2017no} that the Proxy-NCA loss, i.e. Eq. (\ref{equation_proxy_NCA_loss_Siamese}), is an upper-bound on the NCA loss, i.e. Eq. (\ref{equation_NCA_loss_Siamese}); hence, its minimization also achieves the goal of NCA. 
Comparing Eqs. (\ref{equation_NCA_loss_Siamese}) and (\ref{equation_proxy_NCA_loss_Siamese}) shows that Proxy-NCA is faster and more efficient than NCA because it uses only proxies of negative classes rather than using all negative points in the mini-batch. 
Proxy-NCA has also been used in feature extraction from medical images \cite{teh2020learning}. 
It is noteworthy that we can incorporate temperature scaling \cite{hinton2014distilling} in the Proxy-NCA loss. The obtained loss is named Proxy-NCA++ \cite{teh2020proxyncaPlusPlus} and is defined as:
\begin{equation}\label{equation_proxy_NCA_plus_plus_loss_Siamese}
\begin{aligned}
&\underset{\theta} {\text{minimize}} ~-\! \sum_{i=1}^{b} \log \Big(\exp\big(\!-\!d\big(\textbf{f}(\b{x}_i^a), \Pi(\textbf{f}(\b{x}_i^p))\big) \times \frac{1}{\tau} \big) \\
&\times \Big[\sum_{j=1, j \neq c(\b{x}_i)}^c \exp\big(\!-\!d\big(\textbf{f}(\b{x}_i^a) - \b{\pi}_j\big) \times \frac{1}{\tau} \big)\Big]^{-1}\Big),
\end{aligned}
\end{equation}
where $\tau>0$ is the temperature which is a hyper-parameter. 

\subsubsection{Softmax Triplet Loss}

Consider a mini-batch containing points from $c$ classes where $c(\b{x}_i)$ is the class index of $\b{x}_i$ and $\mathcal{X}_j$ denotes the points of the $j$-th class in the mini-batch. 
We can use the softmax function or the Gaussian distribution for the probability that the point $\b{x}_i$ takes $\b{x}_j$ as its neighbor. Similar to Eq. (\ref{equation_P_W_classCollapse}) or Eq. (\ref{equation_NCA_loss_Siamese}), we can have the softmax function used in NCA \cite{goldberger2005neighbourhood}:
\begin{align}
p_{ij} := \frac{\exp\big(\!-\!d\big(\textbf{f}(\b{x}_i), \textbf{f}(\b{x}_j)\big)\big)}{\sum_{k \neq i, k=1}^b \exp\big(\!-\!d\big(\textbf{f}(\b{x}_i), \textbf{f}(\b{x}_k)\big)\big)}, \quad j \neq i.
\end{align}
Another approach for the softmax form is to use inner product in the exponent \cite{ye2019unsupervised}:
\begin{align}\label{equation_softmax_triplet_loss_innerproduct}
p_{ij} := \frac{\exp\big(\textbf{f}(\b{x}_i)^\top \textbf{f}(\b{x}_j)\big)}{\sum_{k = 1, k \neq i}^b \exp\big(\textbf{f}(\b{x}_i)^\top \textbf{f}(\b{x}_k)\big)}, \quad j \neq i.
\end{align}
The loss function for training the network can be the negative log-likelihood which can be called the softmax triplet loss \cite{ye2019unsupervised}:
\begin{equation}
\begin{aligned}
\underset{\theta} {\text{minimize}} ~~~ - \sum_{i=1}^b \Big( &\sum_{\b{x}_j \in \mathcal{X}_{c(\b{x}_i)}} \log(p_{ij}) \\
&- \sum_{\b{x}_j \not \in \mathcal{X}_{c(\b{x}_i)}} \log(1 - p_{ij}) \Big).
\end{aligned}
\end{equation}
This decreases and increases the distances of similar points and dissimilar points, respectively. 

\subsubsection{Triplet Global Loss}

The triplet global loss \cite{kumar2016learning} uses the mean and variance of the anchor-positive pairs and anchor-negative pairs. It is defined as:
\begin{equation}
\begin{aligned}
\underset{\theta} {\text{minimize}} ~~~ (\sigma_p^2 + \sigma_n^2) + \lambda\, [\mu_p - \mu_n + m]_+, 
\end{aligned}
\end{equation}
where $\lambda>0$ is the regularization parameter, $m>0$ is the margin, the means of pairs are:
\begin{align*}
& \mu_p := \frac{1}{b} \sum_{i=1}^b d\big(\textbf{f}(\b{x}_i^a), \textbf{f}(\b{x}_i^p)\big), \\
& \mu_n := \frac{1}{b} \sum_{i=1}^b d\big(\textbf{f}(\b{x}_i^a), \textbf{f}(\b{x}_i^n)\big),
\end{align*}
and the variances of pairs are:
\begin{align*}
& \sigma_p^2 := \frac{1}{b} \sum_{i=1}^b \Big( d\big(\textbf{f}(\b{x}_i^a), \textbf{f}(\b{x}_i^p)\big) - \mu_p \Big)^2, \\
& \sigma_n^2 := \frac{1}{b} \sum_{i=1}^b \Big( d\big(\textbf{f}(\b{x}_i^a), \textbf{f}(\b{x}_i^n)\big) - \mu_n \Big)^2. 
\end{align*}
The first term of this loss minimizes the variances of anchor-positive and anchor-negative pairs. The second term, however, discriminates the anchor-positive pairs from the anchor-negative pairs. Hence, the negative points are separated from the positive points. 

\subsubsection{Angular Loss}

For a triplet $(\b{x}_i^a, \b{x}_i^p, \b{x}_i^n)$, consider a triangle whose vertices are the anchor, positive, and negative points. To satisfy Eq. (\ref{equation_triplet_loss_desire}) in the triplet loss, the angle at the vertex $\b{x}_i^n$ should be small so the edge $d\big(\textbf{f}(\b{x}_i^a), \textbf{f}(\b{x}_i^n)\big)$ becomes larger than the edge $d\big(\textbf{f}(\b{x}_i^a), \textbf{f}(\b{x}_i^p)\big)$. 
Hence, we need to have and upper bound $\alpha>0$ on the angle at the vertex $\b{x}_i^n$. 
If $\b{x}_i^c := (\b{x}_i^a + \b{x}_i^p) / 2$, the angular loss is defined to be \cite{wang2017deep}:
\begin{equation}\label{equation_angular_loss}
\begin{aligned}
&\underset{\theta} {\text{minimize}}\, \\
&\sum_{i=1}^b \Big[ d\big(\textbf{f}(\b{x}_i^a), \textbf{f}(\b{x}_i^p)\big) - 4 \tan^2\!\big(\alpha\, d\big(\textbf{f}(\b{x}_i^a), \textbf{f}(\b{x}_i^c)\big)\big) \Big]_+.
\end{aligned}
\end{equation}
This loss reduces the distance of the anchor and positive points and increases the distance of anchor and $\b{x}_i^c$ and the upper bound $\alpha$. This increases the distance of the anchor and negative points for discrimination of dissimilar points. 

\subsubsection{SoftTriple Loss}

If we normalize the points to have unit length, Eq. (\ref{equation_triplet_loss_desire}) can be restated by using inner products:
\begin{align}\label{equation_triplet_loss_desire_innerProduct}
\textbf{f}(\b{x}_i^a)^\top \textbf{f}(\b{x}_i^p) + m \leq \textbf{f}(\b{x}_i^a)^\top \textbf{f}(\b{x}_i^n), 
\end{align}
whose margin is not exactly equal to the margin in Eq. (\ref{equation_triplet_loss_desire}). 
Consider a Siamese network whose last layer's weights are $\{\b{w}_l \in \mathbb{R}^p\}_{l=1}^c$ where $p$ is the dimensionality of the one-to-last layer and $c$ is the number of classes and the number of output neurons. 
We consider $k$ centers for the embedding of every class; hence, we define $\b{w}_l^j \in \mathbb{R}^p$ as $\b{w}_l$ for its $j$-th center.
It is shown in \cite{qian2019softtriple} that softmax loss results in Eq. (\ref{equation_triplet_loss_desire_innerProduct}). 
Therefore, we can use the SoftTriple loss for training a Siamese network \cite{qian2019softtriple}:
\begin{equation}\label{equation_SoftTriple_loss}
\begin{aligned}
&\underset{\theta} {\text{minimize}} ~-\! \sum_{i=1}^{b} \log \Big( \exp(\lambda(s_{i,y_i} - \delta)) \\
&~~~~~~~~~~ \times \big(\exp(\lambda(s_{i,y_i} - \delta)) + \sum_{l=1, l \neq y_i}^c \exp(\lambda s_{i,l}) \big)^{-1} \Big),
\end{aligned}
\end{equation}
where $\lambda,\delta>0$ are hyper-parameters, $y_i$ is the label of $\b{x}_i$, and:
\begin{align*}
s_{i,l} := \sum_{j=1}^k \frac{\exp\big(\textbf{f}(\b{x}_i)^\top \b{w}_l^j\big)}{\sum_{t=1}^k \exp\big(\textbf{f}(\b{x}_i)^\top \b{w}_l^t\big)} \textbf{f}(\b{x}_i)^\top \b{w}_l^k.
\end{align*}
This loss increases and decreases the intra-class and inter-class distances, respectively. 


\subsubsection{Fisher Siamese Losses}\label{section_Fisher_Siamese_losses}

Fisher Discriminant Analysis (FDA) \cite{fisher1936use,ghojogh2019fisher} decreases the intra-class variance and increases the inter-class variance by maximizing the Fisher criterion. This idea is very similar to the idea of loss functions for Siamese networks. Hence, we can combine the methods of FDA and Siamese loss functions. 

Consider a Siamese network whose last layer is denoted by the projection matrix $\b{U}$. We consider the features of the one-to-last layer in the mini-batch. The covariance matrices of similar points and dissimilar points (one-to-last layer features) in the mini-batch are denoted by $\b{S}_W$ and $\b{S}_B$. These covariances become $\b{U}^\top \b{S}_W \b{U}$ and $\b{U}^\top \b{S}_B \b{U}$, respectively, after the later layer's projection because of the quadratic characteristic of covariance. 
As in FDA, we can maximize the Fisher criterion or equivalently minimize the negative Fisher criterion:
\begin{equation*}
\begin{aligned}
\underset{\b{U}} {\text{minimize}} ~~~ \textbf{tr}(\b{U}^\top \b{S}_W \b{U}) - \textbf{tr}(\b{U}^\top \b{S}_B \b{U}).
\end{aligned}
\end{equation*}
This problem is ill-posed because it increases the total covariance of embedded data to increase the term $\textbf{tr}(\b{U}^\top \b{S}_B \b{U})$. 
Hence, we add minimization of the total covariance as the regularization term:
\begin{equation*}
\begin{aligned}
\underset{\b{U}} {\text{minimize}} ~~~ &\textbf{tr}(\b{U}^\top \b{S}_W \b{U}) - \textbf{tr}(\b{U}^\top \b{S}_B \b{U}) \\
&+ \epsilon \textbf{tr}(\b{U}^\top \b{S}_T \b{U}),
\end{aligned}
\end{equation*}
where $\epsilon \in (0,1)$ is the regularization parameter and $\b{S}_T$ is the covariance of all points of the mini-batch in the one-to-last layer.
The total scatter can be written as the summation of $\b{S}_W$ and $\b{S}_B$; hence:
\begin{align*}
& \textbf{tr}(\b{U}^\top \b{S}_W \b{U}) - \textbf{tr}(\b{U}^\top \b{S}_B \b{U}) + \epsilon \textbf{tr}(\b{U}^\top \b{S}_T \b{U}) \\
&= \textbf{tr}\big(\b{U}^\top (\b{S}_W - \b{S}_W + \epsilon \b{S}_W + \epsilon \b{S}_B) \b{U}\big) \\
&= (2-\lambda) \textbf{tr}(\b{U}^\top \b{S}_W \b{U}) - \lambda \textbf{tr}(\b{U}^\top \b{S}_B \b{U}),
\end{align*}
where $\lambda := 1-\epsilon$. 
Inspired by Eq. (\ref{equation_triplet_loss}), we can have the following loss, named the Fisher discriminant triplet loss \cite{ghojogh2020fisher}:
\begin{equation}\label{equation_Fisher_triplet_loss}
\begin{aligned}
\underset{\theta} {\text{minimize}}~~~~ \, &\Big[ (2-\lambda) \textbf{tr}(\b{U}^\top \b{S}_W \b{U}) \\
&~~~~~~~~ - \lambda \textbf{tr}(\b{U}^\top \b{S}_B \b{U}) + m \Big]_+, 
\end{aligned}
\end{equation}
where $m>0$ is the margin. 
Backpropagating the error of this loss can update both $\b{U}$ and other layers of network. 
Note that the summation over the mini-batch is integrated in the computation of covariance matrices $\b{S}_W$ and $\b{S}_B$. 
Inspired by Eq. (\ref{equation_constrastive_loss}), we can also have the Fisher discriminant contrastive loss \cite{ghojogh2020fisher}:
\begin{equation}\label{equation_Fisher_constrastive_loss}
\begin{aligned}
\underset{\theta} {\text{minimize}} ~~~ & (2-\lambda) \textbf{tr}(\b{U}^\top \b{S}_W \b{U}) \\
&+ \big[\! - \lambda \textbf{tr}(\b{U}^\top \b{S}_B \b{U}) + m \big]_+.
\end{aligned}
\end{equation}
Note that the variable $y_i$ used in the contrastive loss (see Eq. (\ref{equation_y_i_in_contrastive_loss})) is already used in computation of the covariances $\b{S}_W$ and $\b{S}_B$. 
There exist some other loss functions inspired by Fisher discriminant analysis but they are not used for Siamese networks. Those methods will be introduced in Section \ref{section_deep_discriminant_analysis}.

\subsubsection{Deep Adversarial Metric Learning}

In deep adversarial metric learning \cite{duan2018deep}, negative points are generated in an adversarial learning \cite{goodfellow2014generative,ghojogh2021generative}. In this method, we have a generator $G(.)$ which tries to generate negative points fooling the metric learning. Using triplet inputs $\{(\b{x}_i^a, \b{x}_i^p, \b{x}_i^n)\}_{i=1}^b$, the loss function of generator is \cite{duan2018deep}:
\begin{equation}
\begin{aligned}
&\mathcal{L}_G := \sum_{i=1}^b \Big( \|G(\b{x}_i^a, \b{x}_i^p, \b{x}_i^n) - \b{x}_i^a\|_2^2 \\
&~~~~~~~~ + \lambda_1 \|G(\b{x}_i^a, \b{x}_i^p, \b{x}_i^n) - \b{x}_i^n\|_2^2 \\
&~~~~~~~~ + \lambda_2 \big[ d(\textbf{f}(\b{x}_i^a), \textbf{f}(G(\b{x}_i^a, \b{x}_i^p, \b{x}_i^n))) \\
&~~~~~~~~~~~~~~~~~~~~~~~~~~~~~~~~~~~ - d(\textbf{f}(\b{x}_i^a), \textbf{f}(\b{x}_i^p)) + m \big]_+ \Big),
\end{aligned}
\end{equation}
where $\lambda_1, \lambda_2>0$ are the regularization parameters. 
This loss makes the generated negative point close to the real negative point (to be negative) and the anchor point (for fooling metric learning adversarially). The Hinge loss makes the generated negative point different from the anchor and positive points so it also acts like a real negative. 
If $\mathcal{L}_M$ denotes any loss function for Siamese network, such as the triplet loss, the total loss function in deep adversarial metric learning is minimizing $\mathcal{L}_G + \lambda_3 \mathcal{L}_M$ where $\lambda_3>0$ is the regularization parameter \cite{duan2018deep}. 
It is noteworthy that there exists another adversarial metric learning which is not for Siamese networks but for cross-modal data \cite{xu2019deep}. 


\subsubsection{Triplet Mining}\label{section_triplet_mining}

In every mini-batch containing data points from $c$ classes, we can select and use triplets of data points in different ways. For example, we can use all similar and dissimilar points for every anchor point as positive and negative points, respectively. Another approach is to only use some of the similar and dissimilar points within the mini-batch. These approaches for selecting and using triplets are called triplet mining \cite{sikaroudi2020offline}. 
In the following, we review some of the most important triplet mining methods. 
We use triplet mining methods for the triplet loss, i.e., Eq. (\ref{equation_triplet_loss}). 
Suppose $b$ is the mini-batch size, $c(\b{x}_i)$ is the class index of $\b{x}_i$, $\mathcal{X}_j$ denotes the points of the $j$-th class in the mini-batch, and $\mathcal{X}$ denotes the data points in the mini-batch. 

\hfill\break
\textbf{-- Batch-all:}
Batch-all triplet mining \cite{ding2015deep} considers every point in the mini-batch as an anchor point. 
All points in the mini-batch which are in the same class the anchor point are used as positive points. 
All points in the mini-batch which are in a different class from the class of anchor point are used as negative points:
\begin{equation}
\begin{aligned}
&\underset{\theta} {\text{minimize}}\,\,\, \sum_{i=1}^b \sum_{\b{x}_j \in \mathcal{X}_{c(\b{x}_i)}} \sum_{\b{x}_k \in \mathcal{X} \setminus \mathcal{X}_{c(\b{x}_i)}} \Big[ d\big(\textbf{f}(\b{x}_i), \textbf{f}(\b{x}_j)\big) \\
&~~~~~~~~~~~~~~~~~~~~~~~~~~~~~~~~~~~~~~~~~ - d\big(\textbf{f}(\b{x}_i), \textbf{f}(\b{x}_k)\big) + m \Big]_+.
\end{aligned}
\end{equation}
Batch-all mining makes use of all data points in the mini-batch to utilize all available information. 

\hfill\break
\textbf{-- Batch-hard:}
Batch-hard triplet mining \cite{hermans2017defense} considers every point in the mini-batch as an anchor point. 
The hardest positive, which is the farthest point from the anchor point in the same class, is used as the positive point. 
The hardest negative, which is the closest point to the anchor point from another class, is used as the negative point:
\begin{equation}
\begin{aligned}
&\underset{\theta} {\text{minimize}}\,\,\, \sum_{i=1}^b \Big[ \max_{\b{x}_j \in \mathcal{X}_{c(\b{x}_i)}} d\big(\textbf{f}(\b{x}_i), \textbf{f}(\b{x}_j)\big) \\
&~~~~~~~~~~~~~~~~~~~~~~~~~~ - \min_{\b{x}_k \in \mathcal{X} \setminus \mathcal{X}_{c(\b{x}_i)}} d\big(\textbf{f}(\b{x}_i), \textbf{f}(\b{x}_k)\big) + m \Big]_+.
\end{aligned}
\end{equation}
Bath-hard mining uses hardest points so that the network learns the hardest cases. By learning the hardest cases, other cases are expected to be learned properly. Learning the hardest cases can also be justified by the opposition-based learning \cite{tizhoosh2005opposition}.
Batch-hard mining has been used in many applications such as person re-identification \cite{wang2019improved}. 

\hfill\break
\textbf{-- Batch-semi-hard:}
Batch-semi-hard triplet mining \cite{schroff2015facenet} considers every point in the mini-batch as an anchor point. 
All points in the mini-batch which are in the same class the anchor point are used as positive points. 
The hardest negative (closest to the anchor point from another class), which is farther than the positive point, is used as the negative point:
\begin{equation}
\begin{aligned}
&\underset{\theta} {\text{minimize}}\,\,\, \sum_{i=1}^b \sum_{\b{x}_j \in \mathcal{X}_{c(\b{x}_i)}} \Big[ d\big(\textbf{f}(\b{x}_i), \textbf{f}(\b{x}_j)\big) \\
&~~~~~~~~~~~~~ - \min_{\b{x}_k \in \mathcal{X} \setminus \mathcal{X}_{c(\b{x}_i)}} \big\{d\big(\textbf{f}(\b{x}_i), \textbf{f}(\b{x}_k)\big)\, |\, \\
&~~~~~~~~~~~~~~ d\big(\textbf{f}(\b{x}_i), \textbf{f}(\b{x}_k)\big) > d\big(\textbf{f}(\b{x}_i), \textbf{f}(\b{x}_j)\big)\big\} + m \Big]_+.
\end{aligned}
\end{equation}

\textbf{-- Easy-positive:}
Easy-positive triplet mining \cite{xuan2020improved} considers every point in the mini-batch as an anchor point. 
The easiest positive (closest to the anchor point from the same class) is used as the positive point.
All points in the mini-batch which are in a different class from the class of anchor point are used as negative points:
\begin{equation}
\begin{aligned}
&\underset{\theta} {\text{minimize}}\,\,\, \sum_{i=1}^b \sum_{\b{x}_k \in \mathcal{X} \setminus \mathcal{X}_{c(\b{x}_i)}} \Big[ \min_{\b{x}_j \in \mathcal{X}_{c(\b{x}_i)}} d\big(\textbf{f}(\b{x}_i), \textbf{f}(\b{x}_j)\big) \\
&~~~~~~~~~~~~~~~~~~~~~~~~~~~~~~~~~ - d\big(\textbf{f}(\b{x}_i), \textbf{f}(\b{x}_k)\big) + m \Big]_+.
\end{aligned}
\end{equation}
We can use this triplet mining approach in NCA loss function such as in Eq. (\ref{equation_softmax_triplet_loss_innerproduct}). For example, we can have \cite{xuan2020improved}:
\begin{equation}
\begin{aligned}
&\underset{\theta} {\text{minimize}}\,\,\, \sum_{i=1}^b \bigg( \min_{\b{x}_j \in \mathcal{X}_{c(\b{x}_i)}} \exp\big(\textbf{f}(\b{x}_i)^\top \textbf{f}(\b{x}_j)\big) \\
&~~~~~~~ \times \Big( \min_{\b{x}_j \in \mathcal{X}_{c(\b{x}_i)}} \exp\big(\textbf{f}(\b{x}_i)^\top \textbf{f}(\b{x}_j)\big) \\
&~~~~~~~~~~~~ + \sum_{\b{x}_k \in \mathcal{X} \setminus \mathcal{X}_{c(\b{x}_i)}} \exp\big(\textbf{f}(\b{x}_i)^\top \textbf{f}(\b{x}_k)\big) \Big)^{-1} \bigg),
\end{aligned}
\end{equation}
where the embeddings for all points of the mini-batch are normalized to have length one. 


\hfill\break
\textbf{-- Lifted embedding loss:}
The lifted embedding loss \cite{oh2016deep} is related to the anchor-positive distance and the smallest (hardest) anchor-negative distance:
\begin{equation}
\begin{aligned}
\underset{\theta} {\text{minimize}} ~~~ &\sum_{i=1}^b \sum_{\b{x}_j \in \mathcal{X}_{c(\b{x}_i)}} \Big( \Big[
d(\textbf{f}(\b{x}_i), \textbf{f}(\b{x}_j)) \\
& + \max\Big( \max_{\b{x}_k \in \mathcal{X} \setminus \mathcal{X}_{c(\b{x}_i)}} \big\{m - d(\textbf{f}(\b{x}_i), \textbf{f}(\b{x}_k))\big\},  \\
& \max_{\b{x}_l \in \mathcal{X} \setminus \mathcal{X}_{c(\b{x}_j)}} \big\{m - d(\textbf{f}(\b{x}_j), \textbf{f}(\b{x}_l))\big\} \Big)
\Big]_+ \Big)^2, 
\end{aligned}
\end{equation}
This loss is using triplet mining because of using extreme distances. 
Alternatively, another version of this loss function uses logarithm and exponential operators \cite{oh2016deep}:
\begin{equation}
\begin{aligned}
\underset{\theta} {\text{minimize}}  &\sum_{i=1}^b \sum_{\b{x}_j \in \mathcal{X}_{c(\b{x}_i)}} \Big( \Big[
d(\textbf{f}(\b{x}_i), \textbf{f}(\b{x}_j)) \\
& + \log\Big( \sum_{\b{x}_k \in \mathcal{X} \setminus \mathcal{X}_{c(\b{x}_i)}} \exp\big(m - d(\textbf{f}(\b{x}_i), \textbf{f}(\b{x}_k))\big),  \\
& \sum_{\b{x}_l \in \mathcal{X} \setminus \mathcal{X}_{c(\b{x}_j)}} \exp\big(m - d(\textbf{f}(\b{x}_j), \textbf{f}(\b{x}_l))\big) \Big)
\Big]_+ \Big)^2.
\end{aligned}
\end{equation}

\hfill\break
\textbf{-- Hard mining center-triplet loss:}
Let the mini-batch contain data points from $c$ classes. 
Hard mining center–triplet loss \cite{lv2019novel} considers the mean of every class as an anchor point. The hardest (farthest) positive point and the hardest (closest) negative point are used in this loss as \cite{lv2019novel}:
\begin{equation}
\begin{aligned}
&\underset{\theta} {\text{minimize}}\,\,\, \sum_{l=1}^c \Big[ \max_{\b{x}_j \in \mathcal{X}_{c(\bar{\b{x}}^l)}} d\big(\textbf{f}(\bar{\b{x}}^l), \textbf{f}(\b{x}_j)\big) \\
&~~~~~~~~~~~~~~~~~~~~~~~~~~ - \min_{\b{x}_k \in \mathcal{X} \setminus \mathcal{X}_{c(\bar{\b{x}}^l)}} d\big(\textbf{f}(\bar{\b{x}}^l), \textbf{f}(\b{x}_k)\big) + m \Big]_+.
\end{aligned}
\end{equation}
where $\bar{\b{x}}^l$ denotes the mean of the $l$-th class. 

\hfill\break
\textbf{-- Triplet loss with cross-batch memory:}
A version of triplet loss can be \cite{wang2020cross}:
\begin{equation}
\begin{aligned}
&\underset{\theta} {\text{minimize}}\,\,\, \sum_{i=1}^b \bigg( -\sum_{\b{x}_j \in \mathcal{X}_{c(\b{x}_i)}} \textbf{f}(\b{x}_i)^\top \textbf{f}(\b{x}_j) \\
&~~~~~~~~~~~~~~~~~~~~~~ + \sum_{\b{x}_k \in \mathcal{X} \setminus \mathcal{X}_{c(\b{x}_i)}} \textbf{f}(\b{x}_i)^\top \textbf{f}(\b{x}_k) \bigg). 
\end{aligned}
\end{equation}
This triplet loss can use a cross-batch memory where we accumulate a few latest mini-batches.
Every coming mini-batch updates the memory. 
Let the capacity of the memory be $w$ points and the mini-batch size be $b$.
Let $\widetilde{\b{x}}_i$ denote the $i$-th data point in the memory. 
The triplet loss with cross-batch memory is defined as \cite{wang2020cross}:
\begin{equation}
\begin{aligned}
&\underset{\theta} {\text{minimize}}\,\,\, \sum_{i=1}^b \bigg( -\sum_{\widetilde{\b{x}}_j \in \mathcal{X}_{c(\b{x}_i)}} \textbf{f}(\b{x}_i)^\top \textbf{f}(\widetilde{\b{x}}_j) \\
&~~~~~~~~~~~~~~~~~~~~~~ + \sum_{\widetilde{\b{x}}_k \in \mathcal{X} \setminus \mathcal{X}_{c(\b{x}_i)}} \textbf{f}(\b{x}_i)^\top \textbf{f}(\widetilde{\b{x}}_k) \bigg),
\end{aligned}
\end{equation}
which takes the positive and negative points from the memory rather than from the coming mini-batch. 

\subsubsection{Triplet Sampling}

Rather than using the extreme (hardest or easiest) positive and negative points \cite{sikaroudi2020offline}, we can sample positive and negative points from the points in the mini-batch or from some distributions. There are several approaches for the positive and negative points to be sampled \cite{ghojogh2021data}:
\begin{itemize}\setlength\itemsep{0.1em}
\item Sampled by extreme distances of points,
\item Sampled randomly from classes,
\item Sampled by distribution but from existing points,
\item Sampled stochastically from distributions of classes.
\end{itemize}
These approaches are used for triplet sampling. 
The first approach was introduced in Section \ref{section_triplet_mining}. 
The first, second, and third approaches sample the positive and negative points from the set of points in the mini-batch. This type of sampling is called survey sampling \cite{ghojogh2020sampling}. The third and fourth approaches sample points from distributions stochastically.
In the following, we introduce some of the triplet sampling methods. 

\hfill\break
\textbf{-- Distance weighted sampling:}
Distance weighted sampling \cite{wu2017sampling} is a method in the third approach, i.e., sampling by distribution but from existing points.
The distribution of the pairwise distances is proportional to \cite{wu2017sampling}:
\begin{align*}
\mathbb{P}\big(d(\textbf{f}(\b{x}_i), \textbf{f}(\b{x}_j))\big) & \sim \big( d(\textbf{f}(\b{x}_i), \textbf{f}(\b{x}_j))\big)^{p-2} \times \nonumber \\
&\Big(1 - 0.25 \big( d(\textbf{f}(\b{x}_i), \textbf{f}(\b{x}_j))\big)^2\Big)^{(b-3)/2},
\end{align*}
where $b$ is the number of points in the mini-batch and $p$ is the dimensionality of embedding space (i.e., the number of neurons in the last layer of the Siamese network). 
In every mini-batch, we consider every point once as an anchor point. For an anchor point, we consider all points of the mini-batch which are in a different class as candidates for the negative point. 
We sample a negative point, denoted by $\b{x}_*^n$ from these candidates \cite{wu2017sampling}:
\begin{align*}
\b{x}_*^n \sim \min\Big(\lambda, \mathbb{P}^{-1}\big(d(\textbf{f}(\b{x}_i), \textbf{f}(\b{x}_j))\big)\Big), \quad \forall j \neq i,
\end{align*}
where $\lambda>0$ is a hyperparameter to ensure that all candidates have a chance to be chosen. 
This sampling is performed for every mini-batch. 
The loss function in distance weighted sampling is \cite{wu2017sampling}:
\begin{equation}
\begin{aligned}
&\underset{\theta} {\text{minimize}}\,\,\, \sum_{i=1}^b \sum_{\b{x}_j \in \mathcal{X}_{c(\b{x}_i)}} \Big[ d\big(\textbf{f}(\b{x}_i), \textbf{f}(\b{x}_j)\big) \\
&~~~~~~~~~~~~~~~~~~~~~~~~~~~~~~~~~ - d\big(\textbf{f}(\b{x}_i), \textbf{f}(\b{x}_*^n)\big) + m \Big]_+.
\end{aligned}
\end{equation}

\textbf{-- Sampling by Bayesian updating theorem:}
We can sample triplets from distributions of classes which is the forth approach of sampling, mentioned above. One method for this sampling is using the Bayesian updating theorem \cite{sikaroudi2021batch} which is updating the posterior by the Bayes' rule from some new data. 
In this method, we assume $p$-dimensional Gaussian distribution for every class in the embedding space where $p$ is the dimensionality of embedding space. We accumulate the embedded points for every class when the new mini-batches are introduced to the network. The distributions of classes are updated based on both the existing points available so far and the new-coming data points. 
It can be shown that the posterior of mean and covariance of a Gaussian distribution is a normal inverse Wishart distribution \cite{murphy2007conjugate}. 
The mean and covariance of a Gaussian distribution have a generalized Student-t distribution and inverse Wishart distribution, respectively \cite{murphy2007conjugate}. 
Let the so-far available data have sample size $n_0$, mean $\b{\mu}^0$, and covariance $\b{\Sigma}^0$. Also, let the newly coming data have sample size $n'$, mean $\b{\mu}'$, and covariance $\b{\Sigma}'$. 
We update the mean and covariance by expectation of these distributions \cite{sikaroudi2021batch}:
\begin{align*}
& \b{\mu}^{0} \leftarrow \mathbb{E}(\b{\mu}\, |\, \b{x}^{0}) = \frac{n' \b{\mu}' + n_0 \b{\mu}^{0}}{n' + n_0}, \\
& \b{\Sigma}^{0} \leftarrow \mathbb{E}(\b{\Sigma}\, |\, \b{x}^{0}) = \frac{\b{\Upsilon}^{-1}}{n'\!+\! n_0\! -\! p \!-\! 1},~~~ \forall\, n'\! +\! n_0\! >\! p\! +\! 1,
\end{align*}
where:
\begin{align*}
\mathbb{R}^{d \times d} \ni \b{\Upsilon} := &\, n' \b{\Sigma}' + n_0 \b{\Sigma}^0 \\
&+ \frac{n'_1 n_0}{n'_1 + n_0} (\b{\mu}^0 - \b{\mu}') (\b{\mu}^0 - \b{\mu}')^\top.
\end{align*}
The updated mean and covariance are used for Gaussian distributions of the classes. Then, we sample triplets from the distributions of classes rather than from the points of mini-batch. 
We consider every point of the new mini-batch as an anchor point and sample a positive point from the distribution of the same class. We sample $c-1$ negative points from the distributions of $c-1$ other classes. 
If this triplet sampling procedure is used with triplet and contrastive loss functions, the approach is named Bayesian Updating with Triplet loss (BUT) and Bayesian Updating with NCA loss (BUNCA) \cite{sikaroudi2021batch}.

\hfill\break
\textbf{-- Hard negative sampling:}
Let the anchor, positive, and negative points be denoted by $\b{x}^a$, $\b{x}^p$, and $\b{x}^n$, respectively. 
Consider the following distributions for the negative and positive points \cite{robinson2020contrastive}:
\begin{align*}
& \mathbb{P}(\b{x}^n) \propto \alpha \mathbb{P}_n(\b{x}^n) + (1 - \alpha) \mathbb{P}_p(\b{x}^n), \\
& \mathbb{P}_n(\b{x}) \propto \exp\big(\beta \textbf{f}(\b{x}^a)^\top \textbf{f}(\b{x})\big)\, \mathbb{P}(\b{x} | c(\b{x}) \neq c(\b{x}^a)), \\
& \mathbb{P}_p(\b{x}) \propto \exp\big(\beta \textbf{f}(\b{x}^a)^\top \textbf{f}(\b{x})\big)\, \mathbb{P}(\b{x} | c(\b{x}) = c(\b{x}^a)),
\end{align*}
where $\alpha \in (0,1)$ is a hyper-parameter.
The loss function with hard negative sampling is \cite{robinson2020contrastive}:
\begin{equation}
\begin{aligned}
&\underset{\theta} {\text{minimize}} ~-\! \sum_{i=1}^{b} \mathbb{E}_{\b{x}^p \sim \mathbb{P}_p(\b{x})} \log \bigg( \exp\big(\textbf{f}(\b{x}_i^a)^\top \textbf{f}(\b{x}^p)\big) \\
&~~~~~~~~~~ \Big( \exp\big(\textbf{f}(\b{x}_i^a)^\top \textbf{f}(\b{x}^p)\big) \\
&~~~~~~~~~~ + \mathbb{E}_{\b{x}^n \sim \mathbb{P}(\b{x}^n)}\big[ \exp\big(\textbf{f}(\b{x}_i^a)^\top \textbf{f}(\b{x}^n)\big) 
\big] \Big)^{-1} \bigg),
\end{aligned}
\end{equation}
where positive and negative points are sampled from positive and negative distributions defined above. The expectations can be estimated using the Monte Carlo approximation \cite{ghojogh2020sampling}. This time of triplet sampling is a method in the fourth type of triplet sampling, i.e., sampling stochastically from distributions of classes. 

\subsection{Deep Discriminant Analysis Metric Learning}\label{section_deep_discriminant_analysis}

Deep discriminant analysis metric learning methods use the idea of Fisher discriminant analysis \cite{fisher1936use,ghojogh2019fisher} in deep learning, for learning an embedding space which separates classes. 
Some of these methods are deep probabilistic discriminant analysis \cite{li2019discriminant}, discriminant analysis with virtual samples \cite{kim2021virtual}, Fisher Siamese losses \cite{ghojogh2020fisher}, and deep Fisher discriminant analysis \cite{diaz2017deep,diaz2019deep}.
The Fisher Siamese losses were already introduced in Section \ref{section_Fisher_Siamese_losses}.

\subsubsection{Deep Probabilistic Discriminant Analysis}

Deep probabilistic discriminant analysis \cite{li2019discriminant} minimizes the inverse Fisher criterion:
\begin{equation}
\begin{aligned}
&\underset{\theta} {\text{minimize}} ~~~ \frac{\mathbb{E}[\textbf{tr}(\text{cov}(\textbf{f}(\b{x})|y))]}{\textbf{tr}(\text{cov}(\mathbb{E}[\textbf{f}(\b{x})|y]))} = \frac{\sum_{i=1}^b \mathbb{E}[\text{var}(\textbf{f}(\b{x}_i)|y_i)]}{\sum_{i=1}^b \text{var}(\mathbb{E}[\textbf{f}(\b{x}_i)|y_i])} \\
&\overset{(a)}{=} \frac{\sum_{i=1}^b \mathbb{E}[\text{var}(\textbf{f}(\b{x}_i)|y_i)]}{\sum_{i=1}^b \big( \text{var}(\textbf{f}(\b{x}_i)) - \mathbb{E}[\text{var}(\textbf{f}(\b{x}_i)|y_i)] \big)} \\
&\overset{(b)}{=} \frac{\sum_{i=1}^b \sum_{l=1}^c \mathbb{P}(y=l) \text{var}(\textbf{f}(\b{x}_i)|y_i=l)}{\sum_{i=1}^b \big( \text{var}(\textbf{f}(\b{x}_i)) - \sum_{l=1}^c \mathbb{P}(y=l) \text{var}(\textbf{f}(\b{x}_i)|y_i=l) \big)},
\end{aligned}
\end{equation}
where $b$ is the mini-batch size, $c$ is the number of classes, $y_i$ is the class label of $\b{x}_i$, $\text{cov}(.)$ denotes covariance, $\text{var}(.)$ denotes variance, $\mathbb{P}(y=l)$ is the prior of the $l$-th class (estimated by the ratio of class population to the total number of points in the mini-batch), $(a)$ is because of the law of total variance, and $(b)$ is because of the definition of expectation. 
The numerator and denominator represent the intra-class and inter-class variances, respectively. 

\subsubsection{Discriminant Analysis with Virtual Samples}

In discriminant analysis metric learning with virtual samples \cite{kim2021virtual}, we consider any backbone network until the one-to-last layer of neural network and a last layer with linear activation function. Let the outputs of the one-to-last layer be denoted by $\{\textbf{f}'(\b{x}_i)\}_{i=1}^b$ and the weights of the last layer be $\b{U}$. We compute the intra-class scatter $\b{S}_W$ and inter-class scatter $\b{S}_B$ for the one-to-last layer's features $\{\textbf{f}'(\b{x}_i)\}_{i=1}^b$. If we see the last layer as a Fisher discriminant analysis model with projection matrix $\b{U}$, the solution is the eigenvalue problem \cite{ghojogh2019eigenvalue} for $\b{S}_W^{-1} \b{S}_B$. Let $\lambda_j$ denote the $j$-th eigenvalue of this problem. 

Assume $\mathcal{S}_b$ and $\mathcal{D}_b$ denote the similar and dissimilar points in the mini-batch where $|\mathcal{S}_b| = |\mathcal{D}_b| = q$. We define \cite{kim2021virtual}:
\begin{align*}
& \b{g}_p := [\exp(-\textbf{f}'(\b{x}_i)^\top \textbf{f}'(\b{x}_j))\, |\, (\b{x}_i, \b{x}_j) \in \mathcal{S}_b]^\top \in \mathbb{R}^q, \\
& \b{g}_n := [\exp(-\textbf{f}'(\b{x}_i)^\top \textbf{f}'(\b{x}_j))\, |\, (\b{x}_i, \b{x}_j) \in \mathcal{D}_b]^\top \in \mathbb{R}^q, \\
& s_{ctr} := \frac{1}{2q} \sum_{i=1}^q \big( \b{g}_p(i) + \b{g}_n(i) \big),
\end{align*}
where $\b{g}(i)$ is the $i$-th element of $\b{g}$.
We sample $q$ numbers, namely virtual samples, from the uniform distribution $U(s_{ctr} - \epsilon \bar{\lambda}, s_{ctr} + \epsilon \bar{\lambda})$ where $\epsilon$ is a small positive number and $\bar{\lambda}$ is the mean of eigenvalues $\lambda_j$'s. The $q$ virtual samples are put in a vector $\b{r} \in \mathbb{R}^q$. 

The loss function for discriminant analysis with virtual samples is \cite{kim2021virtual}:
\begin{equation}
\begin{aligned}
&\underset{\theta, \b{U}} {\text{minimize}} ~~~ \frac{1}{q} \sum_{i=1}^q \Big[ \frac{1}{q}\, \b{g}_p(i)\, \|\b{r}\|_1 - \frac{1}{q}\, \b{g}_n(i)\, \|\b{r}\|_1 + m \Big]_+ \\
&~~~~~~~~~~~~~~~~~~~~ - 10^{-5} \frac{\textbf{tr}(\b{U}^\top \b{S}_B \b{U})}{\textbf{tr}(\b{U}^\top \b{S}_W \b{U})},
\end{aligned}
\end{equation}
where $\|.\|_1$ is the $\ell_1$ norm, $[.]_+ := \max(.,0)$, $m>0$ is the margin, and the second term is maximization of the Fisher criterion. 

\subsubsection{Deep Fisher Discriminant Analysis}

It is shown in \cite{hart2000pattern} that the solution to the following least squares problem is equivalent to the solution of Fisher discriminant analysis:
\begin{equation}\label{equation_FDA_least_squares}
\begin{aligned}
&\underset{\b{w}_0 \in \mathbb{R}^{c}, \b{W} \in \mathbb{R}^{d \times c}} {\text{minimize}} ~~~ \frac{1}{2} \|\b{Y} - \b{1}_{n \times 1} \b{w}_0^\top - \b{X} \b{W}\|_F^2,
\end{aligned}
\end{equation}
where $\|.\|_F$ is the Frobenius norm, $\b{X} \in \mathbb{R}^{n \times d}$ is the row-wise stack of data points, $\b{Y} := \b{H} \b{E} \b{\Pi}^{-(1/2)} \in \mathbb{R}^{n \times c}$ where $\b{H} := \b{I} - (1/n) \b{1} \b{1}^\top \in \mathbb{R}^{n \times n}$ is the centering matrix, $\b{E} \in \{0,1\}^{n \times c}$ is the one-hot-encoded labels stacked row-wise, $\b{\Pi} \in \mathbb{R}^{c \times c}$ is the diagonal matrix whose $(l,l)$-th element is the cardinality of the $l$-th class. 

Deep Fisher discriminant analysis \cite{diaz2017deep,diaz2019deep} implements Eq. (\ref{equation_FDA_least_squares}) by a nonlinear neural network with loss function:
\begin{equation}
\begin{aligned}
&\underset{\theta}{\text{minimize}} ~~~ \frac{1}{2} \|\b{Y} - \textbf{f}(\b{X}; \theta)\|_F^2, 
\end{aligned}
\end{equation}
where $\theta$ is the weights of network, $\b{X} \in \mathbb{R}^{n \times d}$ denotes the row-wise stack of points in the mini-batch of size $b$, $\b{Y} := \b{H} \b{E} \b{\Pi}^{-(1/2)} \in \mathbb{R}^{b \times c}$ is computed in every mini-batch, and $\textbf{f}(.) \in \mathbb{R}^{b \times c}$ is the row-wise stack of output embeddings of the network. After training, the output $\textbf{f}(\b{x})$ is the embedding for the input point $\b{x}$. 

\subsection{Multi-Modal Deep Metric Learning}

Data has several modals where a separate set of features is available for every modality of data. In other words, we can have several features for every data point. Note that the dimensionality of features may differ. 
Multi-modal deep metric learning \cite{roostaiyan2017multi} addresses this problem in metric learning. 
Let $m$ denote the number of modalities. 
Consider $m$ stacked autoencoders each of which is for one of the modalities. The $l$-th autoencoder gets the $l$-th modality of the $i$-th data point, denoted by $\b{x}_i^l$, and reconstructs it as output, denoted by $\widehat{\b{x}}_i^l$. 
The embedding layer, or the layer between encoder and decoder, is shared between all $m$ autoencoders. We denote the output of this shared embedding layer by $\textbf{f}(\b{x}_i)$. 
The loss function for training the $m$ stacked autoencoders with the shared embedding layer can be \cite{roostaiyan2017multi}:
\begin{equation}
\begin{aligned}
&\underset{\theta} {\text{minimize}} ~~~ \sum_{i=1}^b \sum_{l=1}^m \|\b{x}_i^l - \widehat{\b{x}}_i^l\|_2^2 \\
&+ \lambda_1 \sum_{i=1}^b \sum_{\b{x}_j \in \mathcal{X}_{c(\b{x}_i)}} \big[d(\textbf{f}(\b{x}_i), \textbf{f}(\b{x}_j)) - m_1\big]_+ \\
&+ \lambda_2 \sum_{i=1}^b \sum_{\b{x}_j \in \mathcal{X} \setminus \mathcal{X}_{c(\b{x}_i)}} \big[-d(\textbf{f}(\b{x}_i), \textbf{f}(\b{x}_j)) + m_2\big]_+,
\end{aligned}
\end{equation}
where $\lambda_1, \lambda_2>0$ are the regularization parameters and $m_1, m_2>0$ are the margins. 
The first term is the reconstruction loss and the second and third terms are for metric learning which collapses each class to a margin $m_1$ and discriminates classes by a margin $m_2$. 
This loss function is optimized in a stacked autoencoder setup \cite{hinton2006reducing,wang2014effective}. Then, it is fine-tuned by backpropagation \cite{ghojogh2021restricted}. 
After training, the embedding layer can be used for embedding data points. 
Note that another there exists another multi-modal deep metric learning, which is \cite{xu2019deep}.


\subsection{Geometric Metric Learning by Neural Network}

There exist some works, such as \cite{huang2017riemannian}, \cite{hauser2017principles}, and \cite{hajiabadi2019layered}, which have implemented neural networks on the Riemannian manifolds.
Layered geometric learning \cite{hajiabadi2019layered} implements Geometric Mean Metric Learning (GMML) \cite{zadeh2016geometric} (recall Section \ref{section_geometric_mean_metric_learning}) in a neural network framework. 
In this method, every layer of network is a metric layer which projects the output of its previous layer onto the subspace of its own metric (see Proposition \ref{proposition_metric_learning_projection} and Proposition \ref{equation_metric_learning_projection}). 

For the $l$-th layer of network, we denote the weight matrix (i.e., the projection matrix of metric) and the output of layer for the $i$-th data point by $\b{U}_l$ and $\b{x}_{i,l}$, respectively. Hence, the metric in the $l$-th layer models $\|\b{x}_{i,l} - \b{x}_{j,l}\|_{\b{U}_l\b{U}_l^\top}$. 
Consider the dataset of $n$ points $\b{X} \in \mathbb{R}^{d \times n}$. We denote the output of the $l$-th layer by $\b{X}_l \in \mathbb{R}^{d \times n}$. 
The projection of a layer onto its metric subspace is $\b{X}_l = \b{U}_l^\top \b{X}_{l-1}$. 

Every layer solves the optimization problem of GMML \cite{zadeh2016geometric}, i.e., Eq. (\ref{equation_GMML_optimization}). 
For this, we start from the first layer and proceed to the last layer by feed-propagation. The $l$-th layer computes $\b{\Sigma}_{\mathcal{S}}$ and $\b{\Sigma}_{\mathcal{D}}$ for $\b{X}_{l-1}$ by Eq. (\ref{equation_spectral_ML_first_method_Sigma_S}). Then, the solution of optimization (\ref{equation_GMML_optimization}) is computed which is the Eq. (\ref{equation_GMML_solution_1_1}), i.e., $\b{W}_l = \b{\Sigma}_{\mathcal{S}}^{-1} \sharp_{(1/2)} \b{\Sigma}_{\mathcal{D}} = \b{\Sigma}_{\mathcal{S}}^{(-1/2)} \big(\b{\Sigma}_{\mathcal{S}}^{(1/2)} \b{\Sigma}_{\mathcal{D}} \b{\Sigma}_{\mathcal{S}}^{(1/2)}\big)^{(1/2)} \b{\Sigma}_{\mathcal{S}}^{(-1/2)}$. 
Then, using Eq. (\ref{equation_W_U_UT}), we decompose the obtained $\b{W}_l$ to find $\b{U}_l$. Then, data points are projected onto the metric subspace as $\b{X}_l = \b{U}_l^\top \b{X}_{l-1}$. 

If we want the output of layers lie on the positive semi-definite manifold, the activation function of every layer can be projection onto the positive semi-definite cone \cite{ghojogh2021kkt}:
\begin{align*}
\b{X}_l := \b{V}\, \textbf{diag}(\max(\lambda_1, 0), \dots, \max(\lambda_d, 0))\, \b{V}^\top, 
\end{align*}
where $\b{V}$ and $\{\lambda_1, \dots, \lambda_d\}$ are the eigenvectors and eigenvalues of $\b{X}_l$, respectively. 
This activation function is called the eigenvalue rectification layer in \cite{huang2017riemannian}. 
Finally, it is noteworthy that there is another work, named backprojection \cite{ghojogh2020backprojection}, which has similar idea but in the Euclidean and Hilbert spaces and not in the Riemannian space.


\subsection{Few-shot Metric Learning}

Few-shot learning refers to learning from a few data points rather than from a large enough dataset. 
Few-shot learning is used for domain generalization to be able to use for unseen data in the test phase \cite{wang2020generalizing}. 
The training phase of few-shot learning is episodic where in every iteration or so-called episode of training, we have a support set and a query set. In other words, the training dataset is divided into mini-batches where every mini-batch contains a support set and a query set \cite{triantafillou2020meta}. 
Consider a training dataset with $c_\text{tr}$ classes and a test dataset with $c_\text{te}$ classes. As mentioned before, test and training datasets are usually disjoint in few-shot learning so it is useful for domain generalization. 
In every episode, also called the task or the mini-batch, we train using some (and not all) training classes by randomly sampling from classes.

The support set is $\mathcal{S}_s := \{(\b{x}_{s,i}, y_{s,i})\}_{i=1}^{|\mathcal{S}_s|}$ where $\b{x}$ and $y$ denote the data point and its label, respectively. 
The query set is $\mathcal{S}_q := \{(\b{x}_{q,i}, y_{q,i})\}_{i=1}^{|\mathcal{S}_q|}$. 
The training data of every episode (mini-batch) is the union of the support and query sets. 
At every episode, we randomly sample $c_s$ classes out of the total $c_\text{tr}$ classes of training dataset, where we usually have $c_s \ll c_\text{tr}$.
Then, we sample $k_s$ training data points from these $c_s$ selected classes. These $c_s \times k_s = |\mathcal{S}_s|$ form the support set. This few-shot setup is called $c_s$-way, $k_s$-shot in which the support set contains $c_s$ classes and $k_s$ points in every class. 
The number of classes and every class's points in the query set of every episode may or may not be the same as in the support set. 

In every episode of the training phase of few-shot learning, we update the network weights by back-propagating error using the support set. These updated weights are not finalized yet. We feed the query set to the network with the updated weights and back-propagate error using the query set. This second back-propagation with the query set updates the weights of network finally at the end of episode. In other words, the query set is used to evaluate how good the update by support set are. This learning procedure for few-shot learning is called meta-learning \cite{finn2017model}. 

There are several family of methods for few-shot learning, one of which is some deep metric learning methods. Various metric learning methods have been proposed for learning from few-shot data. For example, Siamese network, introduced in Section \ref{section_metric_learning_Siamese}, has been used for few-shot learning \cite{koch2015siamese,li2020revisiting}. In the following, we introduce two metric learning methods for few-shot learning. 

\subsubsection{Multi-scale Metric Learning}

Multi-scale metric learning \cite{jiang2020multi} learns the embedding space by learning multiple scales of middle features in the training process. 
It has several steps. 
In the first step, we use a pre-trained network with multiple output layers which produce several different scales of features for both the support and query sets. 
In the second step, within every scale of support set, we take average of the $k_s$ features in every class. This gives us $c_s$ features for every scale in the support set. This and the features of the query set are fed to the third step. 
In the third step, we feed every scale to a sub-network where larger scales are fed to sub-networks with more number of layers as they contain more information to process. These sub-networks are concatenated to give a scalar output for every data point with multiple scales of features. Hence, we obtain a scalar score for every data point in the support and query sets. Finally, a combination of a classification loss function, such as the cross-entropy loss (see Eq. (\ref{equation_cross_entropy_loss})), and triplet loss (see Eq. \ref{equation_triplet_loss}) is used in the support-query setup explained before.

\subsubsection{Metric Learning with Continuous Similarity Scores}

Another few-shot metric learning is \cite{xu2019zero} which takes pairs of data points as the input support and query sets. 
For the pair $(\b{x}_i, \b{x}_j)$, consider binary similarity score, $y_{ij}$, defined as:
\begin{align}\label{equation_y_ij_deep}
y_{ij} := 
\left\{
    \begin{array}{ll}
        1 & \mbox{if } (\b{x}_i, \b{x}_j) \in \mathcal{S} \\
        0 & \mbox{if } (\b{x}_i, \b{x}_j) \in \mathcal{D}.
    \end{array}
\right.
\end{align}
where $\mathcal{S}$ and $\mathcal{D}$ denote the sets of similar and dissimilar points, respectively. 
We can define continuous similarity score, $y'_{ij}$, as \cite{xu2019zero}:
\begin{align}\label{equation_y_ij_deep_continuous}
y'_{ij} := 
\left\{
    \begin{array}{ll}
        (\beta-1) d(\b{x}_i, \b{x}_j) + 1 & \mbox{if } (\b{x}_i, \b{x}_j) \in \mathcal{S} \\
        -\alpha d(\b{x}_i, \b{x}_j) + \alpha & \mbox{if } (\b{x}_i, \b{x}_j) \in \mathcal{D},
    \end{array}
\right.
\end{align}
where $0<\alpha<\beta<1$ and $d(\b{x}_i, \b{x}_j)$ is the normalized squared Euclidean distance (we normalize distances within every mini-batch). The ranges of these continuous similarities are:
\begin{align*}
y'_{ij} \in
\left\{
    \begin{array}{ll}
        \, [\beta, 1] & \mbox{if } (\b{x}_i, \b{x}_j) \in \mathcal{S} \\
        \, [0, \alpha] & \mbox{if } (\b{x}_i, \b{x}_j) \in \mathcal{D}.
    \end{array}
\right.
\end{align*}
In every episode (mini-batch), the pairs are fed to a network with several feature vector outputs. For every pair $(\b{x}_i, \b{x}_j)$, these feature vectors are fed to another network which outputs a scalar similarity score $s_{ij}$. 
The loss function of metric learning in this method is \cite{xu2019zero}:
\begin{equation}
\begin{aligned}
& \underset{\theta}{\text{maximize}}
& & \sum_{(\b{x}_i, \b{x}_j) \in \mathcal{X}} (1 + \lambda) (s_{ij} - y'_{ij})^2, \\ 
& \text{subject to}
& & \beta \leq s_{ij}, y'_{ij} \leq 1 \quad \text{ if } \quad y_{ij} = 1, \\
& & & 0 \leq s_{ij}, y'_{ij} \leq \alpha \quad \text{ if } \quad y_{ij} = 0,
\end{aligned}
\end{equation}
where $\lambda>0$ is the regularization parameter and $\mathcal{X}$ is the mini-batch of the support or query set depending on whether it is the phase of support or query. 

\section{Conclusion}\label{section_conclusion}

This was a tutorial and survey on spectral, probabilistic, and deep metric learning. We started with defining distance metric. In spectral methods, we covered methods using scatters of data, methods using Hinge loss, locally linear metric adaptation, kernel methods, geometric methods, and adversarial metric learning. In probabilistic category, we covered collapsing classes, neighborhood component analysis, Bayesian metric learning, information theoretic methods, and empirical risk minimization approaches. In deep learning methods, we explain reconstruction autoencoders, supervised loss functions, Siamese networks, deep discriminant analysis methods, multi-modal learning, geometric deep metric learning, and few-shot metric learning. 




\bibliography{References}
\bibliographystyle{icml2016}

\onecolumn
\tableofcontents


\twocolumn

\end{document}